\newcommand{\mcl}{\mathcal}
\newcommand{\mbb}{\mathbb}
\newcommand{\lp}{\left(}
\newcommand{\rp}{\right)}
\newcommand{\e}{\varepsilon}
\newcommand{\R}{\mathbb{R}}
\newcommand{\Ker}{\mathcal{K}}
\newcommand{\Pmat}{\mathrm{P}}
\newcommand{\km}{\textcolor{blue}}
\newcommand{\todo}{\textcolor{red}}
\definecolor{darkgreen}{rgb}{0.1,0.6,0.1}
\DeclareMathOperator*{\argmin}{arg\!min}
\DeclareMathOperator*{\argmax}{arg\!max}
\begin{document}

\title{Dirichlet Active Learning}

\author{\name Kevin Miller \email ksmiller@utexas.edu \\
       \addr Oden Institute of Computational Engineering \& Sciences\\
       University of Texas\\
       Austin, TX 78712, USA
       \AND
       \name Ryan Murray \email rwmurray@ncsu.edu \\
       \addr Department of Mathematics\\
       North Carolina State University\\
       Raleigh, NC 27607, USA}

\editor{My editor}

\maketitle

\begin{abstract}
    This work introduces Dirichlet Active Learning (DiAL), a Bayesian-inspired approach to the design of active learning algorithms. Our framework models feature-conditional class probabilities as a Dirichlet random field and lends observational strength between similar features in order to calibrate the random field. This random field can then be utilized in learning tasks: in particular, we can use current estimates of mean and variance to conduct classification and active learning in the context where labeled data is scarce. We demonstrate the applicability of this model to low-label rate graph learning by constructing ``propagation operators'' based upon the graph Laplacian, and offer computational studies demonstrating the method's competitiveness with the state of the art. 
    Finally, we provide rigorous guarantees regarding the ability of this approach to ensure both exploration and exploitation, expressed respectively in terms of cluster exploration and increased attention to decision boundaries.
\end{abstract}

\begin{keywords}
  active learning, graph-based learning, semi-supervised classification, learning theory, uncertainty quantification
\end{keywords}

\section{Introduction} \label{sec:intro}

The advent of big data applications in machine learning has necessitated the design of efficient methods to label data for downstream learning tasks such as classification.  
Massive computing capabilities can produce ever-increasing amounts of data, yet obtaining meaningful labels for training accurate machine learning classifiers is often time-intensive and expensive. Hence, while \textit{labeled data} (i.e., data for which the practitioner has access to observed labels) can be difficult to obtain, \textit{unlabeled data} (i.e., data for which the practitioner does \textit{not} currently have access to such labels) is ubiquitous in many practical applications. While supervised machine learning algorithms rely on the ability to acquire an abundance of labeled data, semi-supervised learning methods leverage both unlabeled and labeled data to achieve accurate classification with significantly fewer labeled data. Simultaneously, the choice of training points can significantly affect classifier performance, especially due to the limited size of the training set of labeled data in the case of semi-supervised learning. 

Active learning seeks to judiciously select a limited number of currently unlabeled data points that will inform the classification task \citep{settles_active_2012}. These points are then labeled by an expert, or human in the loop, with the aim of improving the performance of an underlying classifier.
While there are various paradigms for active learning \citep{settles_active_2012}, we focus on {\it pool-based} active learning wherein an unlabeled pool of data is available at each iteration of the active learning process from which query points may be selected. This paradigm is a natural fit for applying active learning in conjunction with semi-supervised learning since the underlying semi-supervised learner also uses the unlabeled pool. These query points are selected by optimizing an {\it acquisition function} over the discrete set of points available in the unlabeled data pool; an acquisition function is a user-defined quantity that aims to measure the ``utility'' of expending the effort to label a currently unlabeled input. Figure \ref{fig:al-flowchart} illustrates the active learning process that iterates between (1) using currently-labeled data to infer a classifier on the unlabeled data, (2) selecting the next query point from the unlabeled data via the acquisition function, and (3) labeling the query point and updating the labeled data. In practical applications, the labeling ``oracle'' is a domain expert that plays the role of a human in the loop. The number of points that are labeled at each iteration depends upon the application, but in this work, we focus on the setting where points are labeled one at a time (i.e., sequential active learning).


\begin{figure}
    \centering
    \includegraphics[width=0.8\textwidth]{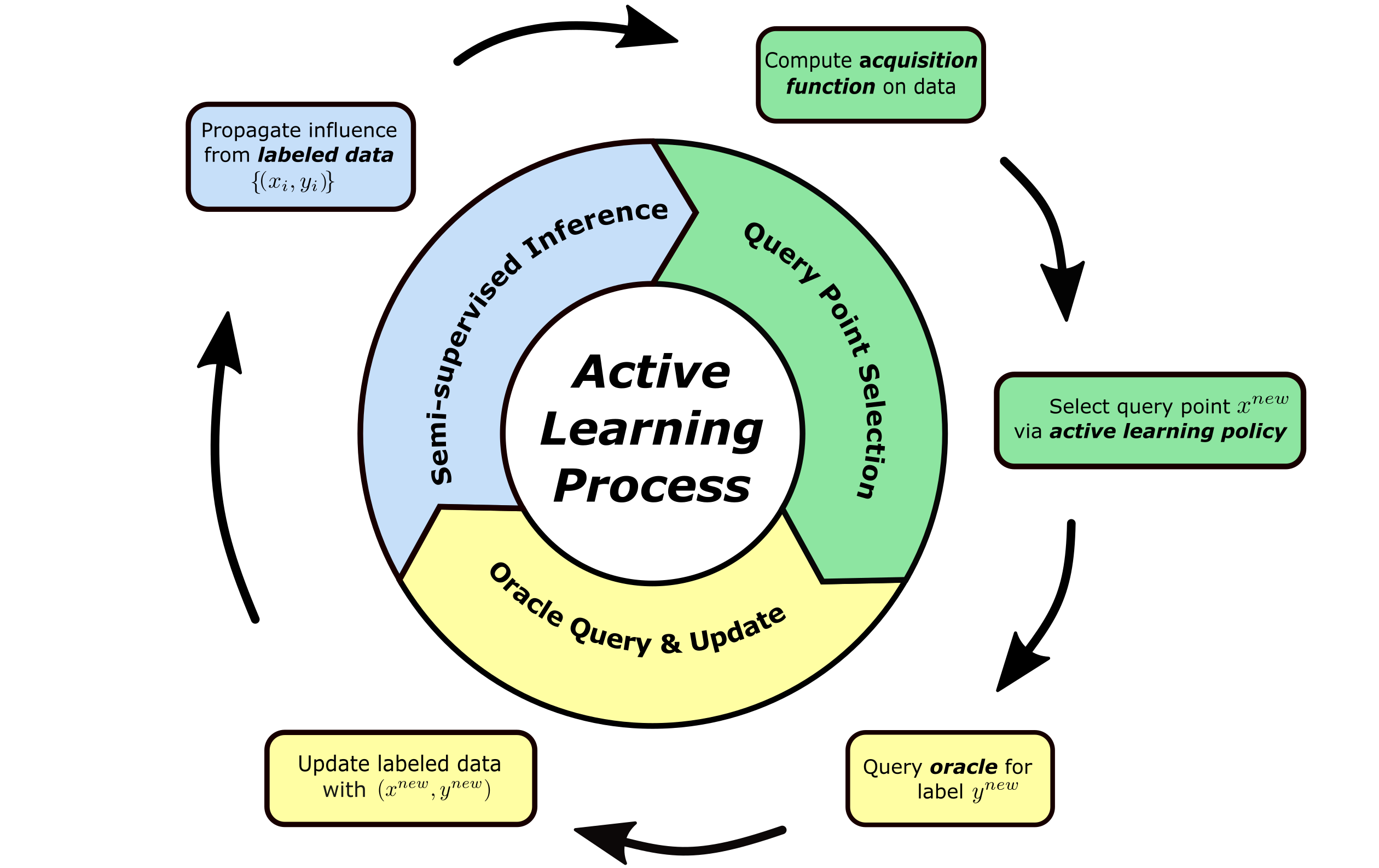}
    \caption{Active learning process flowchart.}
    \label{fig:al-flowchart}
\end{figure}

Active learning may have a number of different goals, such as ensuring high classifier accuracy relative to the number of query points or ensuring sampling from all parts of the feature distribution. Part of the challenge in doing so is that we need an appropriate vocabulary, as well as computational experiments, to understand tradeoffs between different goals in active learning. This paper seeks to develop this vocabulary and the associated computational experiments and does so while proposing a novel active learning method, which we call \emph{Dirichlet Active Learning (DiAL)}. 

\subsection{Setup and main contributions}

We assume that we have access to a set of inputs $\mcl X \subset \mbb R^d$ which are sampled from an underlying data-generating distribution with density $\rho : \mbb R^d \rightarrow [0, \infty)$. While in application this set $\mcl X$ represents a discrete set of points, we will also consider the continuum case where $\mcl X$ represents the support of $\rho$. We seek to classify these inputs $x \in \mcl X$ into $K \ge 2$ classes, and we assume access to an initially labeled set $\mcl L = \{x^1, \ldots, x^\ell\} \subset \mcl X$ for which we have observed $y^j \in [K] = \{1, 2, \ldots, K\}$ for $x^j \in \mcl L$. Let $\mcl Y_{\mcl L} = \{y^1, \ldots, y^\ell\}$ denote the corresponding set of labels (classes) for the labeled set $\mcl L$.
The task in active learning will be to iteratively augment the labeled data set $\mcl L \subset \mcl X$ by cycling between (1) learning a semi-supervised learning classifier given current labeled data $\mcl L$ with associated labels $y^j$ for $x_j \in \mcl L$, (2) selecting a query point $x^\ast$ from the unlabeled data, $\mcl U = \mcl X \setminus \mcl L$,  and (3) labeling said query point and adding it to the labeled data accordingly. As such, this process generates a sequence of labeled sets $\mcl L^{(0)} \subset \mcl L^{(1)} \subset \mcl L^{(2)} \subset \ldots$ as query points are labeled and added to the labeled set; we refrain from this iteration-explicit notation in favor of readability and denote the (changing) labeled set simply as $\mcl L$.

Active learning serves as a naturally complementary problem to semi-supervised learning. At a high level, semi-supervised learning seeks an answer to the question  
\begin{center}
\fbox{ 
    \begin{tabular}{@{}c@{}}
     \textit{How do we accurately infer the classification of the unlabeled} \hspace{0.75em} \\
    \textit{data $\mcl U$ given the currently labeled data $\mcl L$?}
    \end{tabular}%
}
\end{center}

In a complementary fashion, active learning seeks to answer the question
\begin{center}
\fbox{ 
    \begin{tabular}{@{}c@{}}
     \textit{How do we judiciously select currently unlabeled points $x \in \mcl U$ that, once labeled,} \hspace{0.75em} \\
    \textit{can maximally improve performance in the underlying semi-supervised problem?}
    \end{tabular}%
}
\end{center}
An implicit constraint is to perform as few queries as possible; in practical applications, labeling of query points may be costly or time-consuming and so an overall budget of queries may be explicitly imposed. Hence, a goal of active learning is to design methods which are provably efficient at selecting the query points that are the most useful for the underlying semi-supervised learning task. 

The terminology for the mechanism by which one selects query points is not uniform across the literature and we accordingly suggest the following framework. The active learning query points are selected at each iteration via the combination of what we will term an  \textit{acquisition function}\footnote{The term ``acquisition function'' appears in various references such as \citep{gal_deep_2017}.} and a \textit{policy}. We define an acquisition function $\mcl A (\cdot, \mcl L, \mcl Y_{\mcl L}): \mcl U \rightarrow \mbb R$ to be a user-defined criterion to quantify the utility of expending the effort to label a point $x \in \mcl U$. This acquisition function is dependent on the currently labeled data $\mcl L, \mcl Y_{\mcl L}$, though for ease of notation we drop the explicit dependence and write $\mcl A(\cdot) = \mcl A(\cdot; \mcl L, \mcl Y_{\mcl L})$. Indeed it may be said that the ``art'' of active learning primarily depends on the choice of acquisition function to reflect the desired properties of the selected query points. With a chosen acquisition function it remains to decide how to select the next query point $x^\ast \in \mcl U$ based upon the acquisition function values on the unlabeled data, $\{\mcl A(x)\}_{x \in \mcl U}$. We refer to the method for selecting the query point from said values as the \emph{active learning policy}. For example, a common policy is to select the maximizer, $x^\ast = \argmax_{x \in \mcl U} \mcl A(x)$, or minimizer $x^\ast = \argmin_{x \in \mcl U} \mcl A(x)$ depending on the specific properties of the chosen acquisition function. We also consider a proportional sampling policy that fits naturally into our theoretical results (Section \ref{sec:theory}) and is beneficial in our experiments (Section \ref{sec:results}). 

A novel idea of the present work is to model the influence of the labeled data on the unlabeled data via what we will term a ``Dirichlet random field''. Random fields (i.e., collections of random variables $\{F_x\}_{x \in \mcl X}$ indexed by elements in (a subset of) a topological space) are convenient structures for statistical processes, such as active learning. For example, Gaussian random fields are collections of Gaussian random variables where the correlation structure between the random variables is also Gaussian \citep[see][]{rasmussen_gaussian_2006}. Gaussian random fields and Markov random fields are commonly used in the active learning literature to define acquisition functions that reflect measures of uncertainty or variance in the underlying statistical model \citep{zhu_combining_2003, jun_chapter_2018, ji_variance_2012, schreiter2015safe, riis2022bayesian, kapoor2007active, krause2007nonmyopic}. Graph-based random fields are an especially relevant case \citep{zhu_combining_2003, jun_chapter_2018, ji_variance_2012, ma_sigma_2013, qiao_uncertainty_2019, miller_efficient_2020}, wherein the index set $\mcl X$ is finite and the correlation structure is determined by the connectivity structure of an associated graph $G(\mcl X, W)$.
Various previous graph-based methods for semi-supervised and active learning have been framed in the context of Gaussian random fields, where the correlation structure between the outputs at nodes (i.e., inputs $x \in \mcl X$) follows a Gaussian distribution influenced by the connectivity structure of a graph Laplacian matrix \citep{zhu_semi-supervised_2003, zhu_combining_2003, bertozzi2018uncertainty, qiao_uncertainty_2019} constructed from the set of inputs. 
While this approach yields a straightforward Bayesian interpretation of the associated semi-supervised and active learning problems, \textit{the inherent modeling assumption is that of regression, not classification}. See Section \ref{sec:prev-graph-based} for further discussion.

In contrast, we introduce a Dirichlet random field in the context of semi-supervised and active learning for classification tasks. The details of this model as given in Sections 2 and 3, but we summarize briefly here: we model the classification of each $x \in \mcl X$ via a categorical random variable $Y(x) \sim Cat(P(x))$. Inspired by the classical Bayesian approach, we express our current information about the probability vector $P$ as a Dirichlet random variable
\[
 P(x)= (p_1, \ldots, p_K)\sim Dir(\alpha_1(x), \alpha_2(x), \ldots, \alpha_K(x)).
\]
The Dirichlet posterior belief at each point $x \in \mcl X$ is updated according to the amount of information that is propagated from labeled set $\mcl L$ according to the respective classes. We call this information a ``pseudolabel'', as the $\alpha_i$ parameters of the Dirichlet distribution are interpreted as observed labels in a classical Bayesian context, whereas in the low-label learning context we must resort to treating observed labels at $x' \sim x$ as proxies for label observations at $x$. Using the vector $P$ we can construct a semi-supervised learning classifier which we call \textit{Dirichlet Learning}. 
In the graph-based setting, the propagation operator is defined via the graph Laplacian matrix. 

As a result of the Bayesian-inspired approach of Dirichlet Learning, the variance of the current estimate of the categorical probabilities at each unlabeled point is readily computable. This variance quantifies classifier uncertainty which we use as an active learning acquisition function. We term this acquisition function \textit{Dirichlet Variance}.


Having described the context, we now briefly summarize the main contributions of the present work as follows.
\begin{enumerate}[label=\textbf{\arabic*})]
    \item \textbf{Novel semi-supervised learning model (Dirichlet Learning) for efficient active learning.}
    
    We introduce a novel semi-supervised learning classifier (Dirichlet Learning) that models the inferred classifications on unlabeled data via a Dirichlet random field. An intuitive measure of ``uncertainty'' (Dirichlet Variance) is readily available for this classifier, and we we find this quantity to be an informative acquisition function for both exploration and exploitation in active learning. The proposed Dirichlet Learning (classifier) and DiAL (active learning) frameworks are flexible and easily adapted to any desired propagation operator from labeled data to the rest of the dataset. We specialize the framework to graph-based propagations that are very useful for capturing the clustering structure of the underlying dataset. These graph-based propagation operators are naturally defined on the discrete graph structure, but also conveniently have intimate connections to continuum-limit second-order elliptic operators that depend on the underlying data-generating distribution. 
    
    \item \textbf{Theoretical guarantees for exploration of clustering structure by DiAL.}
    
    We establish theoretical guarantees regarding explorative behavior of 
    DiAL in the low-label rate regime (Section \ref{subsec:exploration-guarantees}). Under some simple assumptions about the choice of propagation operator as it relates to the underlying data distribution, we demonstrate that $K$ active learning queries are sufficient to ensure the labeling of points in $K$ clusters of potentially varying sizes. 
    \item \textbf{Novel asymptotic analysis of the later stages of active learning.} 
    
    We provide an asymptotic analysis of the exploitative behavior of DiAL in the later stages of the active learning process (Section \ref{subsec:exploit-analysis}). We have not seen previous work, particularly in the graph-based active learning literature, that characterizes the late-stage asymptotics of active learning. This novel analysis elucidates that, if scaled properly, Dirichlet Variance as an acquisition function can lead to querying in regions of the dataset where multiple class-conditional probabilities are high--i.e., where there is inherent population-level uncertainty in the underlying data distribution. We suggest this is a natural characterization of beneficial exploitation in active learning.
    \item \textbf{Computational efficiency of the proposed method.} 

    We empirically verify the efficacy of DiAL when utilizing a graph-based Dirichlet Learning classifier to explore dataset clustering structure in real-world datasets, including a comparison to previous methods on hyperspectral imagery (HSI) pixel classification. We also highlight the very favorable computational complexity of the acquisition function, comparing it to previous graph-based active learning acquisition functions. 
    \item \textbf{Development of active learning vocabulary and associated computational tests.} 

    We provide various discussions throughout that seek to clarify important ideas that are not necessarily original to this work, but we believe help to elucidate the mechanisms underlying active learning. For example, in Remark \ref{remark:2types-unc} we discuss two distinct types of ``uncertainty'' in active learning (population-level and data-conditional uncertainty) and provide an illustrative example that demonstrates the importance of designing acquisition functions that ultimately reflect population-level uncertainty. We suggest that the asymptotic analysis of exploitation provides an important avenue for continued research in active learning methods. Finally, we introduce the terminology of an \textit{active learning policy}. While acquisition functions provide a concrete quantity to reflect the utility of labeling a currently unlabeled point, an active learning policy identifies how these acquisition function values are ultimately used to select the next query point. Traditionally, the maximizer of the acquisition function on the unlabeled data is chosen to be labeled; we find that sampling proportional to the acquisition function values is a useful active learning policy for both our numerical and theoretical results. 
\end{enumerate}

\subsection{Previous Work} \label{subsec:prev-work}
Acquisition functions for active learning have been introduced for various machine learning models, including support vector machines \citep{tong_support_2001, balcan_margin_2007, jiang_minimum-margin_2019, hanneke_minimax_2015}, deep neural networks \citep{ren2022deepALsurvey, gal_deep_2017, kushnir_diffusion-based_2020, cai_batch_2017, sener_active_2018, ash_deep_2020}, and graph-based classifiers \citep{miller_efficient_2020, miller_model-change_2021, miller_spie_2022, qiao_uncertainty_2019, ma_sigma_2013, ji_variance_2012, zhu_combining_2003}. Our experiments will primarily focus on graph-based classifiers as the underlying semi-supervised classifier due to their straightforward ability to capture clustering structure in data and their superior performance in the {\it low-label rate regime}--wherein the labeled data constitutes a very small fraction of the total amount of data \citep{calder_poisson_2020, miller2023unc}. While there has been progress in adapting deep neural networks to better handle small amounts of labeled data for semi-supervised classification tasks \citep{mixmatch2019, fixmatch2020, yang_sample_2023}, most active learning methods for deep learning assume a moderate-to-large amount of initially labeled data when evaluating their methods in the active learning process. 

\subsubsection{Exploration versus exploitation} 

An important aspect of active learning is the inherent tradeoff between using the inherently limited resource of queries to either (i) explore the given dataset or (ii) exploit the current classifier's inferred decision boundaries. This tradeoff is reminiscent of the similarly named ``exploration versus exploitation'' tradeoff in reinforcement learning \citep{Sutton1998, agarwal2021reinforcement}. Similar to reinforcement learning, there is a clear motivation for ensuring that exploration is performed prior to exploitation in the active learning process. Broadly speaking, however, most active learning acquisition functions are designed to exhibit one of these two behaviors, though some methods do seem to empirically balance both characteristics \citep{krause2007nonmyopic, huang_active_2010, karzand_maximin_2020, miller_model-change_2021}.  An important contribution of the current work is to present a mathematical analysis that gives qualitative guarantees about the explorative and exploitative characteristics of query points that are chosen by the proposed acquisition function, Dirichlet Variance.

Prior work to establish theoretical foundations for active learning has primarily focused on proving sample-efficiency results for linearly-separable datasets---frequently restricted to the unit sphere \citep{balcan2009agnostic, dasgupta_coarse_2006, hanneke_bound_2007}---for low-complexity function classes using disagreement or margin-based acquisition functions \citep{hanneke_theory_2014, hanneke_minimax_2015, balcan2009agnostic, balcan_margin_2007}. These provide convenient bounds on the number of query points necessary for the associated classifier to achieve (near) perfect classification on these datasets with simple geometry. These results demonstrate that a proposed acquisition function is sufficient to select query points that will (near) optimally refine the associated classifier's decision boundaries to best match the assumed ground-truth decision boundaries; that is, these classical statistical guarantees for active learning focus on the exploitative behavior of the associated methods.  

In contrast, theoretical guarantees for graph-based active learning methods primarily demonstrate that a proposed acquisition function sufficiently explores an assumed clustering structure for the dataset \citep{murphy_unsupervised_2019, dasarathy_s2_2015, miller2023unc, dasgupta_hierarchical_2008, dasgupta_two_2011, cloninger_cautious_2021}. Occasionally this clustering structure is assumed to be hierarchical \citep{dasgupta_hierarchical_2008, dasgupta_two_2011, cloninger_cautious_2021}. Sufficient exploration of clustering structure is characterized by a guarantee that given assumptions about the clustering structure of the observed dataset $\mcl X$, the active learning method in question will query points from \textit{all} clusters. The low-label rate regime of active learning---a significant focus of this current work---is the natural setting for establishing such explorative guarantees.

In summary, while classical statistical analysis of disagreement-based and margin-based active learning methods has focused on \textit{exploitative guarantees}, the analysis of graph-based active learning has focused on \textit{explorative guarantees}. In the current work, we not only provide explorative guarantees for our proposed acquisition function but also give a novel asymptotic analysis of the exploitative capabilities of the acquisition function later on in the active learning process. 

\subsubsection{Computational complexity of active learning}

Computational complexity is an additional consideration that is vital to the practical application of active learning methods. In the assumed setting of sequential active learning, the computational burden at each iteration is encountered in two main ways: (i) the cost to evaluate the acquisition function on a single, currently unlabeled data point and (ii) the size of the set of unlabeled data chosen to evaluate said acquisition function. Additionally, some active learning methods require the computation of ``auxiliary'' variables, such as covariance matrices \citep{ji_variance_2012, ma_sigma_2013, miller_efficient_2020, miller_model-change_2021}, eigenvector matrices \citep{murphy_unsupervised_2019, miller_model-change_2021}, or graph paths and distances \citep{murphy_unsupervised_2019, cloninger_cautious_2021, dasarathy_s2_2015} to be stored and oftentimes updated throughout the active learning process. Finally, some graph-based active learning methods \citep{murphy_unsupervised_2019, cloninger_cautious_2021} do not exactly follow the assumed interactive labeling scheme displayed in Figure \ref{fig:al-flowchart}; instead, these methods more closely resemble coreset methods \citep{bachem2017practical} since the labels of the selected data points do not influence the choice of any subsequently selected points. Such methods incur other computational costs that do not fit into the paradigm we now discuss.

Reducing the number of unlabeled points on which to evaluate the acquisition function is a way of reducing computational cost, and various heuristics have been suggested previously such as uniform random subsampling \citep{gal_deep_2017, miller_model-change_2021} or graph-based local restrictions \citep{chapman2023batchALsar}. While this is an interesting direction for research, we assume that the acquisition function is evaluated on the entire unlabeled data pool, as is standard in pool-based active learning \citep{settles_active_2012}. As such, the primary source of computational complexity follows from the cost of evaluating the acquisition function on a single unlabeled data point.  

Uncertainty sampling \citep{settles_active_2012, miller2023unc} is a category of acquisition functions that use the current classifier's outputs to approximate the ``uncertainty'' of the inferred classification of unlabeled data\footnote{See Example \ref{remark:2types-unc} for a discussion about uncertainty in semi-supervised learning and active learning.}; the most ``uncertain'' points are then selected to be queried in uncertainty sampling. Different measures of uncertainty in the classifier outputs (e.g., smallest margin, entropy, $\ell^2$-norm) determine the different acquisition functions in uncertainty sampling. While uncertainty sampling has not always empirically demonstrated optimal exploration versus exploitation behavior, it is generally among of the most computationally efficient kind of acquisition functions--the current classifier's outputs are the only required quantity at each active learning iteration, which is readily available in our assumed setting (Figure \ref{fig:al-flowchart}). The cost per unlabeled data point simply scales as the number of classes in the classification problem at hand. This is in contrast to other acquisition functions, such as Variance Minimization \citep{ji_variance_2012} and $\Sigma$-Optimality \citep{ma_sigma_2013}, that require computations that scale as the size of the entire dataset in order to evaluate the acquisition function at a single unlabeled data point.  

Our proposed acquisition function, Dirichlet Variance \ref{eq:dirichlet-variance-af}, can indeed be classified as a novel type of uncertainty sampling that naturally fits within the proposed semi-supervised learning model (Dirichlet Learning) which we introduce in Section \ref{sec:beta-learning}. See Section \ref{subsubsec:compare-comp} for further discussion about and comparison of computational complexity among the compared methods.

\subsubsection{Graph-based learning} \label{sec:prev-graph-based}
 
Graph-based methods have shown to be useful models for semi-supervised and active learning, especially in the low-label rate regime (i.e., when the amount of labeled data is \textit{significantly} smaller than the amount of unlabeled data). Generally speaking, these methods construct a similarity graph $G(\mcl X, W)$ from a finite set of inputs $\mcl X = \{x^1, \ldots, x^n\}$, where the edge weight matrix $W \in \mbb R^{n \times n}$ records the pairwise similarities $W_{ij} \ge 0$ between inputs $x^i,x^j \in \mcl X$. For example, edge weights computed by the Gaussian (RBF) kernel are $W_{ij} = \exp(-\|x^i - x^j\|_2^2)$. The task of semi-supervised classification then amounts to identifying how to use both the labeled data (i.e., label $y^j$ at labeled node $x^j \in \mcl L$) and the similarity graph $G(\mcl X, W)$ to infer labels on the set of unlabeled nodes $\mcl U = \mcl X \setminus \mcl L$. Especially relevant to our work is the subset of graph-based methods that are inspired by numerical methods for partial differential equations (PDE), wherein the semi-supervised learning task reduces to identifying a graph function $u : \mcl X \rightarrow \mbb R^K$ where the output $u(x) \in \mbb R^K$ reflects the inferred classification of node $x \in \mcl X$ \citep{bertozzi_diffuse_2016, calder_poisson_2020, calder_properly-weighted_2020, zhu_semi-supervised_2003, zhou2004llgc}. A central component in nearly all graph-based methods is the graph Laplacian matrix, $L \in \mbb R^{n \times n}$; this matrix is a graph-based analog of the Laplace operator \citep{chung1997book, von_luxburg_tutorial_2007}. Common examples of the graph Laplacian matrix are the \textit{combinatorial} $L = D - W$, the \textit{random walk} $L_{rw} = I - D^{-1} W$, and the \textit{symmetric normalized} $L_n = I - D^{-1/2}WD^{-1/2}$ graph Laplacians, where $D = \operatorname{diag}(d_1, d_2, \ldots, d_n)$ is the diagonal degree matrix with $d_i = \sum_{j = 1}^n W_{ij}$. A common rationale that motivates the use of graph-based methods for semi-supervised learning is that the graph Laplacian matrix is effective for identifying clustering structure in the underlying dataset. This is a primary observation and motivation for the use of spectral clustering in unsupervised clustering (see \citep{von_luxburg_tutorial_2007} and the references therein).

We also mention that these graph-based methods for semi-supervised learning (classification and regression) have intimate connections to second-order elliptic PDEs. Namely, significant work has been done to connect the graph Laplacian matrix, its eigenvalues and eigenvectors, and the corresponding semi-supervised classifiers to ``continuum limit'' counterparts which can be thought of the limit of the discrete graphs as the number of nodes $n \rightarrow \infty$, under proper assumptions regarding the graph scaling; see, for example, \citep{calder_properly-weighted_2020, calder_rates_2023}. This analysis is enlightening since the continuum limit PDE formulation acts as a proxy for the large data limit scenarios often faced in application, wherein the amount of unlabeled data is especially large. This allows one to analyze the properties of the second-order elliptic equations as opposed to the discrete graph structure thereby informing the behavior of the methods in the discrete setting. For example, the work in \citep{calder_properly-weighted_2020} uses this continuum limit analysis to propose a ``properly-weighted'' graph-based semi-supervised classifier that resolves degenerate behavior of solutions to the Laplace learning classifier \citep{zhu_semi-supervised_2003} in the presence of extremely low label rates. Portions of our theoretical analysis in Section \ref{sec:theory} for Dirichlet Learning in the graph-based setting rely on a continuum limit formulation and we introduce further notation and setup at that point in the paper.

Similar to graph-based methods for semi-supervised learning, graph-based active learning can then be framed in terms of selecting unlabeled nodes $x \in \mcl U$ based on properties such as the graph-based classifier $u$ given the currently labeled data, $\mcl L$. As mentioned previously, however, various acquisition functions have been proposed from a statistical perspective of the graph-based semi-supervised learning problem \citep{zhu_combining_2003, jun_chapter_2018, ji_variance_2012, ma_sigma_2013, qiao_uncertainty_2019, miller_efficient_2020}. Namely, some graph-based semi-supervised classifiers can be viewed as the \textit{maximum a posteriori} (MAP) estimator of a Gaussian random field whose correlation structure is related to the graph Laplacian matrix of the associated graph $G(\mcl X, W)$. 

For example, the Laplace learning semi-supervised classifier $u : \mcl X \rightarrow \mbb R$ for binary classification introduced by \citet*{zhu_semi-supervised_2003} can be viewed as the MAP estimator of the Gaussian random field with density
\[
    p(u) \propto \exp \lp - \frac{1}{\gamma} \mcl E(u) \rp = \exp \lp - \frac{1}{\gamma} \sum_{i,j=1}^N w_{ij} (u(x^i) - u(x^j))^2 \rp,
\]
where $\mcl E(u) = \sum_{i,j=1}^N w_{ij} (u(x^i) - u(x^j))^2$ is called the graph Dirichlet energy and $\gamma > 0$ is interpreted as a ``temperature'' parameter \citep[see][]{zhu_combining_2003}. Given observations (labels) $y^j \in \{\pm 1\}$ at the labeled nodes $x^j \in \mcl L$, the corresponding posterior distribution reflects fixing the outputs $u(x^j) = y^j$ while interpolating the values at the unlabeled nodes according to the graph topology. Other works have considered variants with Gaussian observation models \citep{bertozzi_posterior_2021, zhou2004llgc}, binary Markov random fields \citep{jun_chapter_2018}, and other non-Gaussian observation models \citep{qiao_uncertainty_2019, bertozzi_posterior_2021}. Each admits computation (or numerical approximation) of the uncertainty in the classifier under the respective Bayesian models. The statistics of the underlying posterior distribution, given the observed labeled data, can be computed for use as acquisition functions to identify which currently unlabeled points would reduce measures of the underlying posterior covariance matrix \citep{ji_variance_2012, ma_sigma_2013} or the overall expected error \citep{zhu_combining_2003, jun_chapter_2018}.

While this convenient Bayesian interpretation underlies each of these aforementioned graph-based semi-supervised learning models, there is an implicit assumption that the prior distribution over node functions $u$ follows a \textit{Gaussian} law--thus, the most ``natural'' Bayesian modeling choice is that of continuous-valued outputs like that of regression problems, \textit{not} classification problems. We directly address this shortcoming with our novel Dirichlet Learning classifier for semi-supervised learning that explicitly models the classification task in a Bayesian-inspired manner. Our proposed Dirichlet Variance acquisition function then is a natural choice for both its computational efficiency and theoretical interpretation for measuring the uncertainty\footnote{Classifier ``uncertainty'' is admittedly an ambiguous term, and we refer the reader to Remark \ref{remark:2types-unc} for a discussion of different types of uncertainty in active learning.} in the underlying Dirichlet Learning classifier.

\subsection{Summary of Notation}

We here briefly summarize various notational conventions used throughout the paper for the reader's convenience. We will generally use caligraphic capital letters (e.g., $\mcl X, \mcl L$) to denote sets, with the lone exception of $\Ker(\cdot, \cdot)$ to denote a kernel used to defined propagations from labeled data points. We will use subscripts to index entries of a vector and superscripts to index elements of a set; for example, $x^i \in \mcl X$ will represent the $i^{th}$ element of a set $\mcl X$ and $x_k$ will represent the $k^{th}$ entry of a vector $x \in \mbb R^d$. We let $B(x,\delta)$ denote an open ball centered at $x$ of radius $\delta>0$.

\section{Modeling uncertainty in semi-supervised learning using Dirichlet priors} \label{sec:beta-learning}

We consider a set of features $\mcl X \subset \mbb R^d$, and a set of $K \ge 2$ classes. We assume that there is a relationship between the features and the $K$ classes (categories), which we model with a joint probability distribution $\nu$ over the space $\mbb R^d \times [K]$. We consider the marginal of $\nu$ on the inputs to be modeled via the mixture model
\[
    \rho(x) = \sum_{k=1}^K w_k\rho_k(x),
\]
where each $\rho_k(x) = \mbb P(x | y = k)$ is the class-conditional distribution's density for the $k^{th}$ class and the weights represent the class marginal distribution weights $\sum_{k=1}^K w_k = 1, w_k \geq 0$. Let $\rho^K(x) = (\rho_1(x), \rho_2(x), \ldots, \rho_K(x))^T \in \mbb R^K$ be the vector of class-conditional densities at the input $x \in \mcl X$.

Recall that we are given a set of {\it labeled data}, $\{(x^\ell, y^\ell)\}_{x^\ell \in \mcl L}$, where $\mcl L \subset \mcl X$ and $y^\ell \in \{1 \dots K\} =: [K]$ are the observed labels, which are drawn from the joint distribution $\nu$. We will identify the label $y^\ell$ with its ``one-hot encoding representation'', $y(x^\ell) = e_{y^\ell} \in \mbb R^K$, where $e_k$ is the $k^{th}$ standard basis vector in $\mbb R^K$. 
Semi-supervised classification is the task of inferring the classification of the {\it unlabeled data} $\{x\}_{x \not\in \mcl L}$ from the observed labeled data $\{(x^\ell, y^\ell)\}_{x^\ell \in \mcl L}$. As this is an ill-posed problem, one must incorporate a priori assumptions about the ground-truth classification of the points in terms of the underlying geometry of the dataset $\mcl X$. 

In light of the semi-supervised context, we choose to model the classification $y(x)$ of an input $x \in \mcl X$ probabilistically in terms of a categorical random variable $Y$ such that
\[
    \mbb P(Y = k | p) = p_k, \qquad p \in \Delta_K,
\]
where the probability vector $p$ belongs to the $K-1$ dimensional simplex, $\Delta_K = \{q \in \mbb R^K : q_k \ge 0, \sum_{k=1}^K q_k = 1\}$. 

We recall that in the context of classical Bayesian inference, if we have observations of a categorical variable $Y \in [K]$ we model the distribution of the probability of each category using a Dirichlet distribution, which is a probability distribution on the simplex with density function given by
\begin{equation} \label{eq:pdf-dir}
\varphi(\alpha,z) := B(\alpha) \prod_{i=1}^K z^{\alpha_i -1},
\end{equation}
where $\alpha$ is a vector in $\mbb R_+^K$, $z$ is vector in $\Delta_K$, and $B(\alpha)$ is a normalization constant given by $B(\alpha) = \frac{\Gamma(\sum_{i=1}^K \alpha_i)}{\prod_{i=1}^K \Gamma(\alpha_i)}$, where $\Gamma$ is the standard Gamma function generalizing the factorial.
Given a Dirichlet prior distribution with parameters $\alpha$, if we observe the data $Y = j$ then our posterior probabilities take the form $(\alpha_1,\ldots, \alpha_j + 1, \ldots \alpha_K)$. 
In this classical setting one may interpret the vector $\alpha$ as the total number of observations we have of each category, usually across repeated experiments.

Inspired by this classical Bayesian estimation problem, we choose to model semi-supervised learning as a problem of estimating the set of probability vectors $\{p(x)\}_{x \in \mcl X}$.  In light of this aim, for each $x \in \mcl X$, let $P(x) \sim Dir(\alpha_1(x) + \alpha_0, \alpha_2(x) + \alpha_0, \ldots, \alpha_K(x) + \alpha_0)$ 
be a Dirichlet-valued random variable (with uniform prior $Dir(\alpha_0, \alpha_0, \ldots, \alpha_0)$) to model our belief about the multinomial probability vector $p(x)$. In contrast with the classical Bayesian setting, in the semi-supervised setting we do not expect to observe the categorical variable $Y(x)$ associated with every feature $x$: instead, we will observe values of $Y(\tilde x)$ where $\tilde x$ and $x$ are similar. In that light, we will consider a vector $\alpha(x) := (\alpha_1(x), \alpha_2(x), \ldots, \alpha_K(x))^T \in \mbb R^K_+$ which reflects the continuous values of implicit ``observations''--what we will term ``pseudolabel''--from each possible class at the feature $x$. We emphasize that the semi-supervised setting requires us to lend the strength of an observation at one vertex to nearby vertices (using a propagation mechanism that we will describe below). Hence we call the $\alpha(x)$ pseudo-labels or implied observations because the labels were only observed at nearby points.

\begin{figure}[t]
    \centering
    \includegraphics[width=0.7\textwidth]{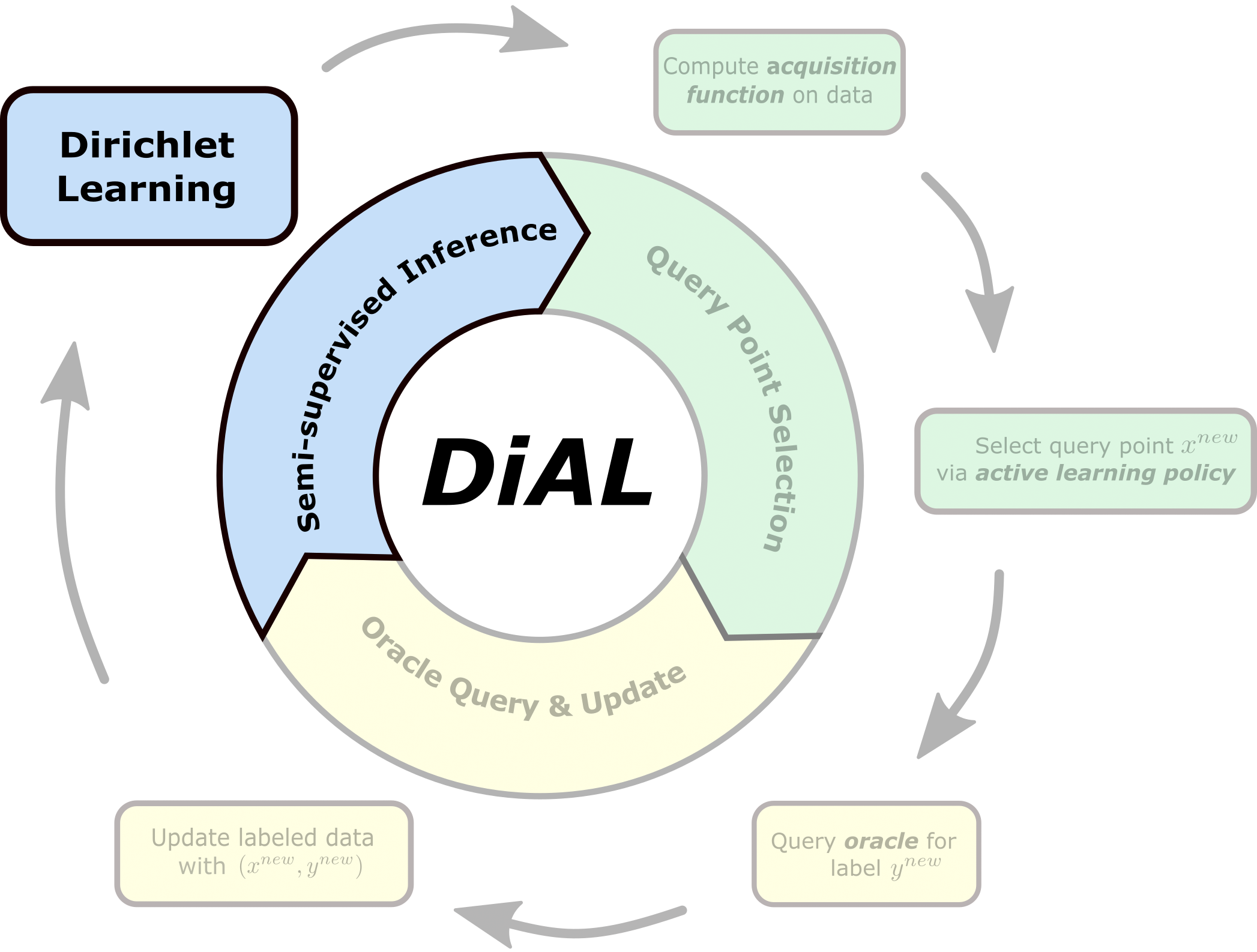}
    \caption{In \textit{Dirichlet Active Learning (DiAL)}, we specialize the semi-supervised inference step from the active learning process (Figure \ref{fig:al-flowchart}) to the \textit{Dirichlet Learning} semi-supervised classifier.}
    \label{fig:dial-ssl}
\end{figure}

The set of $\{P(x)\}_{x \in \mcl X}$ defines a Dirichlet-valued random field. In the interpretation as a semi-supervised classification model, we take the mean estimator $\hat{p}(x) := \mbb E[P(x)]$, where
\begin{equation}\label{eq:pki}
    \hat{p}_k(x) = \frac{\alpha_k(x) + \alpha_0}{\alpha_0 K + \sum_{m=1}^K \alpha_m(x)},
\end{equation}
to be the multinomial probability vector used to determine the inferred classification $Y(x)$ of $x$; that is, $\mbb P(Y(x) = k | \hat{p}(x)) = \hat{p}_k(x)$, so that we infer the classification via
\begin{equation}\label{eq:inference}
    \hat{y}(x) := \argmax_{k=1, 2, \ldots, K} \ \hat{p}_k(x) = \argmax_{k=1, 2, \ldots, K} \ \alpha_k(x).
\end{equation}
Furthermore, the uncertainty in our inference can be modeled by the covariance structure of the random variable $P(x)$. For example, we can model the uncertainty in $P(x)$ by considering the trace of the covariance matrix $C(x) \in \mbb R^{K \times K}$, where $C_{km}(x) = Cov(P_k(x), P_m(x))$. Define $\tilde{\alpha}_k(x) = \alpha_k(x) + \alpha_0$ and $\beta(x) =  \sum_{k=1}^K \tilde{\alpha}_k(x)$ so we can write
\begin{align}\label{eq:var}
    Tr[ C(x)] &= \sum_{k=1}^K Var(P_k(x)) = \sum_{k=1}^K \frac{\tilde{\alpha}_k(x) ( \beta(x) - \tilde{\alpha}_k(x) )}{(\beta(x))^2(\beta(x) + 1)} = \frac{(\beta(x))^2 - \sum_{k=1}^K (\tilde{\alpha}_k(x))^2}{(\beta(x))^2( \beta(x) + 1)}.
\end{align}
Later on, we will use this measure of uncertainty as the basis for our Dirichlet Variance acquisition function for DiAL. 
In summary, the salient features of our proposed model, namely (i) the inferred classification and (ii) the uncertainty in our belief about the inferred classification, are directly determined by the vectors $\{\alpha(x)\}_{x \in \mcl X}$.

\begin{example}[Running example: inference on finitely sampled data]
In many contexts, we have a large fixed quantity of unlabeled data $ \mcl X = \{x^i\}_{i=1}^n$. In that context, we will denote the point of interest by superscripts, namely $p(x^i) = p^i, \alpha(x^i) = \alpha^i, \hat y(x^i) = \hat y^i, \beta(x^i) = \beta^i$ and $C(x^i) = C^i$, 
and we will subsequently (see Example \ref{example:graph-based}) view the $x^i$ as the nodes of a graph. It is worth noting here that viewing the set of nodes as a random field is a common framework for imposing a Bayesian structure with the goal of quantifying uncertainty in semi-supervised learning, especially in graph-based methods \citep{zhu_combining_2003,miller_model-change_2021} which usually interpret learning problems in terms of a Gaussian random field with covariance structure derived from the graph Laplacian (see Section \ref{sec:prev-graph-based}). 
This is a powerful approach, which admits a direct Bayesian interpretation for graph-based regression problems, but the statistical interpretation for categorical problems is not as clear.
To the best of our knowledge, this is the first work to consider a Dirichlet-valued random field (which is inherently categorical) for graph-based semi-supervised learning and our application to active learning.

We also remark that in this finite data context, it may be useful to view the $y$'s associated with the $x$'s as deterministic (but unobserved). In this context, the ground truth $\nu(x^i,\cdot)$ would be concentrated on a single $y$ value.
\end{example}

\section{Propagation operators}

Given the framework in the previous section, we are now faced with the question of how to determine the vectors $\{\alpha(x)\}_{x \in \mcl X}$ from the currently observed labeled data $\{(x^\ell, y^\ell)\}_{\ell \in \mcl L}$. 
We introduce propagation of ``pseudo-label'' from the labeled data via the use of kernels.
In particular, given a positive definite kernel $\Ker:\mcl X \times \mcl X \to \mbb R_+$, we define the \textit{propagation from input $x \in \mcl X$} to be the function $\Ker_x(\cdot) = \Ker(x, \cdot) : \mcl X \rightarrow  \mbb R_+$. 
We require the following of $\Ker$: 
\begin{itemize}
    \item (Normalization) $\Ker(x, x) = T$. When $T = 1$ this matches the classical Bayesian framework, in the sense that we are counting an observation of the semi-supervised problem at a labeled point $x^\ell$ as a single statistical trial. While the choice $T = 1$ is the most interpretable (and we adhere to this convention in our examples), in data-poor settings it may be advantageous to allow a single labeled pair $(x^\ell, y^\ell)$ to count as $T$ observations of a statistical trial, reflecting the fact that semi-supervised problems we are unlikely to repeat a trial at the same point $x^\ell$. 
    \item (Maximum Principle) \textbf{ $\Ker(x, x') \le \Ker(x, x)$}. This requirement reflects the choice that an observation at a point $x^\ell$ has the greatest effect, in terms of the pseudo-labels, at $x^\ell$.
\end{itemize}
We do not necessarily require that our kernel be symmetric, although for many natural choices symmetry will additionally hold.

Now let $\mcl L_k = \{x^\ell \in \mcl L : y^\ell = k\}$ be the subset of the labeled data $\mcl L$ that belong to class $k$. Then we can define the functions 
\[
    \Ker_{\mcl L_k}(x) = \sum_{x^\ell \in \mcl L_k} \Ker_{x^\ell}(x),
\]
and set the concentration parameter at $x$, $\alpha(x) \in \mbb R^K_+$, to be
\[
    \alpha(x) = \lp \Ker_{\mcl L_1}(x), \Ker_{\mcl L_2}(x), \ldots, \Ker_{\mcl L_K}(x)\rp^T = \lp \sum_{\ell \in \mcl L_1} \Ker(x^\ell, x), \sum_{\ell \in \mcl L_2} \Ker(x^\ell, x), \ldots, \sum_{\ell \in \mcl L_K} \Ker(x^\ell, x)\rp^T.
\]

\begin{example} Consider a distribution of data on the square $\mcl X = [0,1]^2$, with density described by the classical ``two moons'' distribution. While generally sources do not provide an exact expression for such a density, we construct this density using a kernel density estimator of a finite sample: the associated density is displayed in Figure \ref{fig:propagation-2d-kde}. 

For such continuum data, there are a variety of possible choices for $\Ker$ to describe the strength of the relationship between two points. One of the simplest such approaches is based upon radial basis functions (RBF), namely
\[
    \Ker(x, x') = \exp \lp -\frac{\|x - x'\|_2^2}{2\sigma^2}\rp.
\]
This choice of kernel has been previously utilized in active learning tasks \citep{karzand_maximin_2020}. We notice that such a choice of kernel is isotropic and therefore \textit{independent} of the underlying distribution of the data.

An alternative choice, which has previously been considered in the discrete setting under the name ``Poisson Learning'' \citep{calder_poisson_2020}, utilizes partial differential equations to construct a data-informed propagation operator. In particular, we can define our propagation operator via the expression
\[
   \Ker_P(x, x^\ast) = u_{x^\ast}(x),
\]
where $u_x^\ast$ is the solution to the partial differential equation
\begin{align} \label{eq:poisson-ctnm-2d}
    -\Delta_\rho u_{x^\ast} (x) + \tau u_{x^\ast}(x)&= \delta_{x^\ast}(x) \quad \text{ for } x \in [0,1]^2 \setminus \{x^\ast\}\\
    \nabla u_{x^\ast} (x) \cdot \boldsymbol{\nu} &= 0 \quad \text{ on } \partial [0,1]^2 \nonumber 
\end{align}
where here we define $\Delta_\rho v := \frac{1}{\rho}\nabla \cdot (\rho \nabla v)$ . This kernel will not be symmetric, but will satisfy our normalization and Maximum Principle assumptions, and will be strongly data-adapted. A finite difference approximation of this solution is displayed in Figure \ref{fig:2d-propagation}, and demonstrates very attractive data-adapted propagation.

While this approach gives elegant propagation functions which respect data density and topology, kernel density estimation and finite difference approximation are computationally challenging in higher dimension. This motivates the graph-based approach as in Example \ref{example:graph-based}.

\end{example}

\begin{figure}
    \begin{subfigure}{0.5\textwidth}
        \centering
        \includegraphics[width=\textwidth]{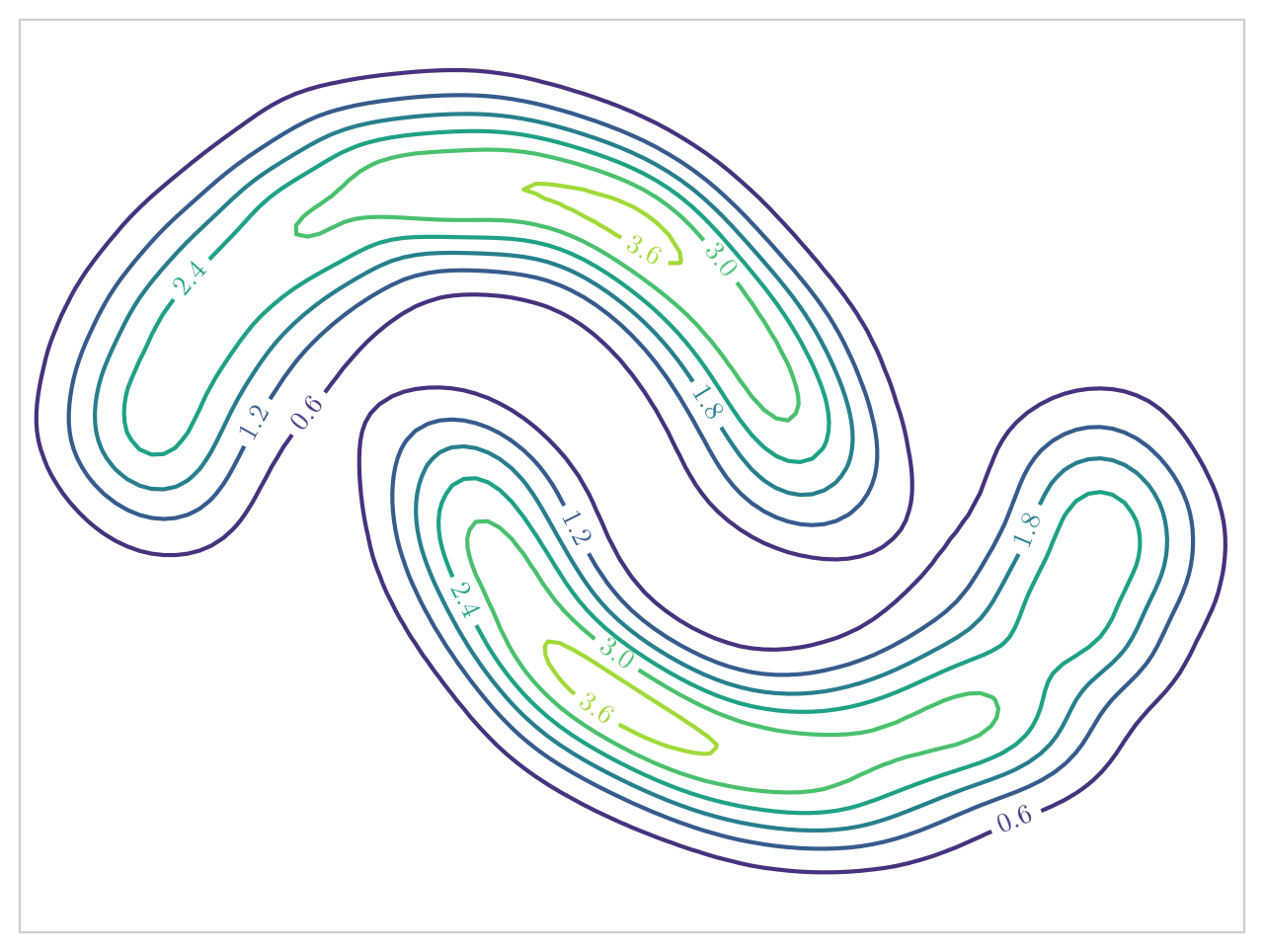}
        \caption{KDE of Finite Sample in panel (b)}
        \label{fig:propagation-2d-kde}
    \end{subfigure}
    \begin{subfigure}{0.5\textwidth}
        \centering
        \includegraphics[width=\textwidth]{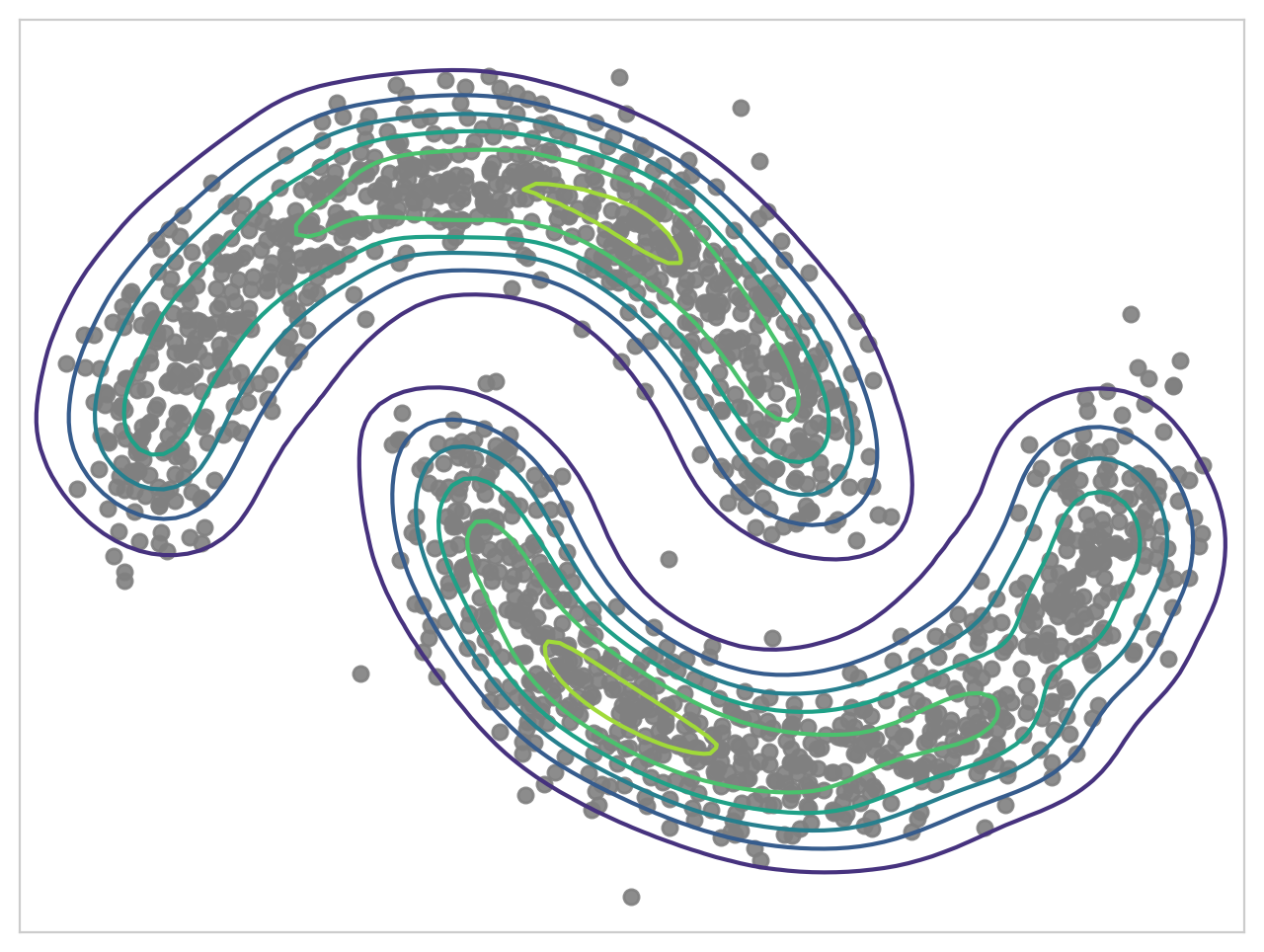}
        \caption{Sample}
    \end{subfigure}
    \newline 
    \begin{subfigure}{0.5\textwidth}
        \centering
        \includegraphics[width=\textwidth]{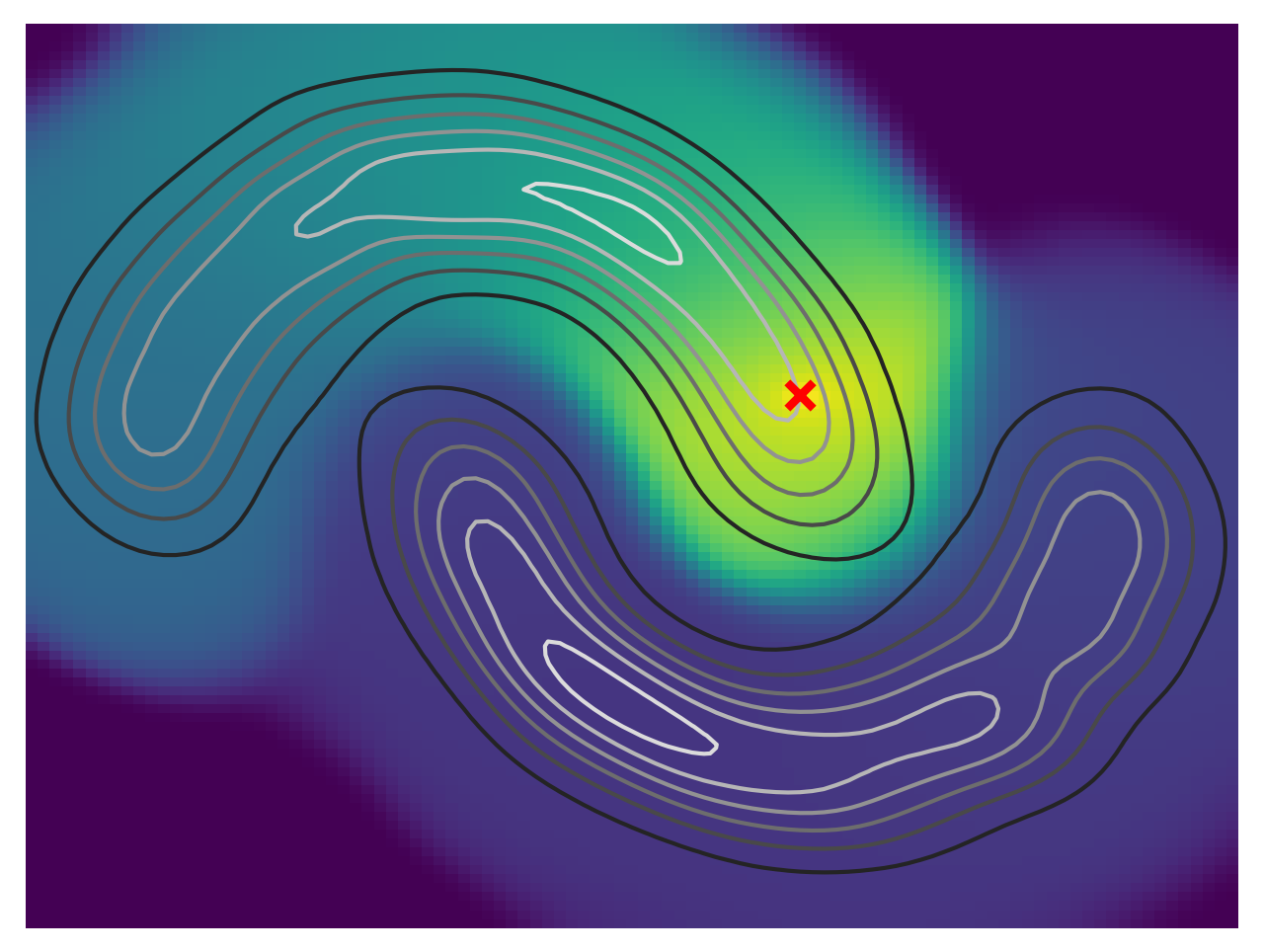}
        \caption{Continuum Poisson}
    \end{subfigure}
    \begin{subfigure}{0.5\textwidth}
        \centering
        \includegraphics[width=\textwidth]{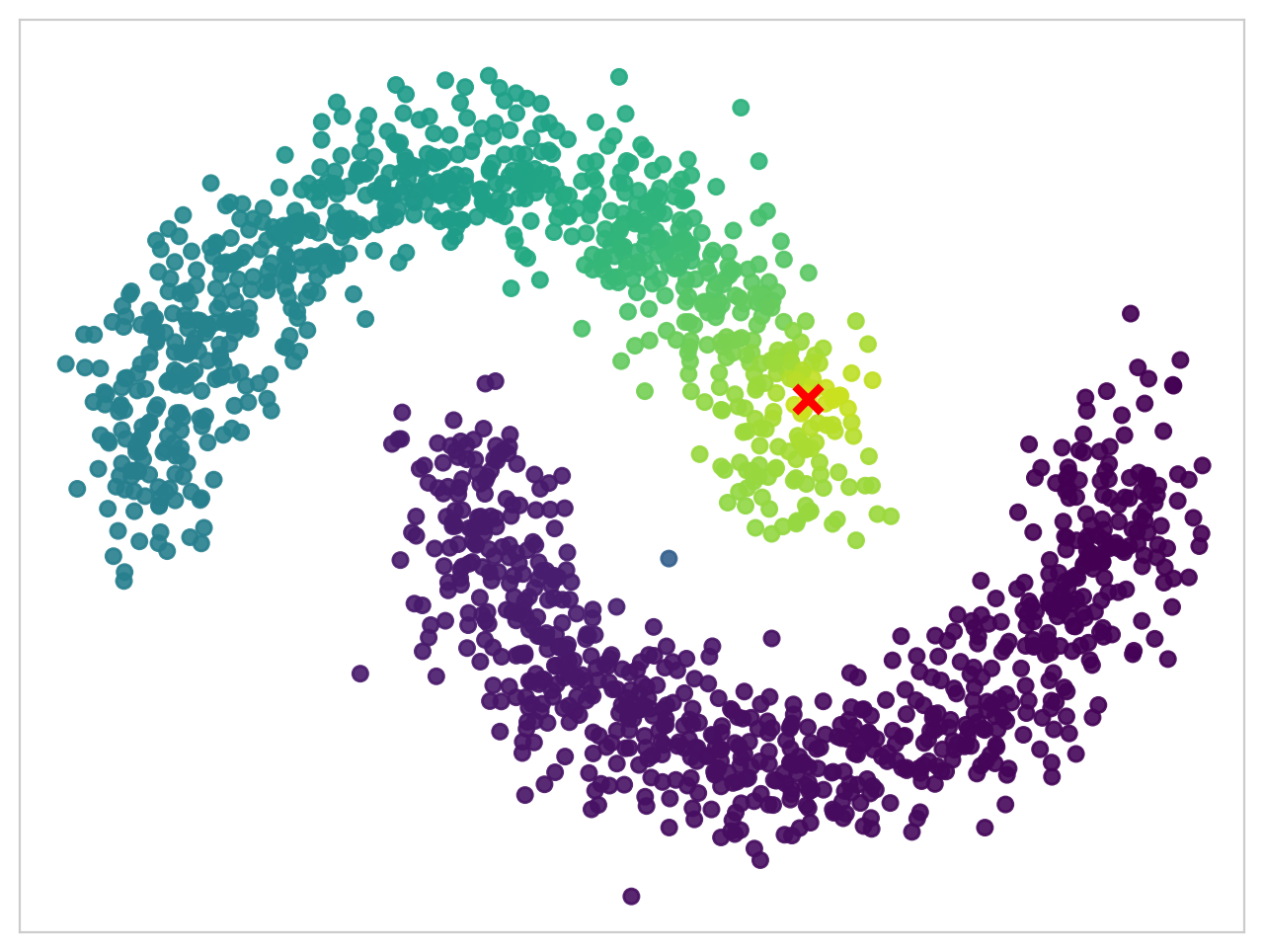}
        \caption{Graph-based Poisson}
    \end{subfigure}
    \newline 
    \begin{center}
        \begin{subfigure}{0.5\textwidth}
            \centering
            \includegraphics[width=\textwidth]{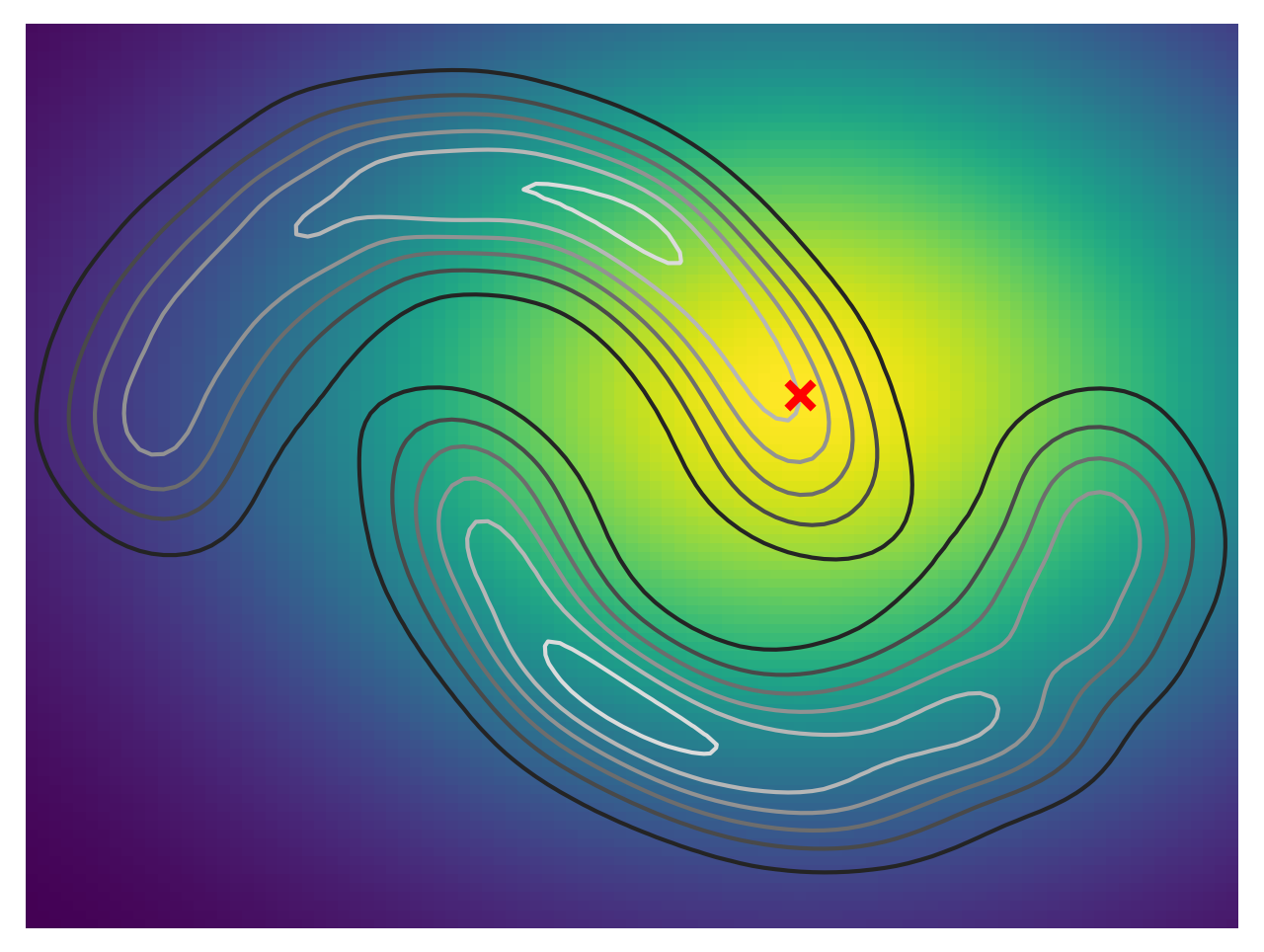}
            \caption{RBF Kernel $\Ker(x, x') = \exp\lp 
     -\frac{\|x - x'\|_2^2}{\sigma^2}\rp$}
        \end{subfigure}
    \end{center}
    \caption{Example propagation operators on a synthetic dataset (Two Moons) in two dimensions. Panel (a) shows the level sets of the underlying data-generating distribution $\rho(x)$ of the dataset, and panel (b) shows an empirical sample of $1500$ points from $\rho$. Panels (c-e) show different propagation operators on the domain, where (c) and (d) are respectively finite-difference and graph-based approximations to the Poisson propagation of Equation \eqref{eq:poisson-ctnm-2d}. These are inherently data \textit{adaptive} propagations that consider the data density $\rho$. In contrast, radial basis function (RBF) propagation, as shown in panel (e), is \textit{independent} of the data distribution, whereas the Poisson propagation depends on the data-generating distribution.}
    \label{fig:2d-propagation}
\end{figure}

\begin{example}[Continuation of running example: Graph-based propagation operators] \label{example:graph-based}
Let $G(\mcl X, W)$ be a similarity graph with finite node set $\mcl X = \{x^i\}_{i=1}^n \subset \mbb R^d$ 
with edge weight matrix $W \in \mbb R^{n \times n}$, where the weight $0 \le W_{ij})$ 
captures the similarity between $x^i, x^j \in \mbb R^d$; that is, $W_{ij}$ is to be larger (smaller) when $x^i$ and $x^j$ are similar (dissimilar). Let $D = \operatorname{diag}(d_1, d_2, \ldots, d_n)$ be the diagonal {\it degree} matrix with $d_i = \sum_{j=1}^n W_{ij}$ denoting the degree of node $x^i$. We consider pseudo-label propagation from labeled $x^\ell \in \mcl L$ to the rest of the nodes in the graph via the use of graph Laplacian matrices, as mentioned in Section \ref{sec:prev-graph-based}.
The {\it combinatorial} graph Laplacian, $L = D - W$, is a standard graph Laplacian matrix that
is known to be positive, semi-definite with real eigenvalues and eigenvectors, including a non-trivial null space. The geometric structure of the eigenvectors corresponding to the $k$ smallest eigenvalues of graph Laplacians forms the basis for spectral clustering \citep{von_luxburg_tutorial_2007}. 


Define a node function $g^\ell : \mcl X \rightarrow \mbb R$ (equivalently written as a vector $g^\ell \in \mbb R^n$ assuming the ordering on the nodes $\mcl X = \{x^1, x^2, \ldots, x^n\}$ of the graph) to be the solution to the following ``Poisson propagation'' \citep{calder_poisson_2020, miller2023unc}
\begin{equation}\label{eq:poisson-prop}
    \lp L + \tau \mathrm{I}_n\rp g^\ell = e_\ell, 
\end{equation}
for a given $\tau > 0$ where $e_\ell \in \{0,1\}^n$ is the $\ell^{th}$ standard basis vector in $\mbb R^n$.
We can then define a corresponding ``Poisson'' graph propagation operator $\Ker_P$ to be
\begin{equation}\label{eq:poisson-prop-shift}
   \Ker_P(x^\ell, x) = \frac{g^\ell(x) - (\min_{\tilde{x} \in \mcl X} g^\ell(\tilde{x}))}{g^\ell(x^\ell) - (\min_{\tilde{x} \in \mcl X} g^\ell(\tilde{x}))}.
\end{equation}
\end{example}

It is straightforward to see that the Normalization property is satisfied for \eqref{eq:poisson-prop-shift} in the previous example. The following Lemma demonstrates that the Maximum Principle property is satisfied by \eqref{eq:poisson-prop-shift}, which leverages the well-known Maximum Principle of the combinatorial graph Laplacian, $L$.

\begin{lemma}[Maximum Principle for Poisson propagation]
    Assume that the graph $G(\mcl X,W)$ is connected and that $x^\ell \in \mcl X$ is fixed. Then, the maximum of the Poisson propagation \eqref{eq:poisson-prop-shift} occurs at the source, $x^\ell$.
\end{lemma}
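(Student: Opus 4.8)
The plan is to reduce the claim to the standard discrete maximum principle for the combinatorial graph Laplacian $L$. The shift and normalization in \eqref{eq:poisson-prop-shift} are order-preserving affine transformations (the denominator $g^\ell(x^\ell) - \min_{\tilde x} g^\ell(\tilde x)$ is a positive constant, which I should note requires knowing $g^\ell$ is not constant — true since $(L+\tau I_n)g^\ell = e_\ell \neq 0$), so it suffices to show that $g^\ell$ itself, the solution of $(L + \tau I_n)g^\ell = e_\ell$, attains its maximum over $\mcl X$ at the source node $x^\ell$. First I would observe that $L + \tau I_n$ is symmetric positive definite for $\tau > 0$ (since $L$ is positive semidefinite), hence invertible, so $g^\ell$ is well-defined; moreover $g^\ell = (L+\tau I_n)^{-1} e_\ell \geq 0$ entrywise because $(L + \tau I_n)^{-1}$ is a nonnegative matrix — this follows from $L + \tau I_n$ being a (diagonally dominant) M-matrix, or equivalently by writing $(L+\tau I_n)^{-1} = \frac{1}{c}(I - M)^{-1} = \frac{1}{c}\sum_{m\geq 0} M^m$ for a suitable $c>0$ and nonnegative substochastic $M$.

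The core of the argument is a local comparison at any node $x^i \neq x^\ell$. Writing out the $i$-th equation of $(L+\tau I_n)g^\ell = e_\ell$ with $L = D - W$ gives
\[
    (d_i + \tau) g^\ell(x^i) - \sum_{j=1}^n W_{ij} g^\ell(x^j) = 0 \qquad \text{for } x^i \neq x^\ell,
\]
so that
\[
    g^\ell(x^i) = \frac{1}{d_i + \tau}\sum_{j=1}^n W_{ij} g^\ell(x^j) \le \frac{d_i}{d_i+\tau}\max_{j} g^\ell(x^j) \le \frac{d_i}{d_i+\tau}\, g^\ell(x^{i^\ast}),
\]
where $x^{i^\ast}$ is a global maximizer of $g^\ell$. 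If the global maximum were attained at some $x^{i^\ast} \neq x^\ell$, then plugging $i = i^\ast$ into the displayed inequality yields $g^\ell(x^{i^\ast}) \le \frac{d_{i^\ast}}{d_{i^\ast}+\tau} g^\ell(x^{i^\ast})$, and since $g^\ell(x^{i^\ast}) \geq 0$ and $\frac{d_{i^\ast}}{d_{i^\ast}+\tau} < 1$ (using $\tau > 0$ and $d_{i^\ast} \geq 0$), this forces $g^\ell(x^{i^\ast}) = 0$, hence $g^\ell \equiv 0$ — contradicting $(L+\tau I_n)g^\ell = e_\ell \neq 0$. Therefore the maximum is attained at $x^\ell$.

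The connectedness hypothesis is used to guarantee $d_{i^\ast} > 0$ for every node (no isolated vertices), which makes the strict inequality $\frac{d_{i^\ast}}{d_{i^\ast}+\tau} < 1$ meaningful, and more importantly to rule out the degenerate reading where the bound $\sum_j W_{ij} g^\ell(x^j) \le d_i \max_j g^\ell(x^j)$ is vacuous; I'd also use it (via irreducibility / a propagation-along-paths argument) if one wants the sharper statement that the maximum is attained \emph{only} at $x^\ell$, though the lemma as stated only asks that it occur there. The main obstacle — really the only subtlety — is handling the possibility $g^\ell \equiv 0$ or a constant denominator cleanly; this is dispatched by noting $e_\ell \neq 0$, so the whole proof is short. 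I would present it as: (i) well-posedness and nonnegativity of $g^\ell$; (ii) the averaging identity at non-source nodes and the one-line contradiction argument; (iii) remark that the affine rescaling in \eqref{eq:poisson-prop-shift} preserves the location of the maximum.
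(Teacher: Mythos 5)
Your proposal is correct and follows essentially the same route as the paper's proof: reduce to the unshifted solution $g^\ell$, write the averaging identity $g^\ell(x^i) = \frac{1}{\tau + d_i}\sum_j W_{ij} g^\ell(x^j)$ at non-source nodes, and derive a contradiction if the maximum sits away from $x^\ell$. Your added step establishing $g^\ell \ge 0$ (and ruling out $g^\ell \equiv 0$ via $e_\ell \neq 0$) is a welcome refinement, since the paper's strict inequality $\frac{d_i}{\tau+d_i} g^\ell(x^i) < g^\ell(x^i)$ tacitly assumes the maximum value is positive.
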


\begin{proof}
    By virtue of $\Ker_P(x^\ell, x)$ merely being a scaling and shifting of the solution $g^\ell$ to a linear system, we can simply just show that $g^\ell(x^\ell) \ge g^\ell(x^i)$ for all $x^i \in \mcl X$. The solution $g^\ell$ satisfies the system of equations
    \begin{equation*}
        \left[(L + \tau \mathrm{I}) g^\ell\right](x^i) = \begin{cases}
            1 & \text{ for } x^i = x^\ell, \\
            0 & \text{ otherwise},
        \end{cases}
    \end{equation*}
    where $\left[L g^\ell\right](x^i) = \sum_{j=1}^n w_{ij} (g^\ell(x^i) - g^\ell(x^j))$ equivalently denotes the $i^{th}$ entry of the vector $L g^\ell \in \mbb R^n$. Recalling that $d_i = \sum_{j=1}^n w_{ij}$ denotes the degree of node $i$ and defining the indicator $\chi_{ij} = 1$ if $i = j$ and $0$ otherwise, then we have 
    \begin{align*}
        g^\ell(x^i) &= \frac{\chi_{i\ell}}{\tau + d_i} + \frac{1}{\tau + d_i} \sum_{j=1}^n w_{ij} g^\ell(x^j).
    \end{align*}
    
    If the function $g^\ell$ attains its maximum at $x^i \not= x^\ell$, then we see that
    \begin{align*}
        g^\ell(x^i) &= \frac{1}{\tau + d_i} \sum_{j=1}^n w_{ij} g^\ell(x^j) \le \frac{1}{\tau + d_i} \sum_{j=1}^n w_{ij} g^\ell(x^i) < g^\ell(x^i),
    \end{align*}
    which is a contradiction. Whereas, if the maximum occurs at $x^\ell$ then the equation 
    \begin{align*}
        g^\ell(x^\ell) &= \frac{1}{\tau + d_\ell} + \frac{1}{\tau + d_\ell} \sum_{j=1}^n w_{\ell j} g^\ell(x^j) \le \frac{1}{\tau + d_\ell} + \frac{1}{\tau + d_\ell} \sum_{j=1}^n w_{\ell j} g^\ell(x^\ell) = \frac{1}{\tau + d_\ell} + \frac{d_\ell}{\tau + d_\ell}g^\ell(x^\ell),
    \end{align*}
    which simply implies that $g^\ell(x^\ell) \le \tau^{-1}$. We conclude then that 
    \[
       \Ker_P(x, x^\ell) \le\Ker_P(x^\ell, x^\ell),
    \]
    as desired.
\end{proof}

\begin{remark}
    It should be noted that the above proof relies on a Maximum Principle for the \textbf{combinatorial} graph Laplacian matrix, $L$. This property does \textit{not} hold for all graph Laplacian matrices, such as the symmetric-normalized graph Laplacian, $L_s$. In this case, the term $L_s (g^\ell(x^i) - g^\ell(x^j)) = \sqrt{\frac{d_i}{d_j}}g^\ell(x^i)$ may not satisfy the Maximum Principle property due to disparate values of the degrees $d_i, d_j$. For the remainder of this current work, we focus solely on the use of the combinatorial graph Laplacian for the Poisson propagation defined in \eqref{eq:poisson-prop-shift}.
\end{remark}

It is insightful to contrast our setup and choice of propagation operator with previous Bayesian frameworks for graph-based semi-supervised and active learning. For simplicity, consider the binary classification task. In the Gaussian process/random field setting of works such as \citep{zhu_semi-supervised_2003, zhu_combining_2003}, the graph Laplacian is incorporated into the prior distribution as over node functions $u : \mcl X \rightarrow \mbb R$ as $u \sim \mcl N(0, (L + \tau I )^{-1})$ and reflects an \textit{a priori} assumption regarding the smoothness of likely node functions with respect to the graph topology. The covariance matrix of this prior distribution is intimately connected to the graph-based propagation operator that we consider here. Intuitively, this graph-based prior distribution in the Gaussian random field biases the posterior belief given labeled data toward node functions that have similar outputs for nodes that are connected in the graph. In this way, the ample supply of available unlabeled data can straightforwardly be incorporated into the Bayesian framework to admit more sample-efficient learning of the classification task under the assumption that the classification structure aligns with the clustering structure of the unlabeled data.

The motivation for our related graph-based propagation operator does not admit the same direct interpretation in terms of Bayesian inference. The implicit modeling assumption with its use in Dirichlet Learning is that a data-dependent propagation operator should reflect the clustering (geometric) structure of the dataset. This intuition is indeed what underlies the choice of the graph Laplacian-based prior distribution in the Gaussian random field setting but is not directly modeled in the prior belief for each Dirichlet random variable in our setting. Instead, the data-dependent propagation operator is incorporated implicitly into the likelihood model, wherein continuously valued amounts of pseudo-label influence from labeled points are given to other points in a manner that reflects the underlying clustering structure of the dataset. Intuitively, this choice of data-dependent propagation in the Dirichlet Learning classifier aims to achieve sample efficient exploration of clusters by DiAL which we discuss in the next section (Section~\ref{sec:query-point-selection}).

\paragraph{Bayesian prior interpretation.} Although the Dirichlet Learning model is not directly derived from a properly Bayesian setup, we can interpret the given model in a Bayesian framework.  For simplicity, assume that the set $\mcl X$  is finite, and consider the random matrix $\Pmat \in \mbb R_+^{n \times K}$ whose $i^{th}$ row corresponds to 
the Dirichlet random variable $\Pmat(x^i)$ defined at $x^i$; that is, $\Pmat$ is the concatenation of all the Dirichlet random variables over our set $\mcl X$. Let the $k^{th}$ entry of the probability vector $\Pmat(x^i) \in \Delta_{K}$ (i.e., the $(i,k)^{th}$ entry of the matrix $\Pmat \in \mbb R^{n \times K}$) be denoted as $\Pmat_k(x^i) \in [0,1]$.
Let $(X, Y) \in \mcl X \times [K]$ represent an observation of an input-output pair with instance $(X,Y) = (x, y)$ for some $x \in \mcl X$. 

We seek a formula for the prior belief (with density $\pi(\Pmat | \alpha_0)$) on $\Pmat$ given our modeling assumption captured in the pseudo-label propagation from labeled points in Dirichlet Learning defined via $\Ker: \mcl X \times \mcl X \rightarrow \mbb R_+$. Recalling the definition of the probability density function $\varphi(\alpha, z)$ for a Dirichlet random variable in \eqref{eq:pdf-dir}, then by appealing to Bayes' law we can then write
\begin{align*}
    \pi(\Pmat | \alpha_0) &= \sum_{x \in \mcl X} \sum_{y = 1}^K \mbb P(\Pmat| x, y,  \alpha_0) \pi(x,y)  \\
    &= \lp \prod_{z \in \mcl X} \varphi(\alpha_0\mathbbm{1}, \Pmat(z)) \rp \sum_{x \in \mcl X} \sum_{y = 1}^K \prod_{z \in \mcl X} \frac{\Gamma(\alpha_0)^{K-1} \Gamma(\alpha_0 + \Ker(x,z)) \Pmat_y(z)^{\Ker(x,z)}}{\Gamma(K\alpha_0 + \Ker(x,z))} \mbb P(Y = y|x) \pi(x) \\
    &\propto \lp \prod_{z \in \mcl X} \varphi(\alpha_0\mathbbm{1}, \Pmat(z)) \rp \sum_{x \in \mcl X} \lp \prod_{z \in \mcl X} \frac{\Gamma(\alpha_0 + \Ker(x,z))}{\Gamma(K\alpha_0 + \Ker(x,z))} \rp \pi(x) \left\{ \sum_{y = 1}^K \prod_{z \in \mcl X} \Pmat_y(z)^{\Ker(x,z)} \Pmat_y(x) \right\}.
\end{align*}
Now, if we set $\pi(x) = \frac{1}{|\mcl X|}$ 
to be an \textit{uninformative} prior on the observation data's input, then we can further simplify as
\begin{align*}
    \pi(\Pmat | \alpha_0) &\propto \lp \prod_{z \in \mcl X} \varphi(\alpha_0\mathbbm{1}, \Pmat(z)) \rp \sum_{x \in \mcl X} \overbrace{\lp   \prod_{z \in \mcl X} \frac{\Gamma(\alpha_0 + \Ker(x,z))}{\Gamma(K\alpha_0 + \Ker(x,z))} \rp}^{w(x) :=}  \overbrace{\left\{ \sum_{y = 1}^K \Pmat_y(x) \prod_{z \in \mcl X} \Pmat_y(z)^{\Ker(x,z)} \right\}}^{s(\Pmat, \Ker_x) := } \\
    &= \lp \prod_{z \in \mcl X} \varphi(\alpha_0\mathbbm{1}, \Pmat(z)) \rp \sum_{x \in \mcl X} w(x) s(\Pmat, \Ker_x),
\end{align*}
where have recalled the definition of $\Ker_x : \mcl X \rightarrow \mbb R^+$ as the propagation function from $x \in \mcl X$.
The quantity $w(x)$ then corresponds to a measure of ``centrality'' that weights each $x \in \mcl X$, while $s(\Pmat, \Ker_x)$ captures a measure of ``alignment'' between the probability distribution represented by $\Pmat$ and the chosen kernel's propagation at $x \in \mcl X$, $\Ker_x(\cdot)$. The interpretation of $w(x)$ as a measure of centrality simply follows from the observation that the ratio of the $\Gamma$ functions increases as $\Ker(x,z)$ increases; thus, a more ``central'' point $x$ that is similar to a greater proportion of the dataset will have a larger weight $w(x)$. See Example~\ref{ex:kernel-prob-alignment} for a simplified example to demonstrate how the quantity $s(\Pmat, \Ker_x)$ captures the ``alignment'' between $p$ and the kernel propagation, $\Ker_x$.

\begin{example}[Kernel and probability alignment] \label{ex:kernel-prob-alignment}
Consider a binary classification setting where the dataset is clustered simply into two disjoint sets $\mcl X = \mcl X_1 \cup \mcl X_2$ and that the kernel $\Ker$ perfectly discriminates between these clusters:
\[
    \Ker(x,z) = \begin{cases}
        1 & \text{if there exists } k \in \{1,2\} \text{ s.t. } x,z \in \mcl X_k \\
        0 & \text{otherwise}
    \end{cases}.
\]
If the probability matrix $\Pmat$ reflects this clusteredness (e.g., $\Pmat_1(x) = 1$ if $x \in \mcl X_1$ and $0$ otherwise), then 
\begin{align*}
    \prod_{z \in \mcl X} \Pmat_y(z)^{\Ker(x,z)} &= \prod_{z \in \mcl X_y} \Pmat_y(z)^{\Ker(x,z)} \prod_{z \not\in \mcl X_y} \Pmat_y(z)^{\Ker(x,z)}= \prod_{z \in \mcl X_y} 1^{\Ker(x,z)} \prod_{z \not\in \mcl X_y} 0^{\Ker(x,z)} \\
        &= \begin{cases}
            \prod_{z \in \mcl X_y} 1^1 \prod_{z \not\in \mcl X_y} 0^0 & \text{ if } x \in \mcl X_y \\
            \prod_{z \in \mcl X_y} 1^0 \prod_{z \not\in \mcl X_y} 0^1 & \text{ if } x \not\in \mcl X_y \\
        \end{cases} = \begin{cases}
            1 & \text{ if } x \in \mcl X_y \\
            0 & \text{ if } x \not\in \mcl X_y \\
        \end{cases} \\
    \implies s(\Pmat, \Ker_x)&= \sum_{y=0,1} \Pmat_y(x) \cdot \chi\{x \in \mcl X_y\} = 1 ,
\end{align*}
where we have defined $0^0 = 1$.
\newline 
\indent In contrast, consider a $\tilde{\Pmat}$ that is very misaligned with the clustering structure, such as one that splits the clusters in half as shown in Figure \ref{fig:cluster-align}(b):
\[
    \tilde{\Pmat}_1(z) = \begin{cases}
        1 & \text{ if } z \in \mcl X_+ \\
        0 & \text{ if } z \in \mcl X_{-}
    \end{cases}, \qquad \tilde{p}_2(z) = \begin{cases}
        0 & \text{ if } z \in \mcl X_+ \\
        1 & \text{ if } z \in \mcl X_{-}
    \end{cases}.
\]
Then, defining $\mcl X_{k,\pm} = \mcl X_k \cap \mcl X_{\pm}$, we have 
\begin{align*}
    \prod_{z \in \mcl X} \tilde{\Pmat}_1(z)^{\Ker(x,z)} &= \prod_{z \in \mcl X_{1,+}} 1^{\Ker(x,z)} \prod_{z \in \mcl X_{1,-}} 0^{\Ker(x,z)} \prod_{z \in \mcl X_{2,+}} 1^{\Ker(x,z)} \prod_{z \in \mcl X_{2,-}} 0^{\Ker(x,z)}\\
        &= \begin{cases}
            0 & \text{ if } x \in \mcl X_1 \\
            0 & \text{ if } x \in \mcl X_2 \\
        \end{cases}.
\end{align*}
With a similar computation for $\tilde{\Pmat}_2(z)$, we can see then that $s(\tilde{\Pmat},\Ker_x)= 0$. This simple example highlights how this quantity $s(\Pmat, \Ker_x)$ measures the alignment between the class probabilities of an instance $\Pmat$ and the geometry of the data as reflected by the kernel. 
\newline 
\indent To summarize, the prior probability for a very misaligned $\tilde{\Pmat}$ is $\pi(\tilde{\Pmat} | \alpha_0) = 0$, while the prior probability for a very well-aligned $\Pmat$ is $\pi(\Pmat | \alpha_0) = \lp \prod_{z \in \mcl X} \varphi(\alpha_0\mathbbm{1}, \Pmat(z)) \rp \sum_{x \in \mcl X} w(x) > 0$. This demonstrates the corresponding prior's preference for probability outputs $p$ that are well-aligned with the inherent clustering structure of the dataset.
\end{example}

\begin{figure}
    \begin{subfigure}{0.49\textwidth}
        \includegraphics[width=\textwidth]{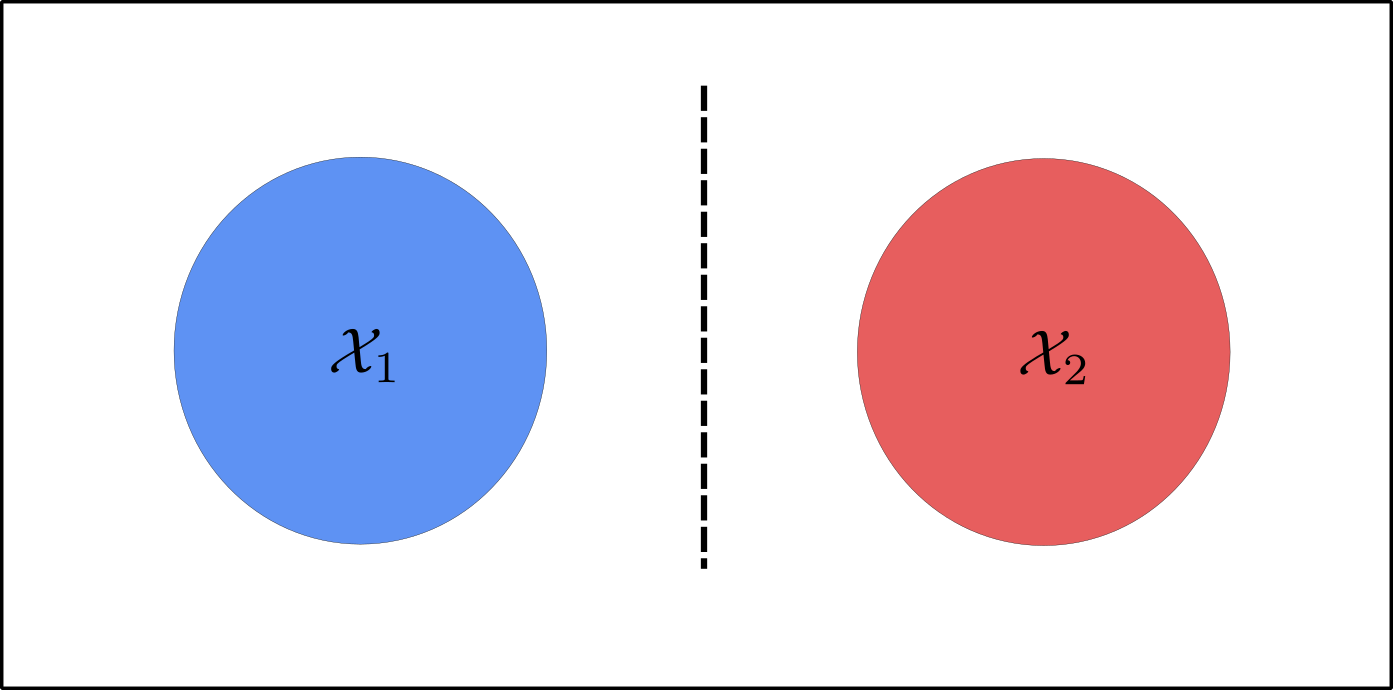}
            \caption{Perfectly aligned $\Pmat$ (i.e., $s = 1$)}
    \end{subfigure}
    \hfill
    \begin{subfigure}{0.49\textwidth}
        \includegraphics[width=\textwidth]{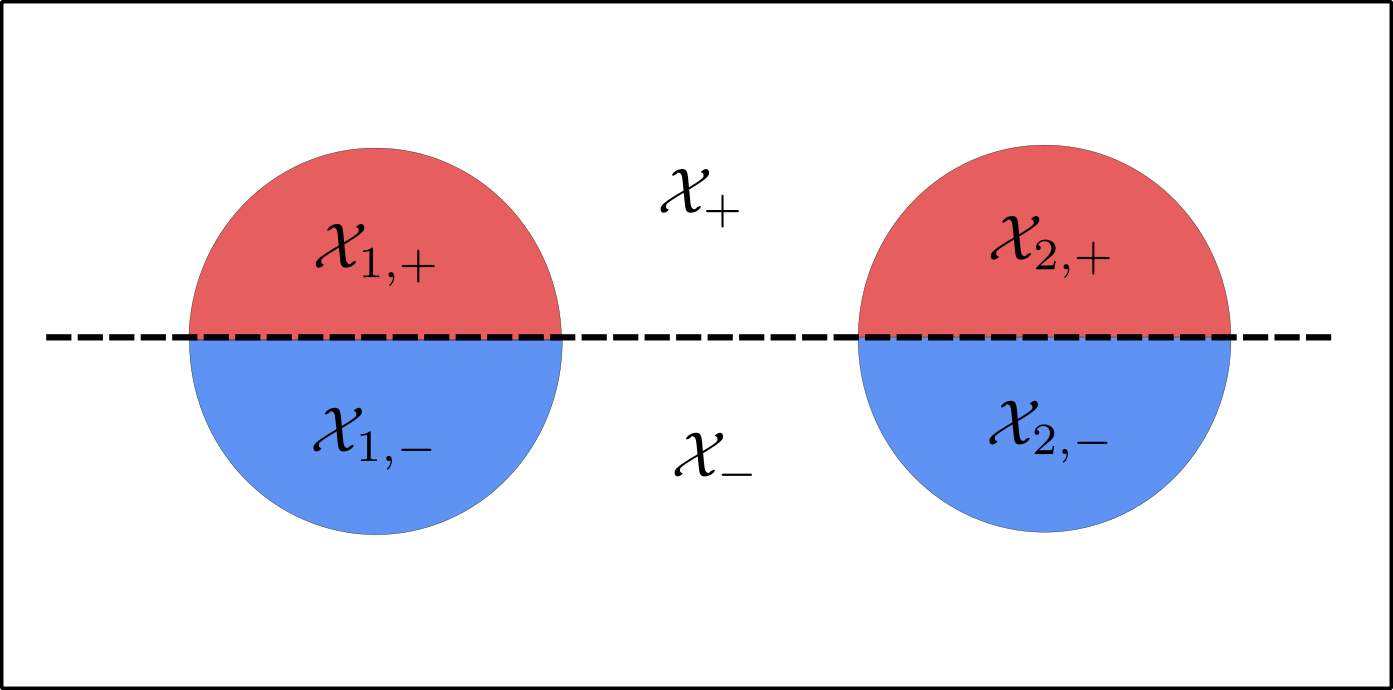}
            \caption{Misaligned $\tilde{\Pmat}$ (i.e., $s = 0$)}
    \end{subfigure}
    \caption{Example of clusters and probability matrix $\Pmat$ setup for Example \ref{ex:kernel-prob-alignment}. The dataset can be written as the union of 2 disjoint clusters, $\mcl X = \mcl X_1 \cup \mcl X_2$, where the circles respectively represent $\mcl X_1$ and $\mcl X_2$. The coloring of regions represents the classification of the points in each cluster and the dotted line denotes the decision boundary between classes according to the different probability matrices $\Pmat$. In the case of panel (a), the classification proposed by $\Pmat$ perfectly aligns with the clustering structure and leads to a large value of $s = 1$. In contrast, the horizontal decision boundary characterizing $\tilde{\Pmat}$ in panel (b) is exactly \textit{misaligned} with the clustering structure and results in a value of $s = 0$.  }
    \label{fig:cluster-align}
\end{figure}

\section{Query point selection} \label{sec:query-point-selection}

The vector-valued function $\alpha(x) = \lp \Ker_{\mcl L_1}(x), \Ker_{\mcl L_2}(x), \ldots , \Ker_{\mcl L_K}(x) \rp^T : \mcl X \to \mbb R_+^K$ (which may be organized as a matrix in the case where $\mcl X$ is finite) expresses our information about the probabilities of the different class labels at every point in $\mcl X$, in terms of the underlying Dirichlet random field. As discussed earlier, given a particular semi-supervised method there are many different possible approaches for identifying new points at which to acquire data. For clarity, we introduce now a distinction between an active learning \textit{acquisition function} and \textit{policy}. An acquisition function $\mcl A(x; \mcl L_1, \mcl L_2, \ldots, \mcl L_K)$ evaluated on inputs $x \in \mcl X$ quantifies how useful our model believes it would be for the active learner to query its label. This acquisition function is user-defined and is designed to reflect the desired properties of the query points to be labeled throughout the active learning process. While this function depends on the data points in $\mcl L_1, \ldots, \mcl L_K$ and their associated labels, we will forego explicitly writing this dependence in favor of readability; namely, we will write $\mcl A(x) = \mcl A(x; \mcl L_1, \ldots, \mcl L_K)$ with the understanding of the dependence on the currently labeled data. 

\begin{figure}[t]
    \centering
    \includegraphics[width=0.7\textwidth]{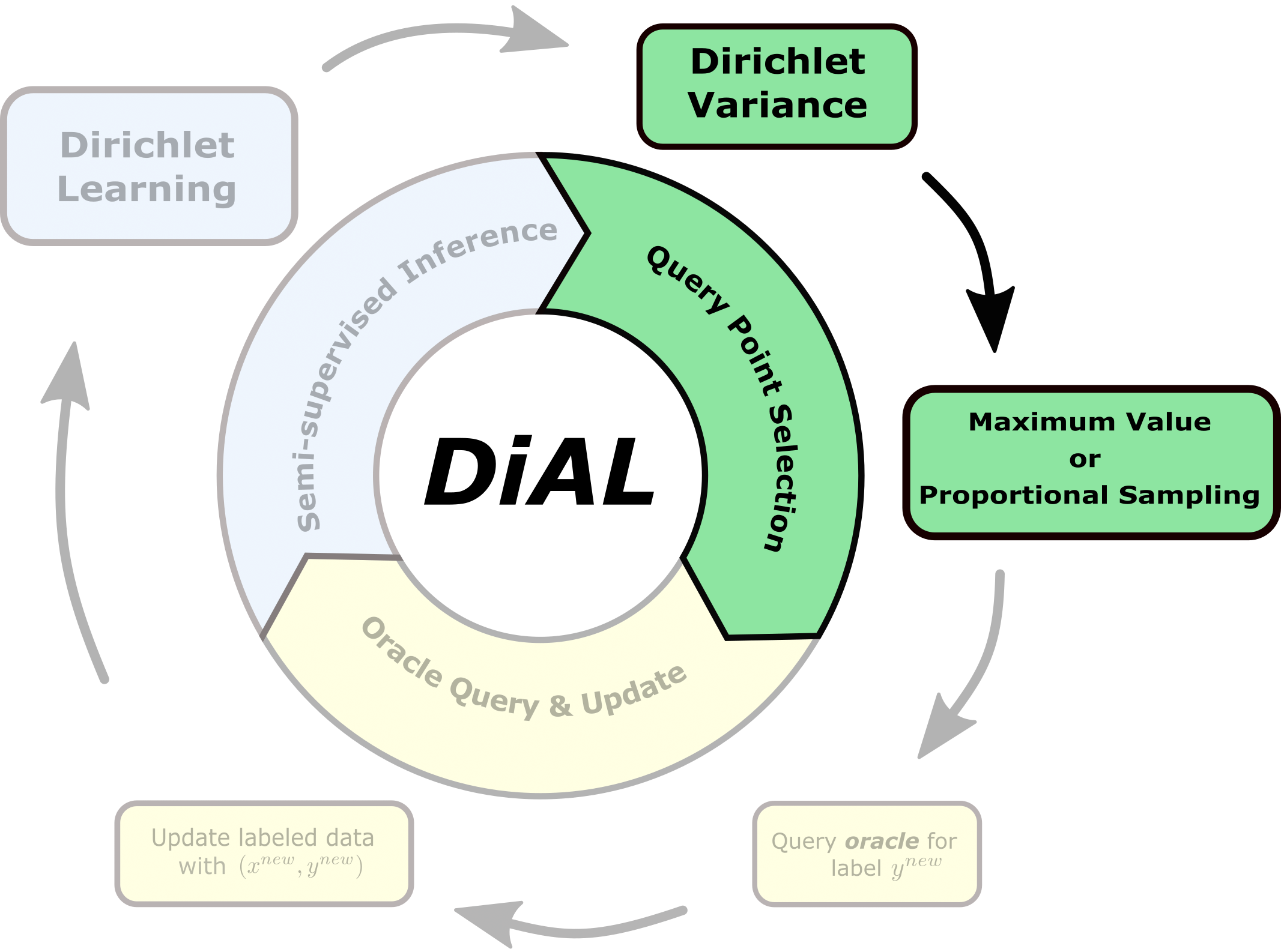}
    \caption{Utilizing the \textit{Dirichlet Learning} semi-supervised classifier introduced in Section \ref{sec:beta-learning}, \textit{DiAL} selects query points using the \textit{Dirichlet Variance} acquisition function \eqref{eq:dirichlet-variance-af} with either the \textit{Maximum Value} or \textit{Proportional Sampling} policies.}
    \label{fig:dial-qps}
\end{figure}

Now, with a chosen acquisition function it remains to decide how to select the next query point $x^\ast$ from the set of acquisition function values on the unlabeled data, $\{\mcl A(x)\}_{x \in \mcl U}$. We refer to the method for selecting the query point from said values as the active learning \emph{policy}. Among many possible choices, we focus our attention on two natural choices: (1) \textit{Maximum Value} and (2) \textit{Proportional Sampling}.

Maximum Value policy chooses to query the label of the point that maximizes $\mcl A$ on the unlabeled data
\[
    x^\ast_{MV} = \argmax_{x \in \mcl U} \ \mcl A(x).
\]
The majority of sequential active learning methods previously proposed fall under this category \citep{settles_active_2012, miller_model-change_2021, balcan_margin_2007, miller2023unc, zhu_combining_2003, ma_sigma_2013, ji_variance_2012, jiang_minimum-margin_2019}, where cases of acquisition functions that are to be minimized can be equivalently rephrased to maximize the negative of acquisition functions values.

Proportional Sampling selects query points via randomly sampling according to a probability distribution derived from the acquisition function values over the set of unlabeled inputs; for example, we can select $x^\ast_{PS} = x$ for $x \in \mcl U$ with probability 
\[
    q(x) \propto \exp\lp\lambda \mcl A(x)\rp 
\]
for scaling factor $\lambda > 0$. This distribution over $x \in \mcl U$ encourages the selection of points with larger acquisition function values at a given iteration. 
Note that as $\lambda \rightarrow \infty$, this distribution concentrates on the maximizer $x^\ast = \argmax \mcl A(x)$, whereas as $\lambda \rightarrow 0^+$, the distribution converges to a uniform distribution over the unlabeled data. In Section \ref{subsec:lambda-choice}, we discuss how we choose $\lambda > 0$ for our numerical experiments, and we identify some properties of the choice of $\lambda$ in an asymptotic regime of DiAL with Prop.~Sampling.

We note that this kind of ``softmax'' scaling for a sampling distribution has recently been used in other active learning works to encourage diverse batches of query points \citep{kirsch2022stochastic} and to correct for the sampling bias of uncertainty sampling with classifiers found via empirical risk minimization \citep{zhan2022asymptotic}. Furthermore, a similar idea of ``proportional sampling'' has been used in randomized numerical linear algebra methods such as \citep{deshpande2006adaptivesampling, musco2017adaptivesampling, chen2023randomly} for column subset selection and low-rank matrix approximation of positive semi-definite matrices. Depite this shared idea of proportional sampling distributions for selecting inputs, the nature of our theoretical results is quite distinct from these previous works.

With the concepts of acquisition functions and policies in hand, we next introduce a few natural examples of acquisition functions.

\subsection{Uncertainty sampling}

One common active learning approach is to query points where the current classifier has the most ``uncertainty'' about the class selection, a framework oftent referred to as \textit{uncertainty sampling} \citep{settles_active_2012, miller2023unc}. This is often expressed by selecting points where the classifier's output class probabilities are most alike. In the context of binary classification, we could quantify this uncertainty using the ``smallest margin'' acquisition function
\[
    \mcl A_{unc}(x) =  - \lp \max_{k=1,2} \hat p_k(x) - \min_{k=1,2} \hat p_k(x) \rp  = - |\hat p_1(x) - \hat p_2(x)|,
\]
where here the $\hat p$ are class probabilities outputted by a semi-supervised algorithm, for example using \eqref{eq:pki} from our proposed Dirichlet Learning model. We have defined the acquisition function to be the \textit{negative} of the margin value $|\hat p_1(x) - \hat p_2(x)|$ in order to obey our policy convention of \textit{maximization}. With this choice of acquisition function, then the MV policy would select the query point $x^\ast_{MV} = \argmax_{x \in \mcl X} \mcl A(x) = \argmin_{x \in \mcl X} |\hat p_1(x) - \hat p_2(x)|$: this should focus on labeling points that are ``closest'' to the current classifier's decision boundary. 
For the multi-class setting various generalizations are possible for the margin value given above: for example, one could use $\max_{k \in [K]} \hat p_k(x) - \min_{k \in [K]} \hat p_k(x)$ or the difference in the probabilities of the two most likely classes.

While one could directly apply this uncertainty sampling acquisition function to our Dirichlet Learning model outputs at each iteration, one of the advantages of the Dirichlet random field approach is that we possess significantly more information than just the mean probability estimator. In particular, we could instead define uncertainty in the classifier's outputs for $x$ to be the variance of the Dirichlet distribution of the class labels at the point $x$, which we recall from \eqref{eq:var} to be 
\begin{equation} \label{eq:dirichlet-variance-af}
    \mcl A_{var}(x) := Tr[C(x)] =  \frac{(\beta(x))^2 - \sum_{k=1}^K (\tilde{\alpha}_k(x))^2}{(\beta(x))^2( \beta(x) + 1)}.
\end{equation} 
where $\beta(x) =  \sum_{k=1}^K \tilde{\alpha}_k(x)$.
This approach uses a very different conceptual approach to defining uncertainty: instead of uncertainty being an issue of similarly probable class labels under a classifier, it instead becomes a lack of information about the probabilities of those class labels. We will term the acquisition function of \eqref{eq:dirichlet-variance-af} to be \textit{Dirichlet Variance}. Using this acquisition function, we may apply either active learning policy for selecting the next query point. To be clear, we respectively term the maximum value and proportional sampling policies of the Dirichlet Variance acquisition function to be ``Dir.\,Var.\,'' and ``Dir.\,Var.\,(Prop)''; see (Table \ref{table:dirvar-defs}) for a summary.

\begin{table}[h!] 
\centering
\begin{tabular}{@{}ccc@{}}\toprule
Name & Policy & Formula\\
\midrule 
Dir.\,Var.\, & \textit{Maximum Value} & $x^\ast = \argmax_{x \in \mcl U} \ \mcl A_{var}(x)$ \\
Dir.\,Var.\,(Prop), & \textit{Proportional Sampling} &  $x \in \mcl U,\ q(x) \sim \exp \lp \lambda \mcl A_{var}(x)\rp$ \\
\bottomrule
\end{tabular}
\caption{Summary of two methods which we study based upon the proposed ``Dirichlet Variance'' acquisition functions, $\mcl A_{var}(x) = \frac{(\beta(x))^2 - \sum_{k=1}^K (\tilde{\alpha}_k(x))^2}{(\beta(x))^2( \beta(x) + 1)}$. The first, Dir.\,Var.\, selects a query point via the maximizer of the Dirichlet variance, while Dir.\,Var.\,(Prop) selects the next query point with probability $q(x) \propto \exp \lp \lambda \mcl A_{var}(x)\rp$ } 
\label{table:dirvar-defs}
\end{table}

\begin{remark} \label{remark:2types-unc}
    We briefly discuss here the various concepts of ``uncertainty'' in active learning. Namely, we suggest there are (at least) two types of uncertainty to consider and model in the active learning process: (1) data-conditional uncertainty and (2) underlying population-level classification uncertainty. Data-conditional uncertainty reflects the idea that given the current labeled data and assumed hypothesis class, how uncertain is the current classifier about the inferred classifications on the unlabeled data? Both the traditional notion of ``uncertainty sampling'' and our proposed Dirichlet random variable's measure of variance reflect two ways of modeling this data-conditional uncertainty. We notice that for Dirichlet Variance, we expect the data-conditional uncertainty to go to zero in the limit of infinitely many labeled data points (see Property~\ref{property:dec-var-more-data} at the end of this section). However, the variance goes to zero at different rates in regions with different population-level uncertainty, a phenomenon we explore in the computations below and in Section \ref{subsec:exploit-analysis}. 
    \newline
    \indent On the other hand, the underlying population-level uncertainty reflects the inherent uncertainty of the data-generating distribution (e.g., regions where class-conditional distributions are large for multiple classes). We also note that in the large, labeled data limit it is natural to guess that the data-conditional uncertainty associated with ``uncertainty sampling'' will approach what we call the population-level uncertainty.
    \newline
    \indent Various types of goals for active learning algorithms can be explained in terms of these uncertainties.  For example, in settings with very few labeled data points the data-conditional uncertainty is expected to be quite high, and the goal of an active learning algorithm is often to appreciably decrease this uncertainty across a wide range of points. This type of behavior is sometimes called exploratory behavior. On the other hand, as the number of labeled data points increases, a possible goal for active learning is to focus attention on regions with high population-level uncertainty, with the goal being to effectively learn high-quality decision boundaries. Good active learning algorithms likely need to balance these two goals and transition reasonably from one to the other as more labels are obtained: we discuss this more in Section~\ref{sec:theory}.
\end{remark} 

\begin{example}[1D visualization of two types of uncertainty] \label{example:2types-unc}
     Consider the binary classification case (with labels $y \in \{0,1\}$ as opposed to $y \in \{1,2\}$) with ground-truth, class-conditional distributions $\rho_0(x) = p(x | y=0)$ and $\rho_1(x) = p(x | y=1)$ shown as the green and gray shaded regions in Figure \ref{fig:unc-demo}(a). 
     The \textbf{black} dashed line represents the {population-level uncertainty}, wherein these class-conditional probabilities are both large. In particular, this black dashed line is computed as $\min\{ p(x,y=0), p(x, y=1)\}$, peaking at the locations when the conditionals are both relatively large. The implicit assumption when applying uncertainty sampling acquisition functions for active learning is that they should focus on sampling in these regions of ``large'' population-level uncertainty. However, as we illustrate in this example, regions of high {data-conditional uncertainty} according to a given model class of functions do not necessarily reflect population-level uncertainty.
     
     \indent The example begins with two initially labeled points, chosen from the largest clusters of the respective class-conditional distributions; in panel (b) these are labeled as blue x's while in panel (c) these are labeled as red squares. Panels (b) and (d) show the evolution of the Dirichlet Variance acquisition function as thirty query points are sequentially selected to maximize Dirichlet Variance at each iteration. The selected query point at each iteration is randomly assigned a label of $y = 1$ with probability $p(y=1 | x)$. Similarly, panels (c) and (e) show the evolution of the smallest margin acquisition function (Unc.\,(SM)) using an SVM classifier using the RBF kernel (specifically for $\Ker(z,x) = \exp\lp -\gamma |z- x|^2\rp$ where we set the kernel bandwidth $\gamma = 2$). Note that while the blue line of panel (d) has larger values in the three regions between the clusters of opposing labels, the red line of panel (e) has maximum values at only the two rightmost regions between clusters. 
     
     \indent The blue and red dotted lines in Figure \ref{fig:unc-demo-kde} respectively show kernel density estimators from the selected query points of the two experiments; note that while query points selected by Dirichlet Variance concentrate around the regions where population-level uncertainty is large, the query points from Unc.\,(SM) sampling have not sampled from leftmost ``uncertainty region''. This illustrates that the regions where a classifier induces data-conditional uncertainty (e.g., smallest margin) do \textit{not} necessarily coincide with the true population-level uncertainty regions. 
     
     \indent We note that the smallest margin acquisition function has not identified the leftmost decision boundary between the large green cluster and the small gray cluster. This is simply due to the label of the initially chosen point and its influence in this setting with simple geometry. Namely, the initial point labeled by the Unc.\,(SM) acquisition function in panel (c) lies halfway in between the initially labeled points. The label of this first query point is assigned randomly according to the relative values of the class-conditional densities, which are approximately equal. With probability roughly 1/2, the resulting label will be class $y=1$ and then the subsequent margin values will focus only on the left half of the domain of the experiment. Our point is not to suggest corrections for this type of smallest margin uncertainty sampling, but rather illustrate that it is important to design acquisition functions that will properly explore the extent of the clustering structure of the dataset so that the resulting regions of data-conditional uncertainty are properly aligned with the regions of population-level uncertainty.

     \indent While this example demonstrates the explorative capabilities of using Dirichlet Variance as an acquisition function and the subsequent reflection of population-level uncertainty, we also note the potential gains of considering the proportional sampling active learning policy we have discussed (see Table \ref{table:dirvar-defs}) as opposed to always selecting the maximizer of the chosen acquisition function. For example, while the Unc.\,(SM) acquisition function is maximized in the right-hand side of the domain throughout the active learning process, proportional sampling would allow queries on the left-hand side and lead to a distribution of query points in all regions of large population-level uncertainty.
\end{example}

\begin{figure} 
    \centering
    \begin{subfigure}{0.7\textwidth}
        \hspace{4em}
        \includegraphics[width=\textwidth]{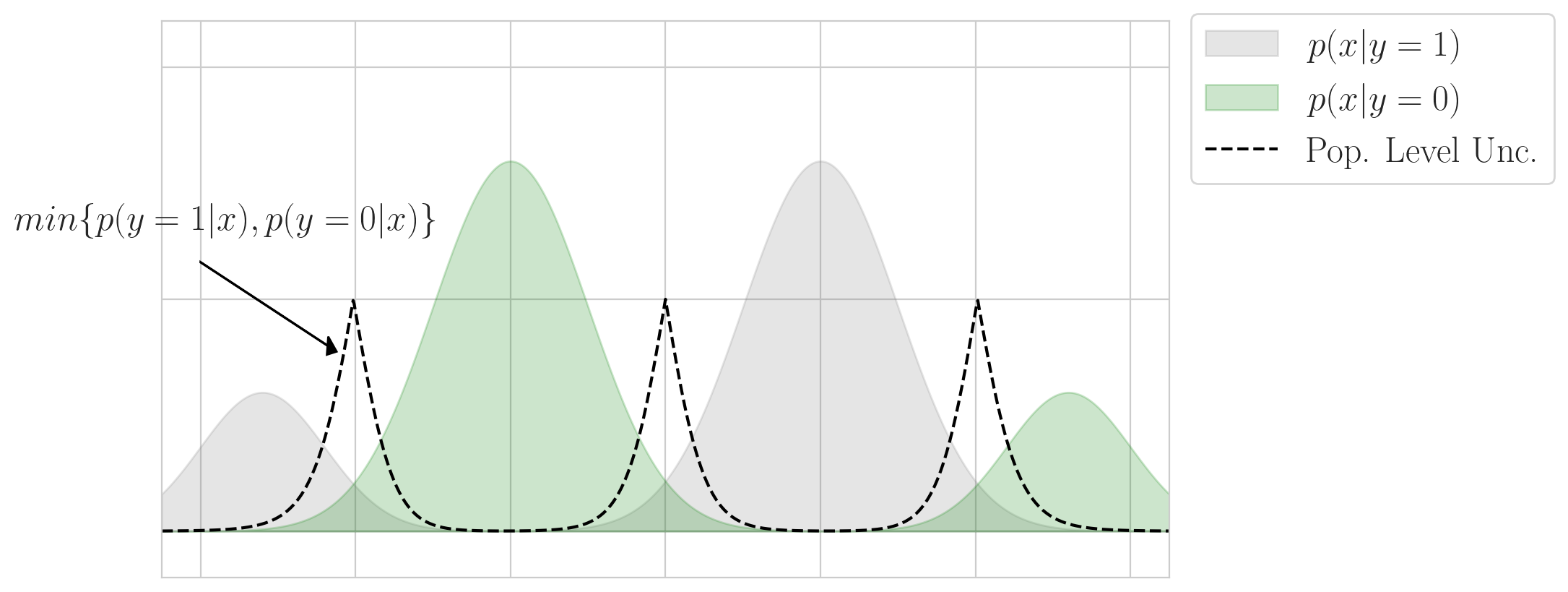}
            \caption{Setup of 2 classes}
    \end{subfigure}
    \\
    \begin{subfigure}{0.49\textwidth}
        \centering
        \includegraphics[width=\textwidth]{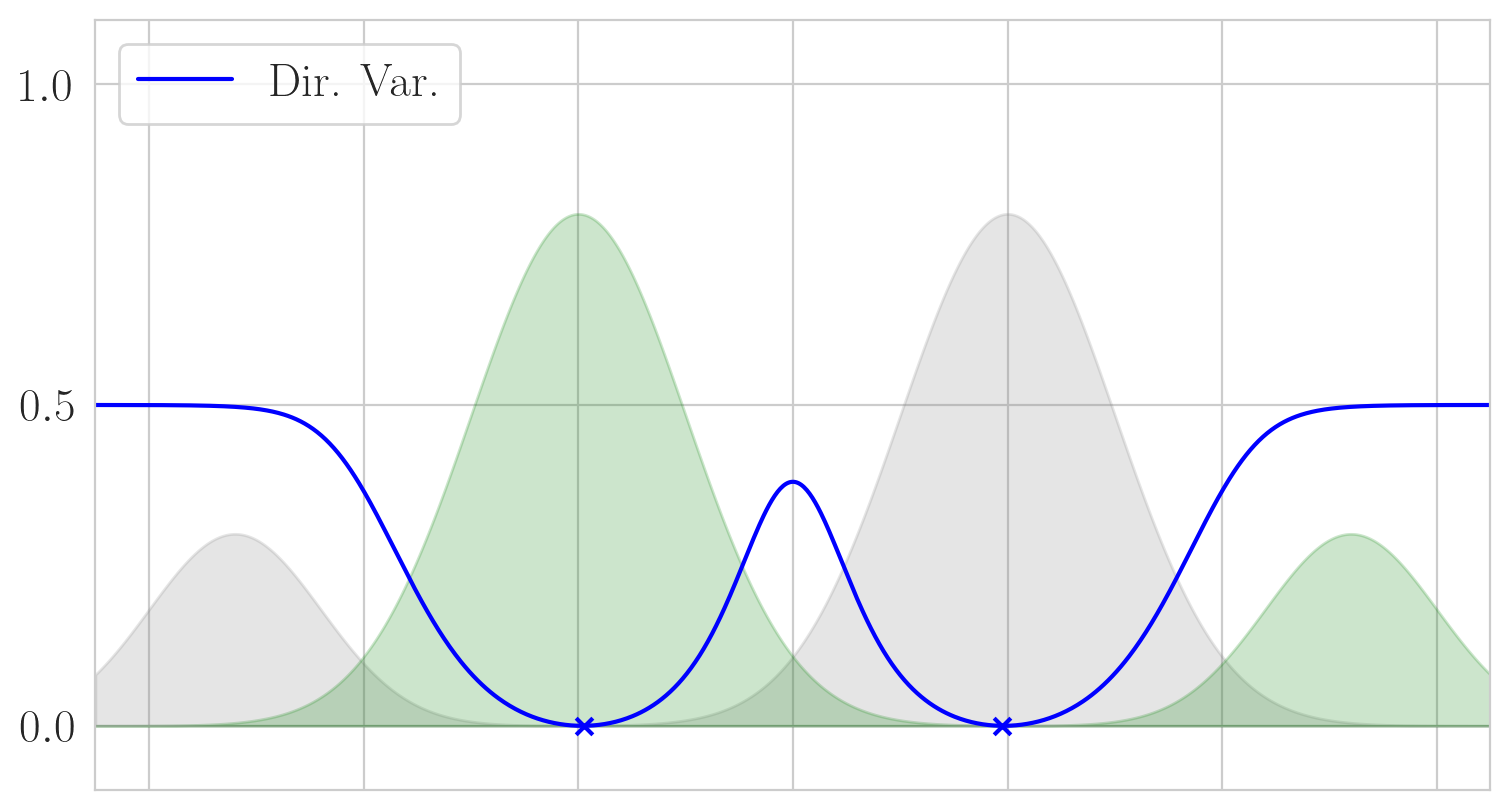}
        \caption{Dir.\,Var.\, at Iter 1}
    \end{subfigure}
    \begin{subfigure}{0.49\textwidth}
        \centering
        \includegraphics[width=\textwidth]{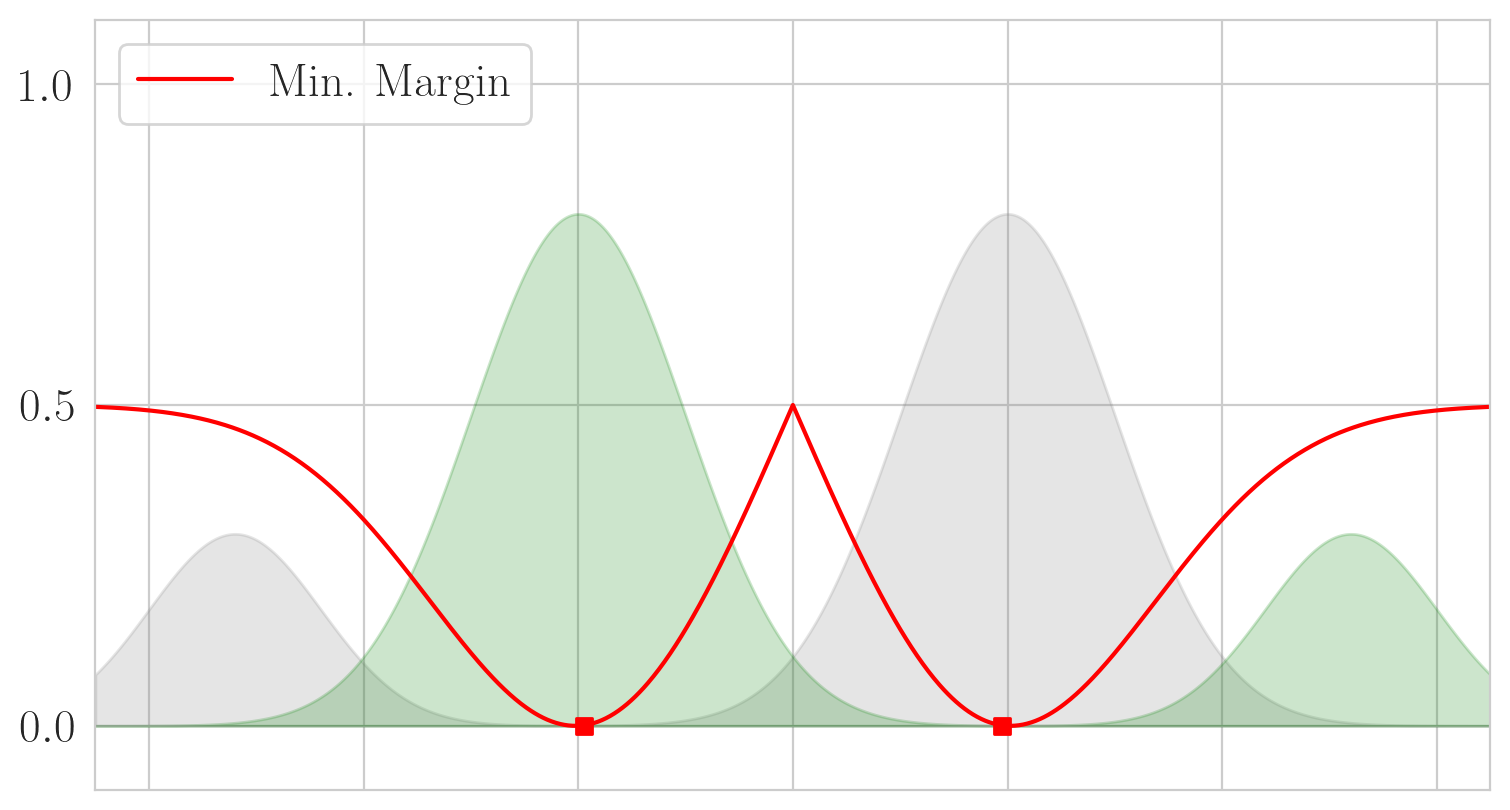}
        \caption{Unc.\,(SM) at Iter 1}
    \end{subfigure}

    \begin{subfigure}{0.49\textwidth}
        \centering
        \includegraphics[width=\textwidth]{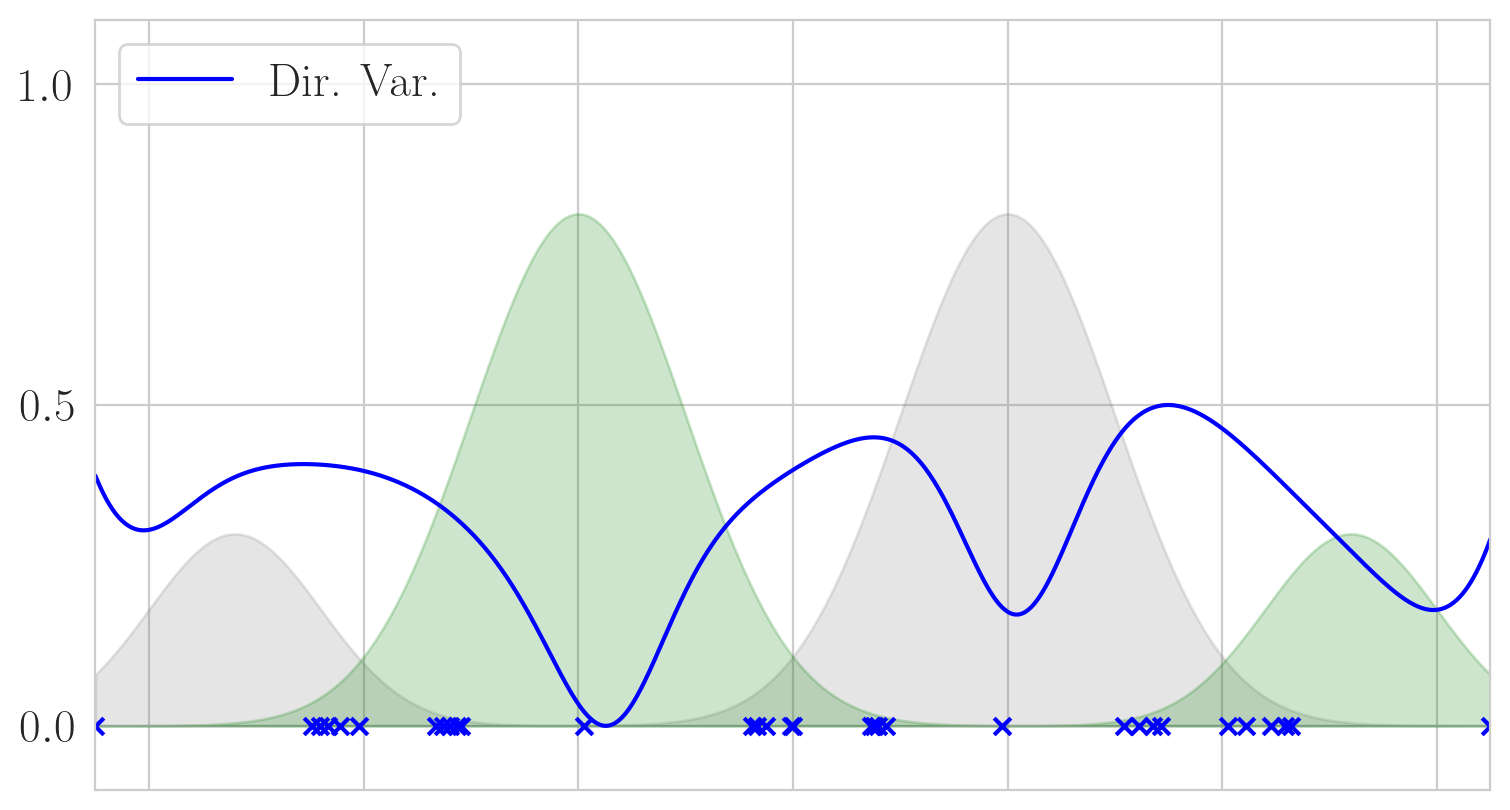}
        \caption{Dir.\,Var.\, at Iter 30}
    \end{subfigure}
    \begin{subfigure}{0.49\textwidth}
        \centering
        \includegraphics[width=\textwidth]{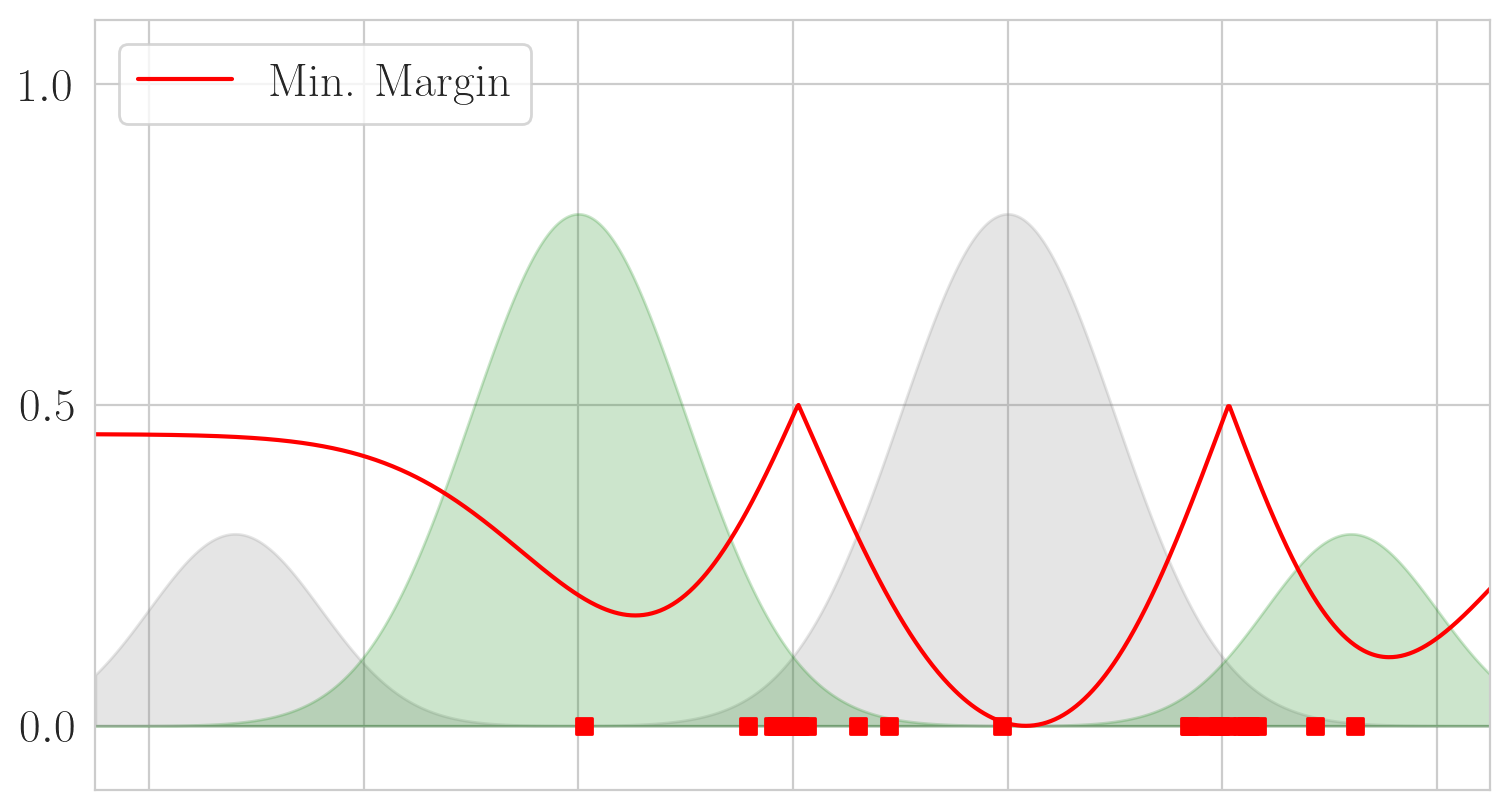}
        \caption{Unc.\,(SM) at Iter 30}
    \end{subfigure}

    \begin{subfigure}{0.49\textwidth}
        \centering
        \includegraphics[width=\textwidth]{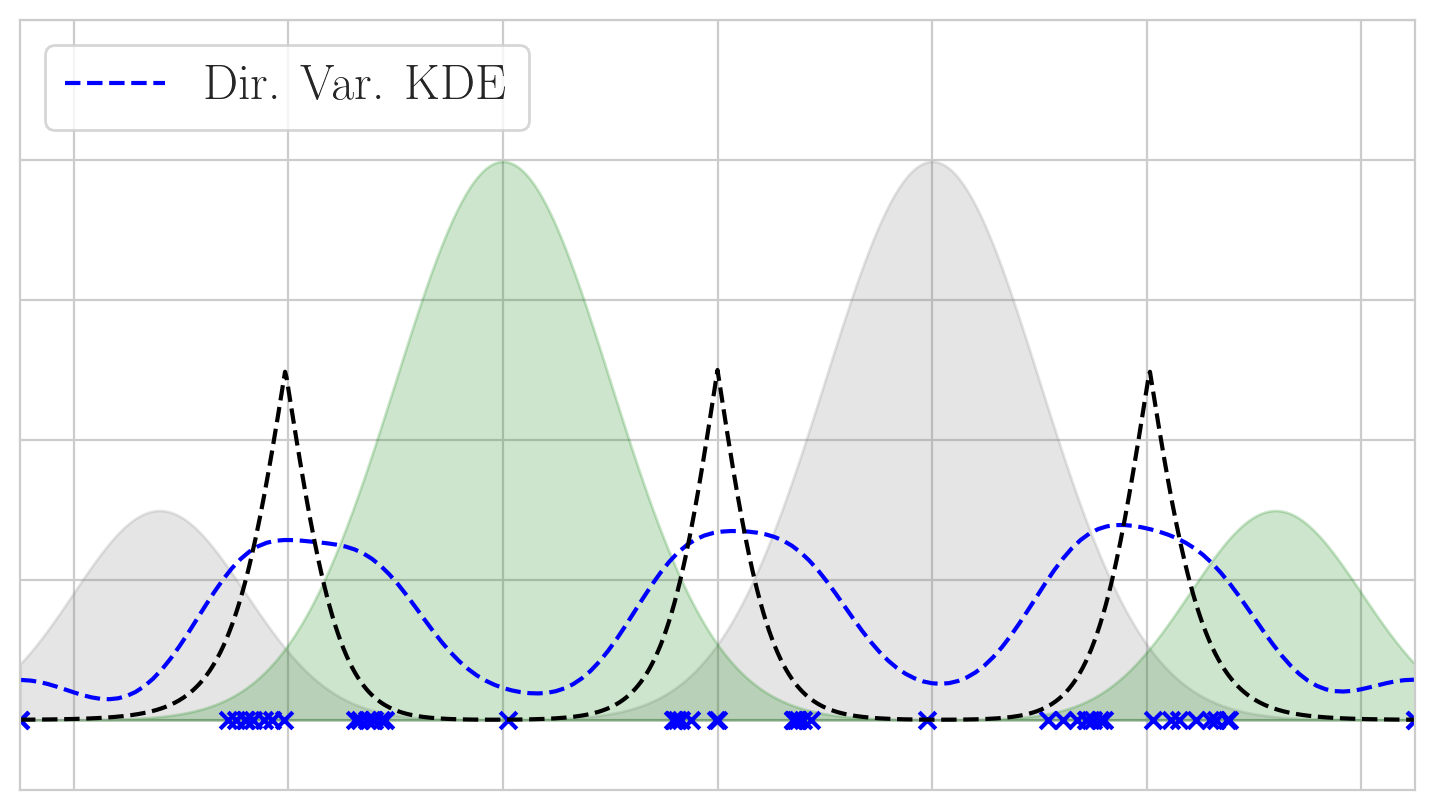}
        \caption{KDE of choices from Dir.\,Var.\,}
    \end{subfigure}
    \begin{subfigure}{0.49\textwidth}
        \centering
        \includegraphics[width=\textwidth]{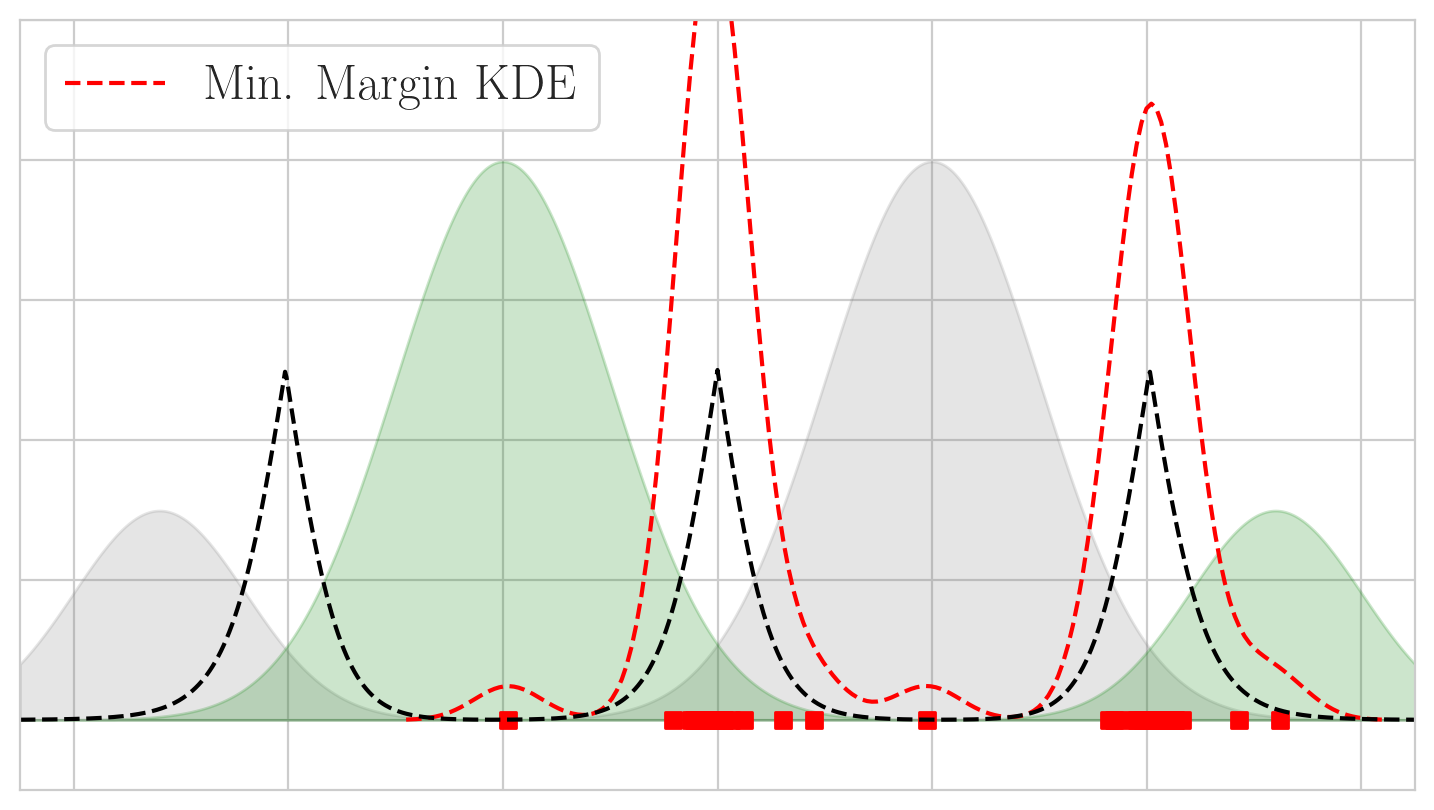}
        \caption{KDE of choices from Unc.\,(SM)}
    \end{subfigure}
    \caption{Demonstration of \textbf{population-level} and \textbf{data-conditional} uncertainties in a toy example of a mixture of Gaussians in one dimension. The dashed black line represents a measure of the population-level uncertainty for the given setup; this is maximized in regions where the class-conditional densities are equal, $p(x|y=1) = p(x|y=0)$. In panels (b)-(e), we plot data-conditional uncertainties for the Dirichlet Variance (blue) and Unc.\,(SM) (red) acquisition functions at various iterations. See Example \ref{example:2types-unc} for further experiment details.}
    \label{fig:unc-demo}
\end{figure}

\begin{figure}
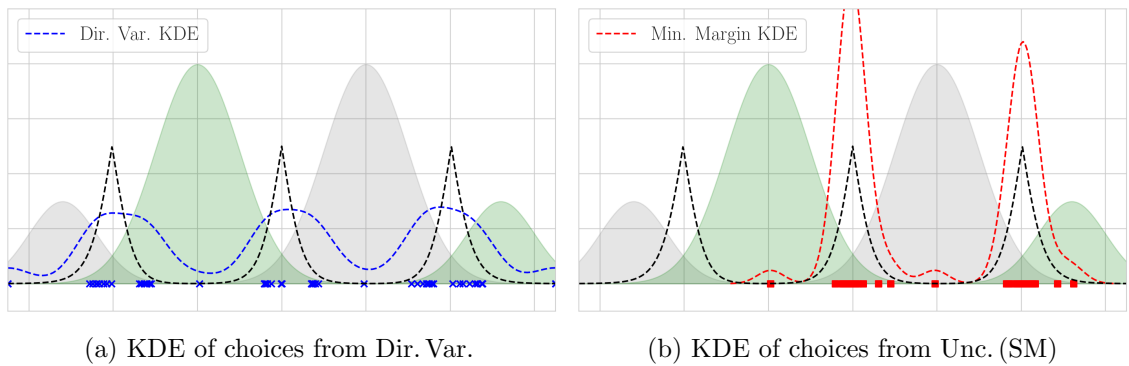

    \centering
    \begin{subfigure}{0.49\textwidth}
        \centering
        \includegraphics[width=\textwidth]{fig/unc_demo_new/dir_var_kde.png}
        \caption{KDE of choices from Dir.\,Var.\,}
    \end{subfigure}
    \begin{subfigure}{0.49\textwidth}
        \centering
        \includegraphics[width=\textwidth]{fig/unc_demo_new/mm_kde.png}
        \caption{KDE of choices from Unc.\,(SM)}
    \end{subfigure}
    \caption{Visualization of a kernel density estimator (KDE) of the choices made from the first $50$ query points by Dir.\,Var.\, and Unc.\,(SM) from the toy example shown in Figure \ref{fig:unc-demo}. Note that the empirical distribution (reflected by the KDE) of the query points selected by Unc.\,(SM) implies that it has not sampled from the leftmost region of large population-level uncertainty due to insufficient exploration earlier on in the active learning process.}
    \label{fig:unc-demo-kde}
\end{figure}

We remark that other authors have considered variance as a selection criterion for active learning \citep{ji_variance_2012, ma_sigma_2013}. For example, both Variance Optimization (VOpt) \citep{ji_variance_2012} and $\Sigma$-Optimality ($\Sigma$Opt) \citep{ma_sigma_2013} utilize functions of the variance of Gaussian random fields with covariance structure dependent on a similarity graph as a measure of uncertainty. However, in those contexts, the observed class labels do not affect the variance of the random fields, and hence VOpt and $\Sigma$Opt serve primarily to ensure that labeled points are spread evenly across the available data: in the language of the previous remark these algorithms are primarily exploratory in nature, and in a way that is label ambivalent. In contrast, the use of Dirichlet random fields here allows the acquisition of labeled points to be informed by the labels of the points themselves.

It is useful here to recall a few properties of the variance of a Dirichlet distribution, and give their interpretation in the context of active learning.
\begin{property}[Variance decreases in total observations] \label{property:dec-var-more-data}
    Given Dirichlet distributions with parameter $t\alpha$ with $t > 0, \alpha \in \mbb R^K_+$ we have that the variance is \textit{monotonically decreasing in $t$}. When $t$ is large, the variance is of order $t^{-1}$. This indicates that, holding the proportions of class observations constant, the variance decreases as we observe more data. This means that after each new acquisition of data in our active learning algorithm our variance will decrease in expected value.
\end{property}
\begin{property}[Non-monotonicity of variance]\label{property:non-monotone}
    The variance of the Dirichlet distribution is not monotone in its individual components. This is readily seen in Figure \ref{fig:var-beta-heatmap}. This means that following a new query, it is possible that the variance of the Dirichlet distribution at some points may actually increase. Consequently, our proposed algorithm is not guaranteed to be monotone in its uncertainty, a property which is used to provide performance guarantees for some classes of acquisition functions (e.g., VOpt and $\Sigma$Opt \citep{ma_sigma_2013}).
\end{property}
\begin{property}[Reduction to uncertainty sampling] \label{property:reduce-to-unc-sampling}
    Given a particular constant $\mcl O = \sum_{k=1}^K \alpha_k$ the variance of the Dirichlet distribution is maximized when $\alpha_{k_1} = \alpha_{k_2}$ for all $k_1,k_2 \in [K]$. This implies that if the number of total pseudo-labels (i.e., implicit class observations) is comparable across different $x$'s then the uncertainty is measured to be higher at points where class probabilities are similar. Described in another way, if $\beta(x)$ for each $x$ were approximately constant during the active learning process, then the algorithm would reduce to a form of uncertainty sampling that would select query points in regions where the current classifier has large levels of ``data-conditional'' uncertainty, see Remark \ref{remark:2types-unc}.
\end{property}

\begin{figure}
    \begin{subfigure}{0.5\textwidth}
        \centering
        \vspace{-1em}
        \includegraphics[width=0.88\textwidth]{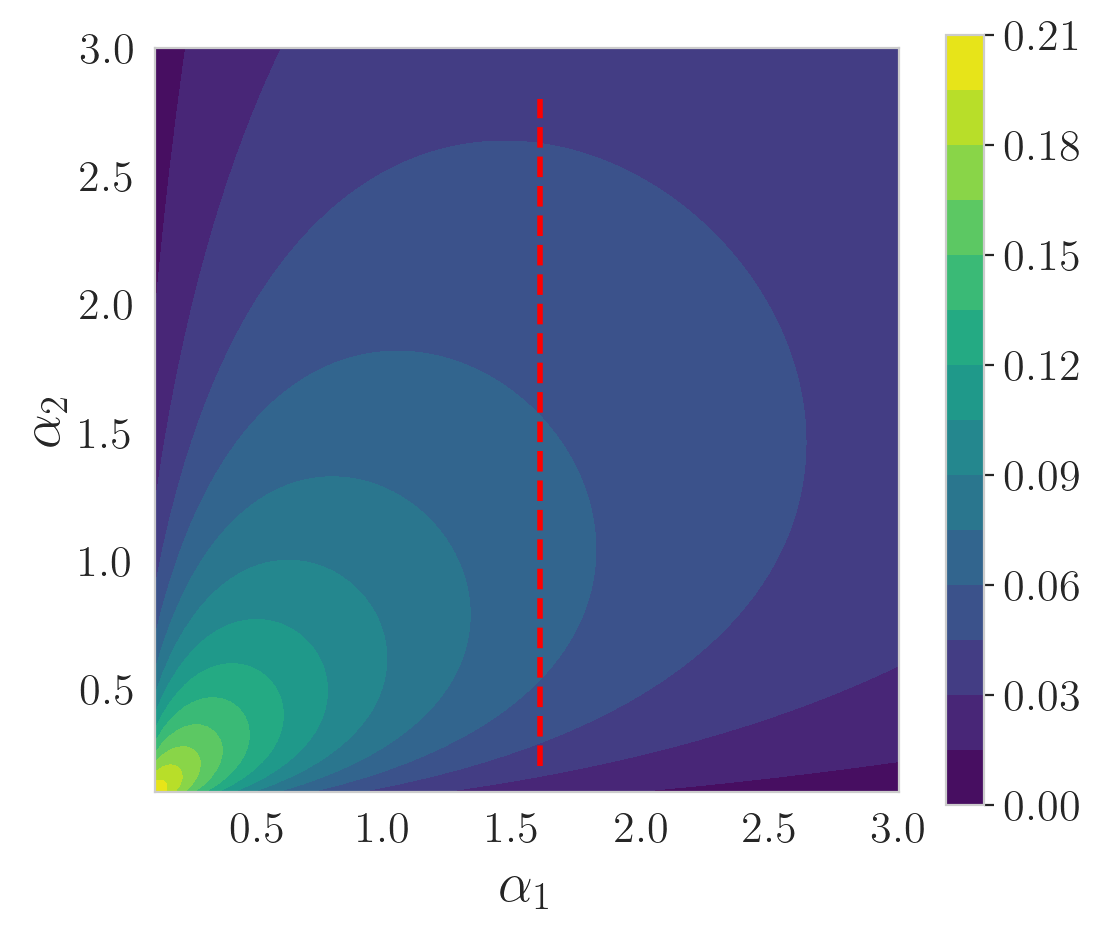}
        \caption{Variance of $Dir(\alpha_1, \alpha_2)$}
    \end{subfigure}
    \begin{subfigure}{0.5\textwidth}
        \centering
        \includegraphics[width=0.8\textwidth]{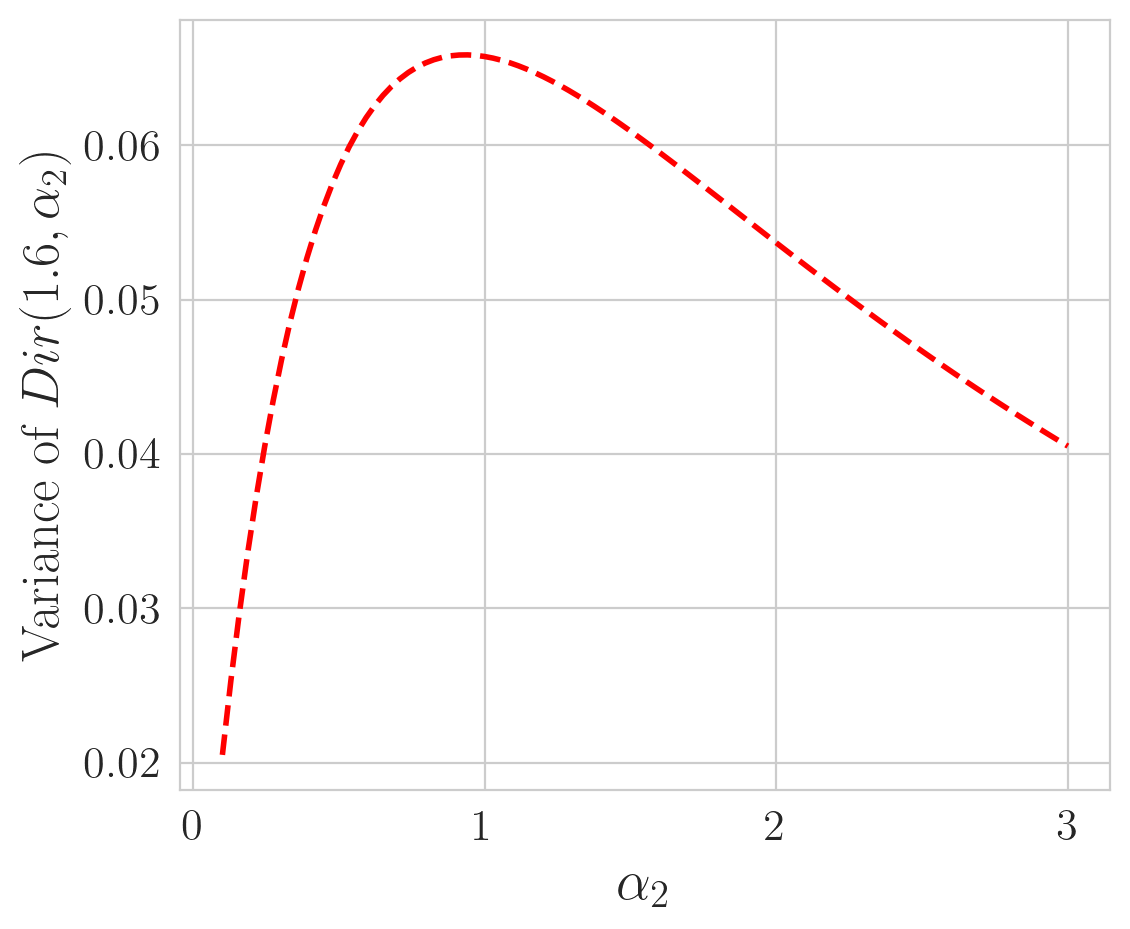}
        \vspace{0.5em}
        \caption{Variance as function of $\alpha_2$, holding $\alpha_1 = 1.6$ constant.}
    \end{subfigure}
    \caption{Demonstration of the non-monotonicity of variance of Dirichlet random variables \textit{in individual components, $\alpha_i$}. Panel (a) is a heatmap of $Var[P] = \frac{\alpha_1 \alpha_2}{(\alpha_1 + \alpha_2)^2 (\alpha_1 + \alpha_2 + 1)}$ for $P \sim Dir(\alpha_1, \alpha_2)$. The red dotted line in (a) represents a slice where $\alpha_1 = 1.6$ is held constant, and panel (b) highlights the corresponding non-monotonicity of the variance as $\alpha_2$ is varied.}
    \label{fig:var-beta-heatmap}
\end{figure}


\section{Numerical Experiments} \label{sec:results}



In this section, we present numerical results to evaluate the utility of our proposed DiAL framework utilizing a graph-based Dirichlet Learning model and the straightforward use of Dirichlet Variance as an acquisition function for active learning. In Subsection \ref{subsec:hsi-results}, we begin with an application to pixel classification in hyperspectral imagery (HSI) in order to compare with the Learning by Active Non-linear Diffusion (LAND) algorithm introduced in \citep{murphy_unsupervised_2019}. The experiments in Subsection \ref{subsec:exploration-results} use a setup involving the common machine learning benchmark datasets MNIST \citep{lecun-mnisthandwrittendigit-2010} and FASHIONMNIST \citep{xiao2017fashionmnist} to assess the explorative capabilities of active learning methods; this setup was introduced recently in \citep{miller2023unc}. We calculate the average semi-supervised classification accuracy as a function of the size of the labeled set over 10 trials in each experiment to compare our proposed method with various acquisition functions that have previously been proposed in graph-based active learning. In the experiments of Subsection \ref{subsec:exploration-results}, we also track the proportion of clusters that each acquisition function has sampled from as a function of the labeled set size in order to better understand the explorative capabilities of the compared methods. In Table \ref{table:acq_funcs}, we list the various acquisition functions that we use in our experiments to compare with our proposed acquisition functions.

\begin{table}[h!] 
\centering
\begin{tabular}{@{}cccccc@{}}\toprule
Abbr. Name & Name & Policy & Ref.\\
\midrule 
Dir.\,Var.\, & Dirichlet Variance & MV & (present work) \\
Dir.\,Var.\,(Prop) & Proportional Dirichlet Variance & PS & (present work) \\
LAND & Learning by Active Nonlinear Diffusion & MV & (Murphy et al.\,, 2019) \\ 
Unc.\,(SM) & Smallest Margin Uncertainty Sampling & MV & \citep{settles_active_2012} \\
VOpt & Variance Optimization & MV & \citep{ji_variance_2012} \\
$\Sigma$Opt & $\Sigma$-Optimality & MV & \citep{ma_sigma_2013} \\
MCVOpt & Model Change with VOpt heuristic & MV & \citep{miller_spie_2022} \\
Rand.\, & Random Selection & N/A  & N/A \\
\bottomrule
\end{tabular}
\caption{Acquisition functions of interest in the experiments in Subsections \ref{subsec:hsi-results} and \ref{subsec:exploration-results}. See Section \ref{sec:query-point-selection} for a detailed explanation of MV (maximum value) and PS (proportional sampling) policies. }
\label{table:acq_funcs}
\end{table}


For all experiments, we compute the corresponding accuracy of our Dirichlet Learning model on the unlabeled data at each iteration of the active learning process. For each experiment (dataset), we have simply used a value of $\tau = 0.1$, which was chosen via trial-and-error. We note that an interesting line of inquiry is to investigate how to properly choose this parameter value; we leave this for future work. 

We highlight two 
main takeaways from our experiments: 

\fbox{ \label{box:takeaways} %
    \parbox{0.9\textwidth}{%
    \begin{minipage}{0.85\textwidth}
        \begin{enumerate}
            \item The Dirichlet Variance acquisition functions \ref{table:dirvar-defs} select query points that empirically improve classifier accuracy comparable to the prior state of the art (LAND, $\Sigma$Opt, VOpt, MCVOpt) with a much lower computational cost and much greater interpretability.
            \item The Dir.\,Var.\,(Prop) acquisition function performs most favorably for exploration and overall performance across the range of experiments compared to the array of acquisition functions using Maximum Value policies.  
            We provide some initial steps in understanding the advantages of proportional sampling in Section \ref{sec:theory}, but further work in this vein is warranted.
        \end{enumerate} 
    \end{minipage}
    }%
}

\vskip1em
We first present the experimental setups and results in Sections \ref{subsec:hsi-results} and \ref{subsec:exploration-results}. Then in Section \ref{sec:results-discussion} we will address these takeaways in more detail with the context of the results from the experiments.

\subsubsection{Choices of hyperparameters $\alpha_0$ and $\lambda$} \label{subsec:lambda-choice}

We briefly comment on the choice of the uniform prior constant $\alpha_0 \geq 0$ and the inverse temperature parameter $\lambda > 0$ (for the Proportional Sampling active learning policy, see Table~\ref{table:dirvar-defs}) in our experimental results. While the derivation of ``optimal'' values for either of these hyperparameters would be an interesting line of inquiry with practical importance, it lies outside the scope of this current work.

We use a simple heuristic for the selection of both parameters that focuses on locality in the clustering structure. Namely, the choice of $\alpha_0$ ensures that it is on the order of the expected value of the Poisson graph-based propagation $\Ker_P(z,x)$ for nodes $z,x \in \mcl X$ that are relatively ``local'' to one another in the graph. This expectation is roughly approximated via a small random sample of source nodes and the notion of locality comes from the $100(\hat{K}-1)/\hat{K}$th percentile of propagation values, where $\hat{K}$ is an overestimate of the number of clusters $K$ contained in the dataset.

Similarly, the choice of $\lambda$ emphasizes 
the $100(\hat{K}-1)/\hat{K}$th percentile of acquisition function values at each iteration. This has the effect of biasing the sampling toward the $1/\hat{K}$ fraction of largest values in order to ``focus the sampling'' on regions where the acquisition function value is larger. While one doesn't necessarily have access to the true value of $K$ in practice, we suggest that \textit{overestimating} the number of clusters in the dataset is a reasonable idea since the limiting case of $\lambda \rightarrow \infty$ corresponds to the Maximum Value policy--which already performs quite well in our experiments here. In our experiments, we take $\hat{K} = 2K$ as an overestimate of the number of clusters.

\subsection{HSI Results} \label{subsec:hsi-results}

We first demonstrate the effectiveness of using DiAL for improving pixel classification in two commonly studied hyperspectral imagery (HSI) datasets, Salinas-A and Pavia, mimicking the setup presented in \citep{murphy_unsupervised_2019, cloninger_cautious_2021}. The goal is to classify the pixels in the image into material classes based on the samples from the different wavelengths.
The Salinas A dataset is a common HSI dataset that contains 7,138 total pixels in an $83 \times 86$ image in $d = 224$ wavelengths. This is an image of Salinas, USA taken with the Aviris sensor and contains $6$ classes of plant types arranged in a diagonal pattern (see Figure \ref{fig:salinas-gt}). 
The Pavia dataset we use here consists of a $270 \times 50$ subset of the original Pavia dataset and has spatial resolution 1.3m/pixel. The image contains 6 spatial classes, and was taken over Pavia,
Italy by the ROSIS sensor (Figure \ref{fig:pavia-gt}). Both of these hyperspectral datasets are available online at \url{http://www.ehu.eus/ccwintco/index.php/Hyperspectral_Remote_Sensing_Scenes}.

\begin{figure}
\begin{subfigure}{0.3\textwidth}
    \centering
    \includegraphics[width=0.8\textwidth]{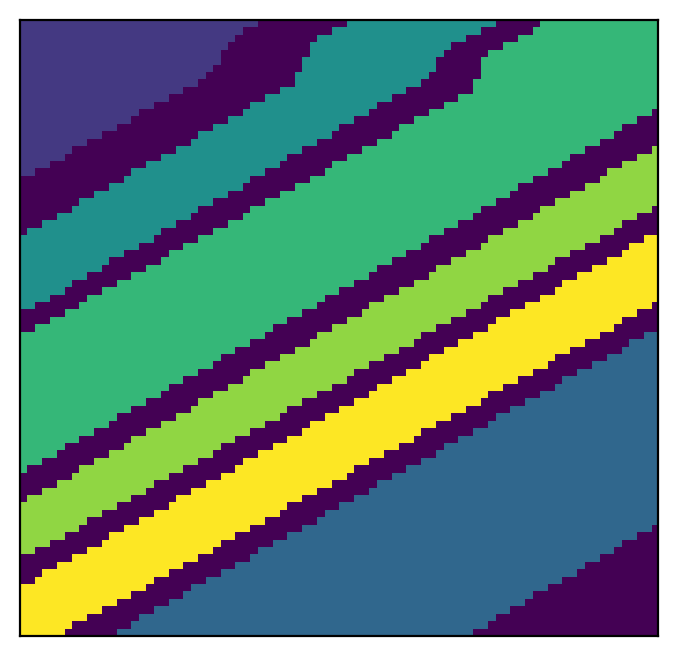}
    \caption{Salinas A}
    \label{fig:salinas-gt}
\end{subfigure}
\hspace{-4em}
\begin{subfigure}{0.9\textwidth}
    \centering
    \includegraphics[width=0.8\textwidth]{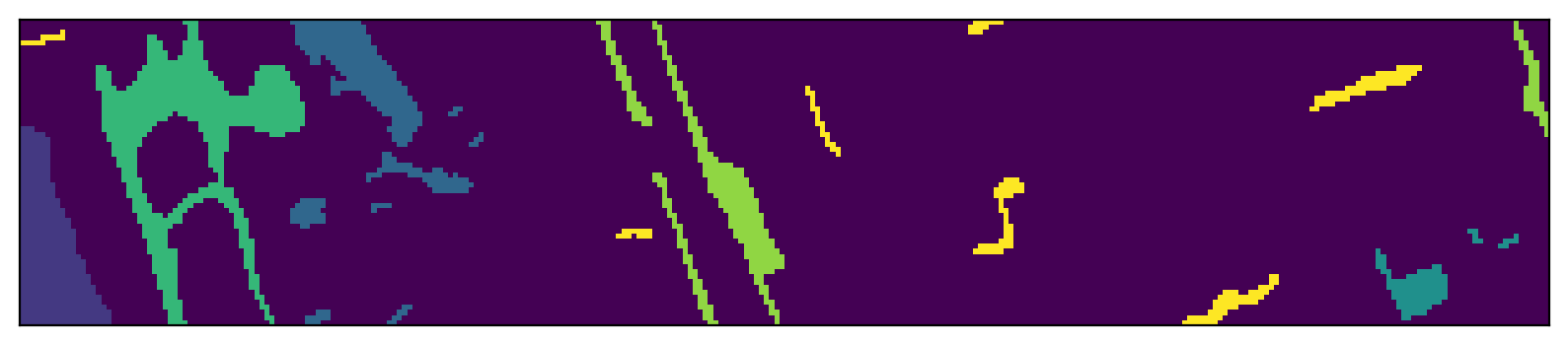}
    \caption{Pavia}
    \label{fig:pavia-gt}
\end{subfigure}
\caption{Ground truth classifications of Salinas A (a) and Pavia (b) HSI datasets.}
\end{figure}

\begin{figure}
    \begin{subfigure}{0.5\textwidth}
        \centering
        \includegraphics[width=\textwidth]{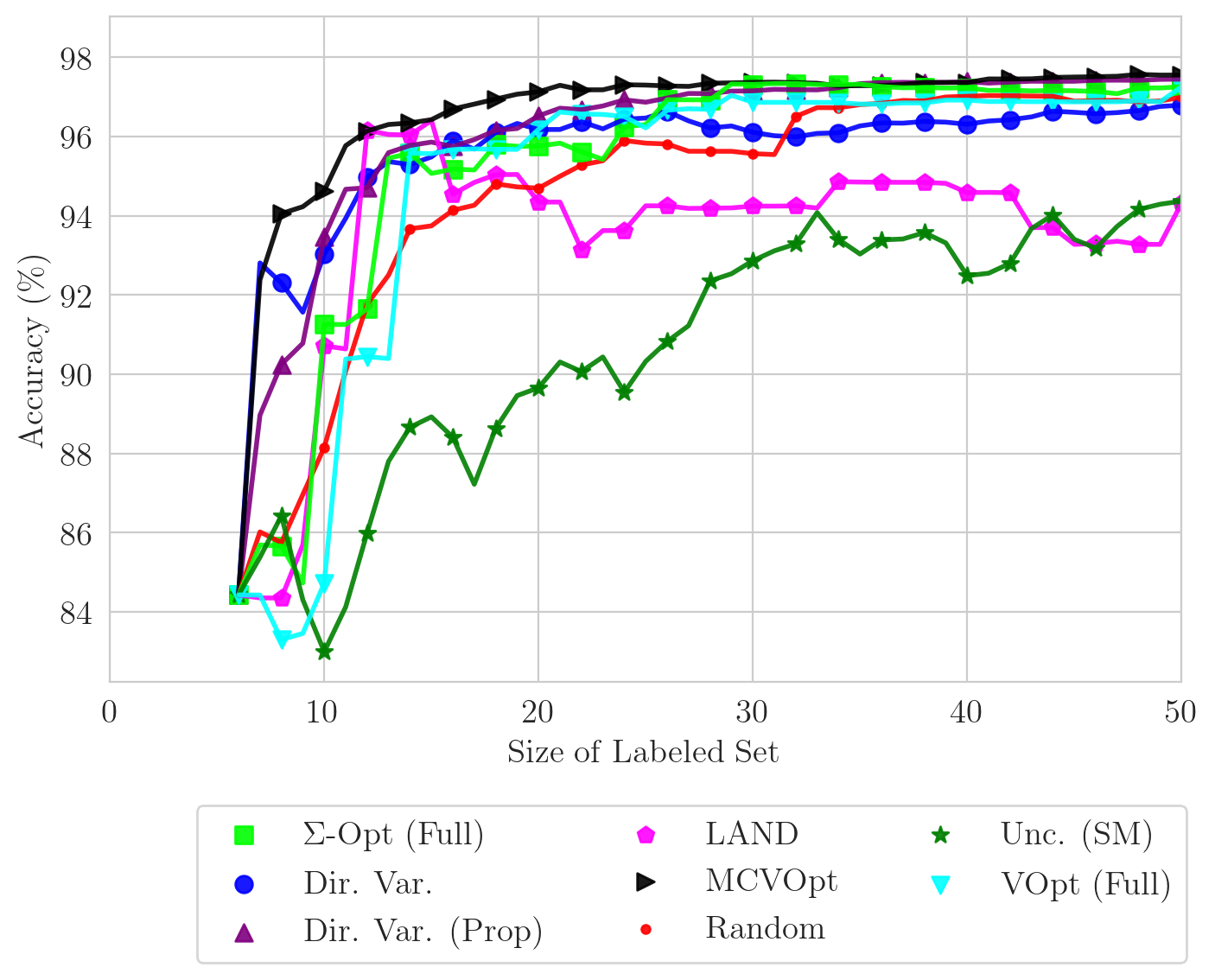}
        \caption{Salinas}
        \label{fig:salinas-metrics}
    \end{subfigure}
    \begin{subfigure}{0.5\textwidth}
        \centering
        \includegraphics[width=\textwidth]{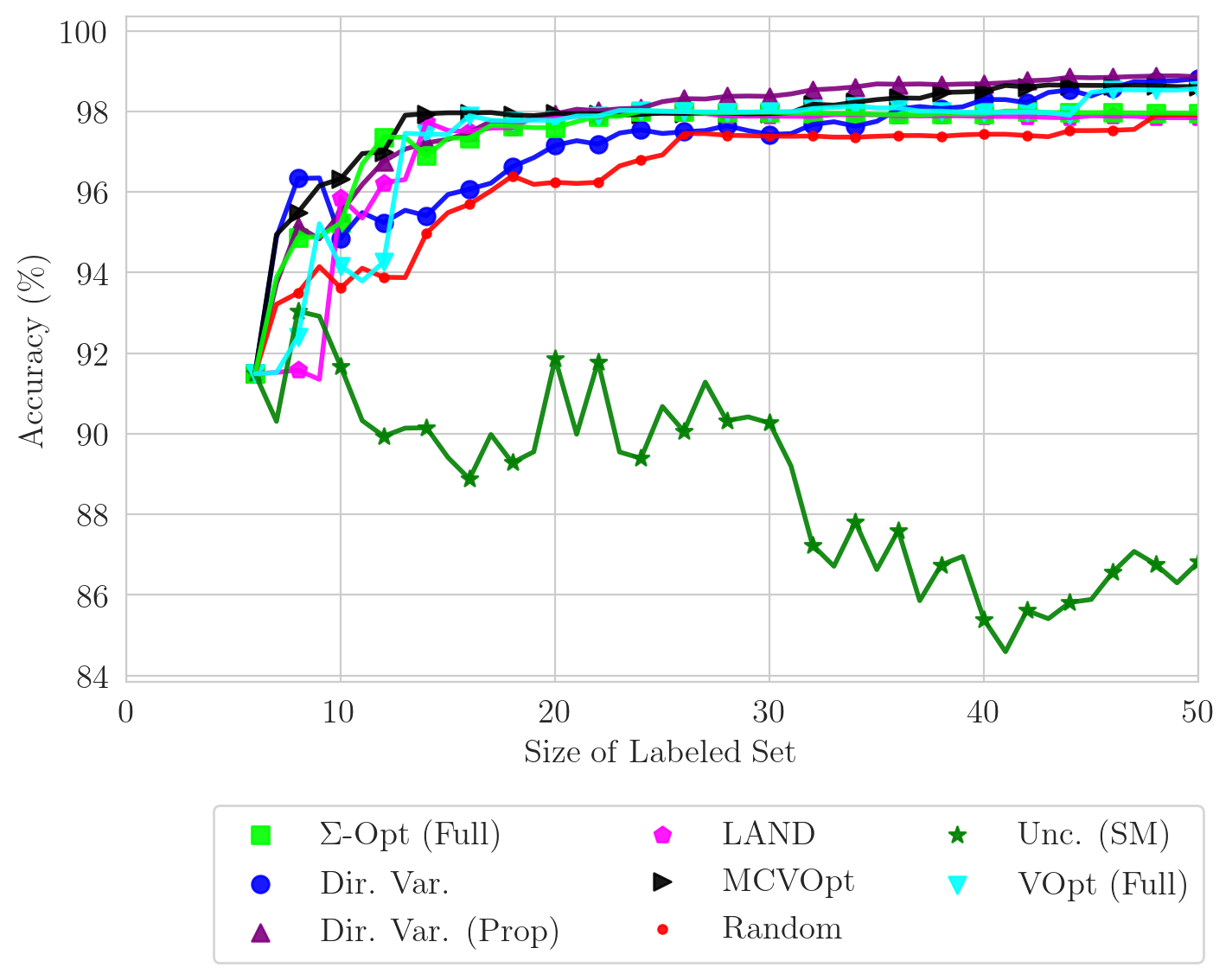}
        \caption{Pavia}
        \label{fig:pavia-metrics}
    \end{subfigure}
    \caption{Plots of accuracy results on HSI datasets, Salinas-A and Pavia datasets.}
    \label{fig:hsi-plots}
\end{figure}

We construct $k$-nearest neighbor graphs with $k=20$ using cosine similarity 
\[
    \Ker(x^i,x^j) = \langle x^i, x^j \rangle/\|x^i\|_2 \|x^j\|_2,
\]
a common similarity metric\footnote{It should be noted that this similarity metric is \textit{different} than the kernel used for propagation operator in DiAL. This similarity kernel is used for comparing input pixels for graph construction.} for HSI applications. The semi-supervised classification task is to infer the classification of unlabeled hyperspectral pixels in the image into one of a predetermined number of classes given a subset of labeled pixels. Beginning with one initially labeled pixel per class, we select pixels sequentially via the acquisition functions described in the previous section. 

In both Figure \ref{fig:hsi-plots} (a) and (b), we plot the accuracy of our Dirichlet Learning graph-based semi-supervised classifier \ref{eq:inference} on the unlabeled datapoints as labeled sets are chosen via the different acquisition functions throughout the active learning process.

We briefly comment on the relatively poor performance of smallest-margin uncertainty sampling in the Laplace learning classifier (i.e., Unc.\,(SM)) in our comparison. Smallest margin uncertainty sampling has been known to produce especially poor results due to the selection of overly-exploitative query points; that is, this commonly used method for uncertainty sampling can lead to query points that do not properly explore the extent of the dataset and result in poor empirical performance. 

We note that the LAND acquisition function is not originally designed for this underlying semi-supervised learning model. For a consistent comparison of results, we have reported the accuracy in our Dirichlet Learning semi-supervised learning model, though we do note that the corresponding accuracy in the Learning by Active Nonlinear Diffusion (LAND) \citep{murphy_unsupervised_2019} was poorer than the results in our model shown in Figure \ref{fig:hsi-plots}.

\subsection{Exploration Experiments}  \label{subsec:exploration-results}

In \citep{miller2023unc}, the authors introduce an experimental setup designed to evaluate how effectively an active learning method selects query points that both (1) explores the clustering structure of the dataset and (2) leads to optimal increases in the accuracy of the underlying semi-supervised classifier. The ground truth classes in the MNIST \citep{lecun-mnisthandwrittendigit-2010} and FASHIONMNIST \citep{xiao2017fashionmnist} benchmark machine learning datasets are used to define ``clusters'' in an auxiliary problem wherein a modified classification structure is imposed; namely, the 
true class labelings $y_i \in \{0, 1, \ldots, K\}$ (e.g., digits 0-9 for MNIST) and reassign them to one of $k < K$ classes by taking $y_i^{new} \equiv y_i \operatorname{mod} k$; see Table \ref{table:mod-classes} below. 

\newcommand{\ra}[1]{\renewcommand{\arraystretch}{#1}}
\begin{table}[h!] 
\centering
\ra{1.3}
\begin{tabular}{@{}cccccc@{}}\toprule
Resulting Mod Class & 0 & 1 & 2  \\
\midrule 
MNIST & 0,3,6,9 & 1,4,7 & 2,5,8\\
FASHIONMNIST & 0,3,6,9 & 1,4,7 & 2,5,8\\
\bottomrule
\end{tabular}
\caption{Mapping of ground truth class label to $\operatorname{mod} k$ labeling for experiments. Each ground truth class, is interpreted as a different ``cluster'' and the resulting class structure for the experiments have multiple clusters per class. For MNIST and FASHIONMNIST, there 10 total ground truth classes and we take labels modulo $k=3$.}
\label{table:mod-classes}
\end{table}

For each trial with an acquisition function, we select one initially labeled point per \textit{``modulo''} class; therefore, only a subset of clusters (i.e., the original true classes) has an initially labeled point. In order to perform active learning successfully in these experiments, query points chosen by the acquisition function over the trial must sample from each cluster. In this way, we have created an experimental setup with high-dimensional datasets with potentially more complicated clustering structures.
We perform 10 trials for each acquisition function, where each trial begins with a different initially labeled set, and 100 query points are chosen sequentially by each of the different acquisition functions. To clarify, trials begin with only 3 labeled points in the MNIST and FASHIONMNIST experiments, so only 3 out of the 10 clusters begin with labeled points.

With this experimental setup, we can track not only the accuracy of the underlying semi-supervised classifier on the unlabeled data (accuracy plots), but also the proportion of clusters that contain labeled data (cluster exploration plots) throughout the active learning process. While most work in active learning has only analyzed accuracy (and similar metrics) as a function of labeled set size to evaluate the performance of acquisition functions, we suggest that other metrics such as these cluster exploration plots can be informative.

Additionally, we run active learning experiments on a predetermined, random subset (10\% of the total) of each of these ``modified'' datasets. This reduced setting allows us to compare with the LAND algorithm with the publicly-available MATLAB implementation the authors use \citep{landcode2021} since we ran into memory issues when applying to the full MNIST and FASHIONMNIST datasets. Furthermore, the original forms of the VOpt and $\Sigma$Opt acquisition functions are prohibitively expensive for the full MNIST and FASHIONMNIST datasets; these reduced datasets allow us to also compare against the full calculation of VOpt and $\Sigma$Opt. 
The random subset is chosen by sampling uniformly at random 10\% of the points from each original class, and we refer to these smaller datasets as MNIST-SMALL and FASHIONMNIST-SMALL. Figures \ref{fig:mnistsmall-acc} and \ref{fig:fashionmnistsmall-acc} display the accuracy results while Figures \ref{fig:mnistsmall-clusterexp} and \ref{fig:fashionmnistsmall-clusterexp} display the cluster exploration results for the MNIST-SMALL and FASHIONMNIST-SMALL experiments, respectively.

\begin{figure}
    \begin{subfigure}{0.5\textwidth}
        \centering
        \includegraphics[width=\textwidth]{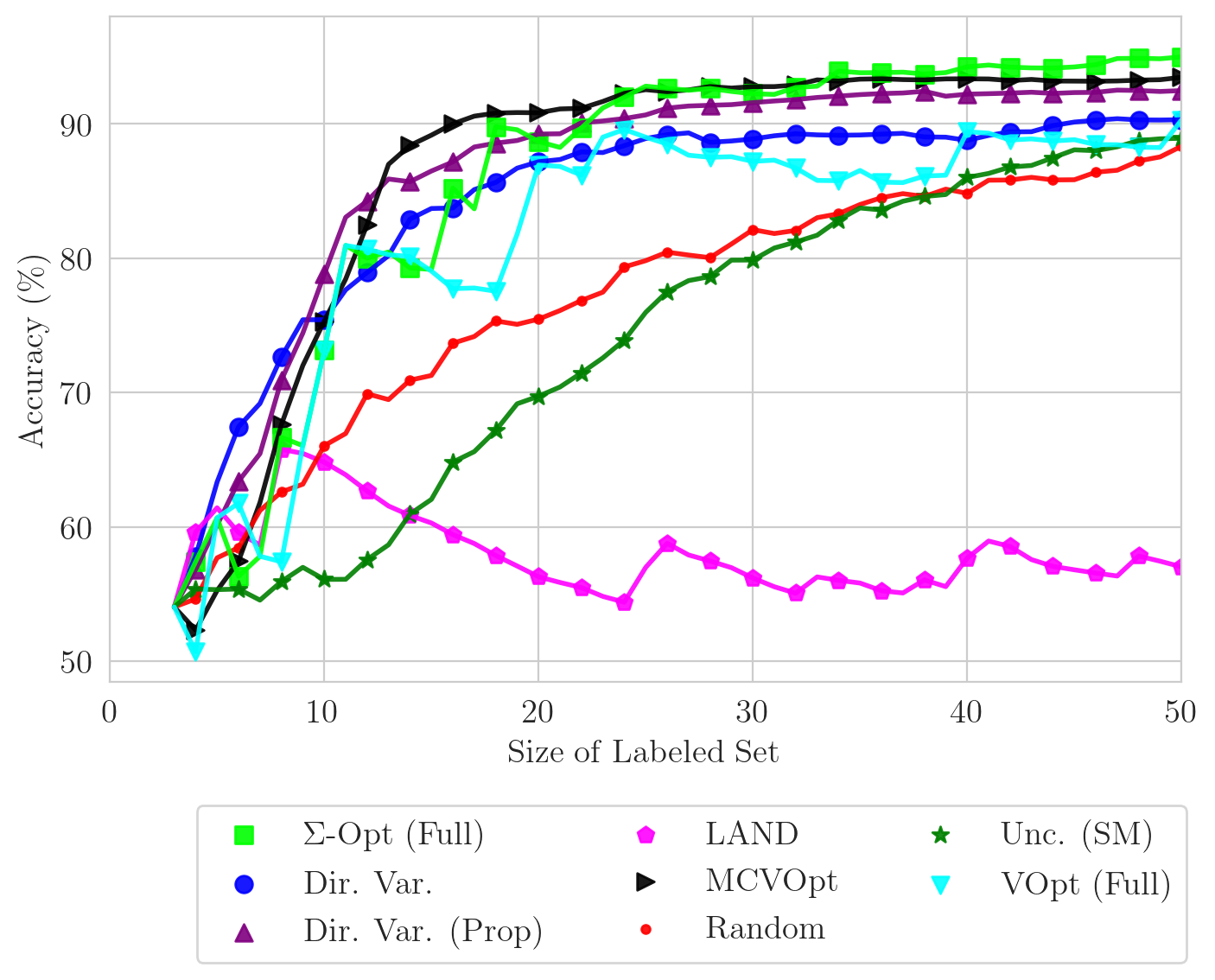}
        \caption{Accuracy}
        \label{fig:mnistsmall-acc}
    \end{subfigure}
    \begin{subfigure}{0.5\textwidth}
        \centering
        \includegraphics[width=\textwidth]{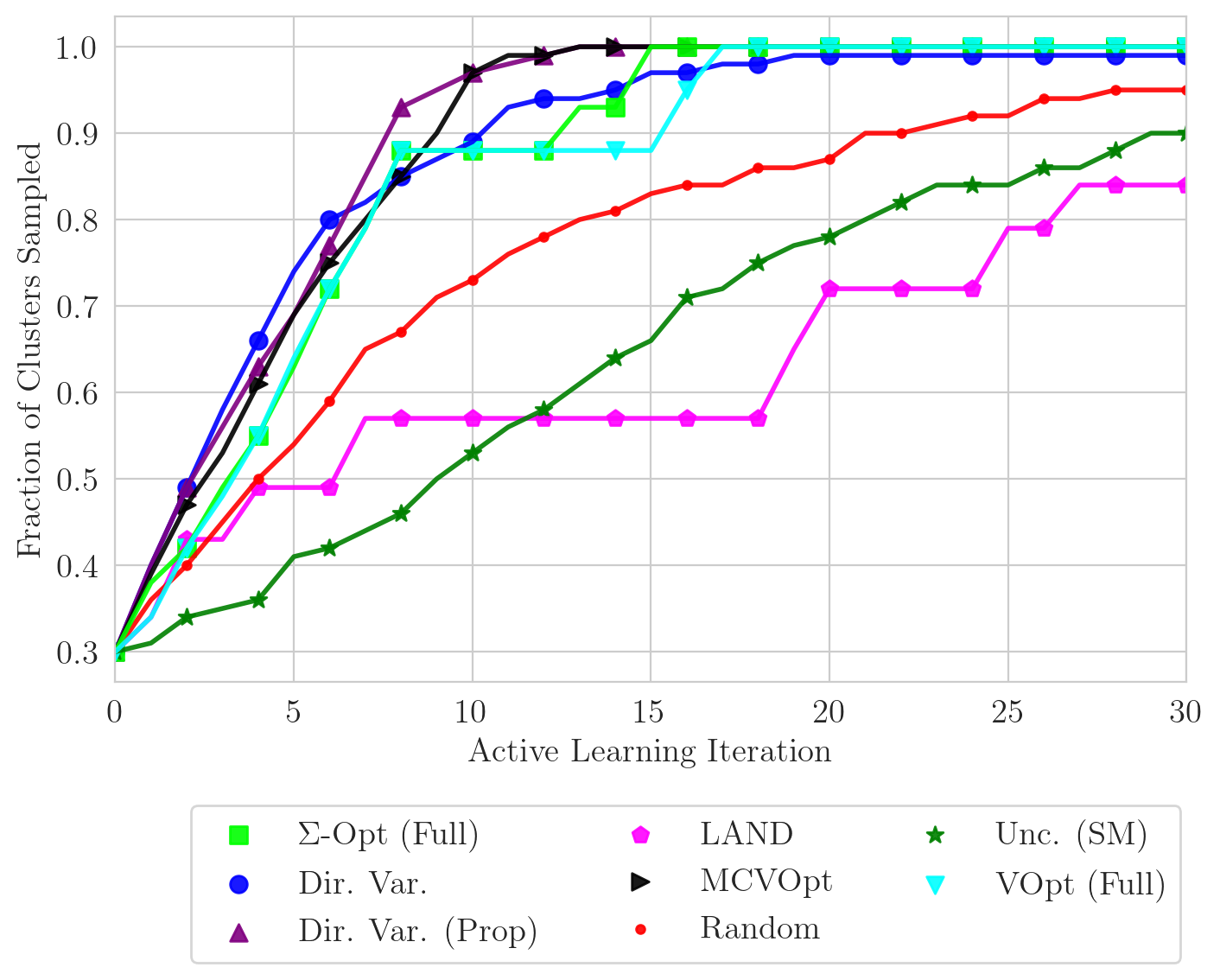}
        \caption{Cluster Exploration}
        \label{fig:mnistsmall-clusterexp}
    \end{subfigure}
    \caption{Plots of results on MNIST-SMALL dataset, showing both accuracies (a) and cluster proportion (b) as a function of the active learning iteration (i.e., the size of the labeled data). 
    }
\end{figure}

\begin{figure}
    \begin{subfigure}{0.5\textwidth}
        \centering
        \includegraphics[width=\textwidth]{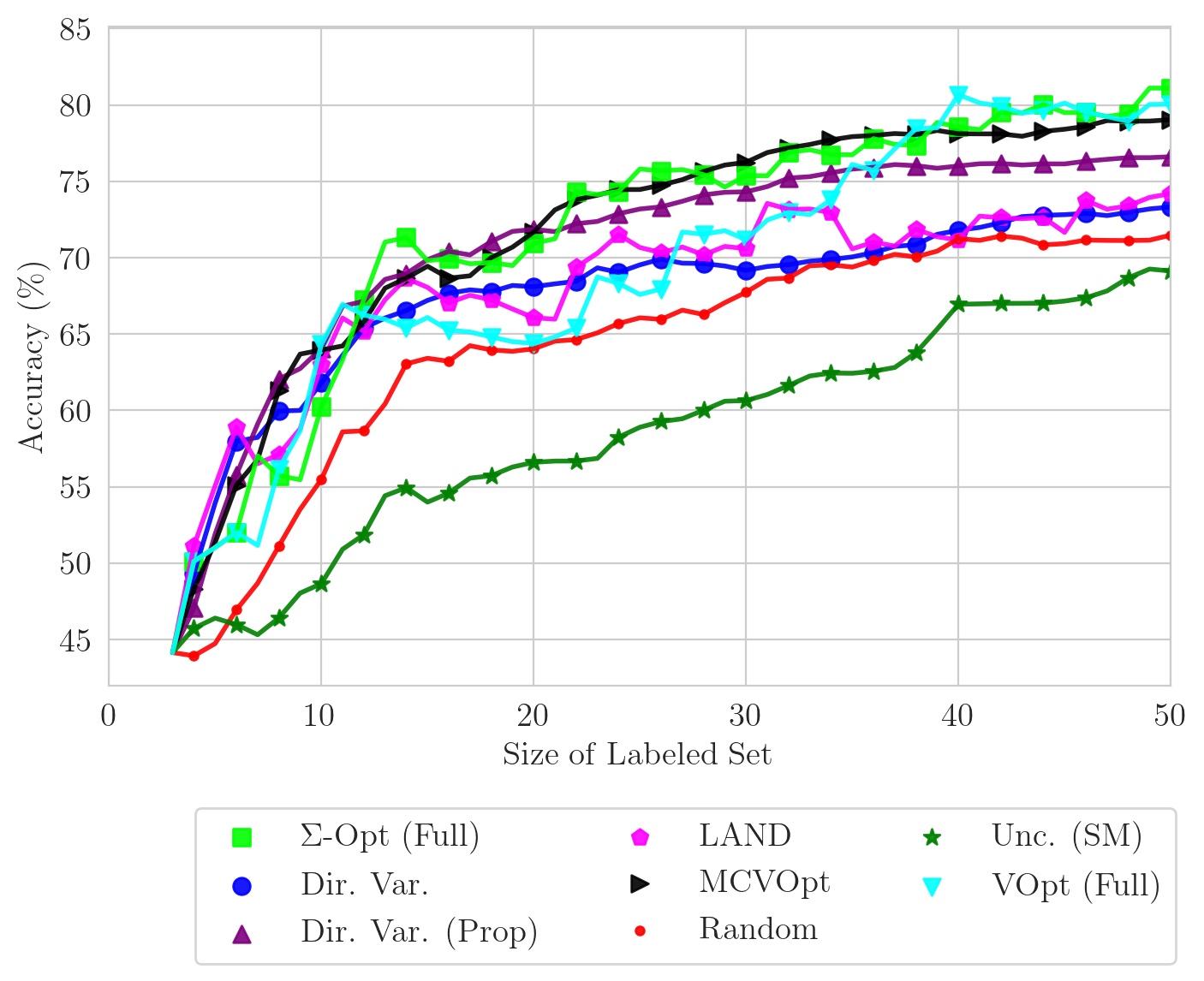}
        \caption{Accuracy}
        \label{fig:fashionmnistsmall-acc}
    \end{subfigure}
    \begin{subfigure}{0.5\textwidth}
        \centering
        \includegraphics[width=\textwidth]{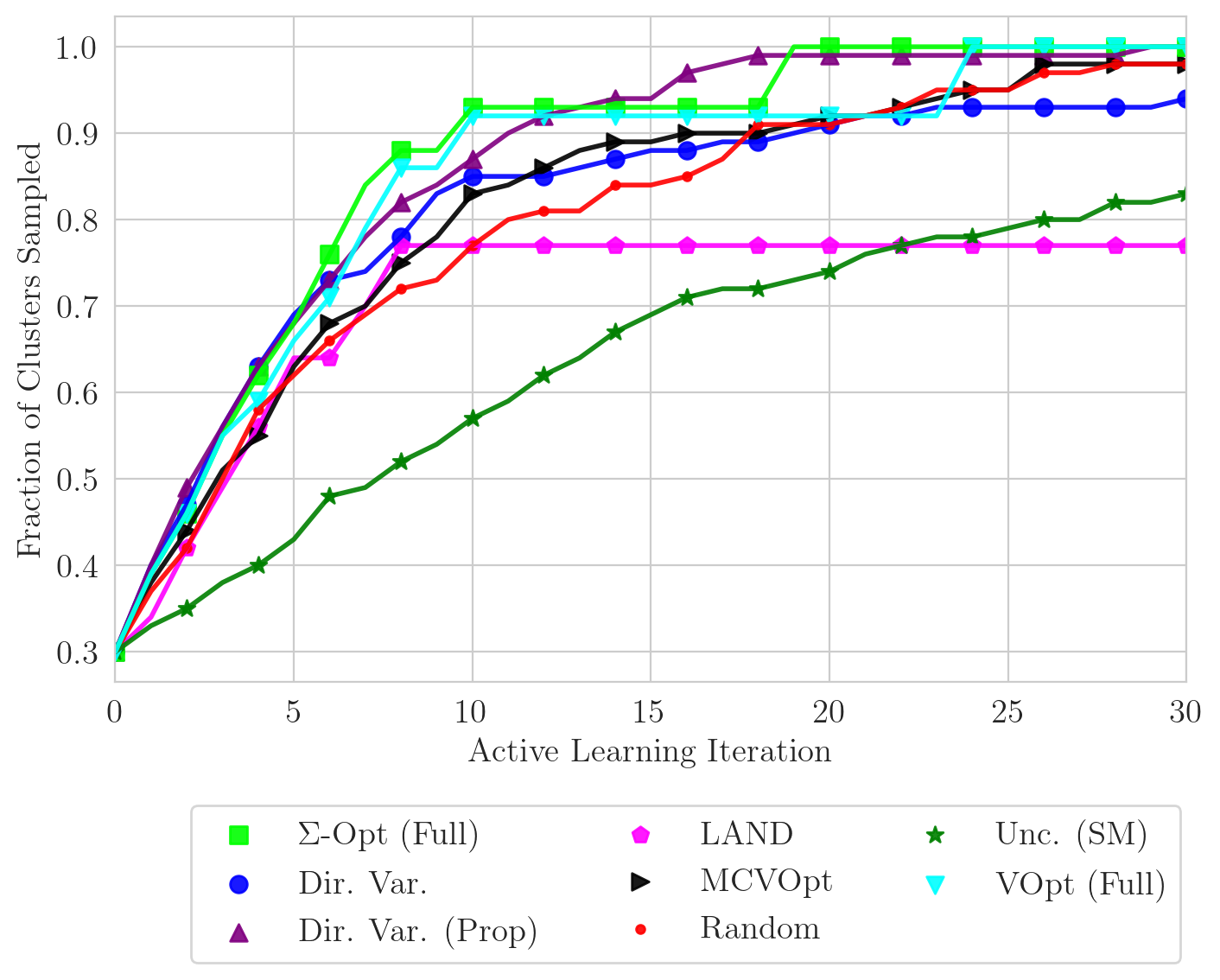}
        \caption{Cluster Exploration}
        \label{fig:fashionmnistsmall-clusterexp}
    \end{subfigure}
    \caption{Plots of results on FASHIONMNIST-SMALL dataset, showing both accuracies (a) and cluster proportion (b) as a function of the active learning iteration (i.e., the size of the labeled data). 
    }
\end{figure}

The full MNIST and FASHIONMNIST experiments then exclude comparison with the LAND acquisition function as well as the full VOpt, and $\Sigma$Opt acquisition functions. We use an approximate VOpt and $\Sigma$Opt calculation that utilizes a dimensionality reduction by projecting onto the eigenvectors corresponding to the $r=50$ smallest eigenvalues of the graph Laplacian, similar to what is done with the MCVOpt \citep{miller_spie_2022} criterion. Accuracy and cluster exploration results for both MNIST and FASHIONMNIST datasets are reported in Figures \ref{fig:mnist} and \ref{fig:fashionmnist}.

\begin{figure}
    \begin{subfigure}{0.5\textwidth}
        \centering
        \includegraphics[width=\textwidth]{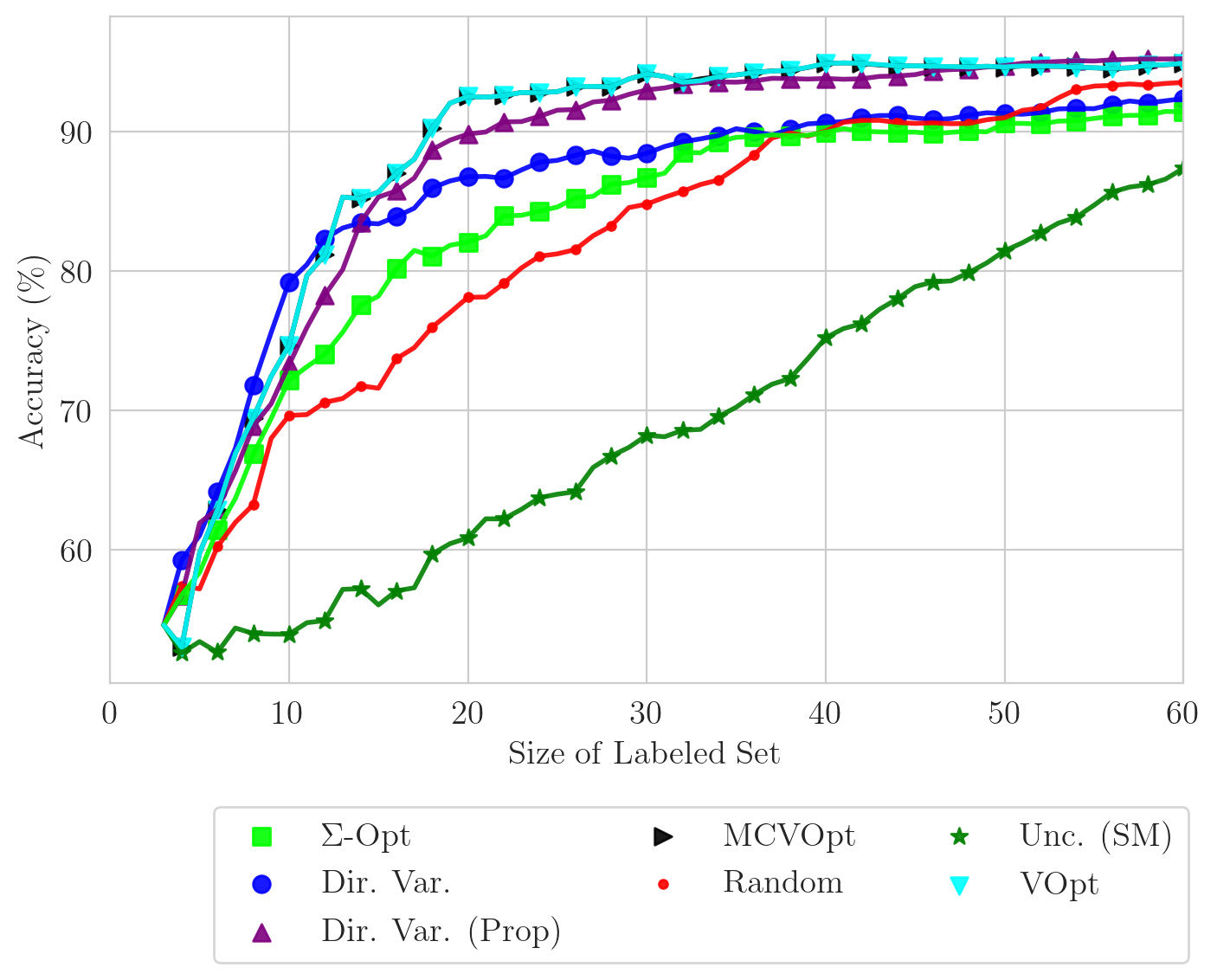}
        \caption{Accuracy}
        \label{fig:mnist-acc}
    \end{subfigure}
    \begin{subfigure}{0.5\textwidth}
        \centering
        \includegraphics[width=\textwidth]{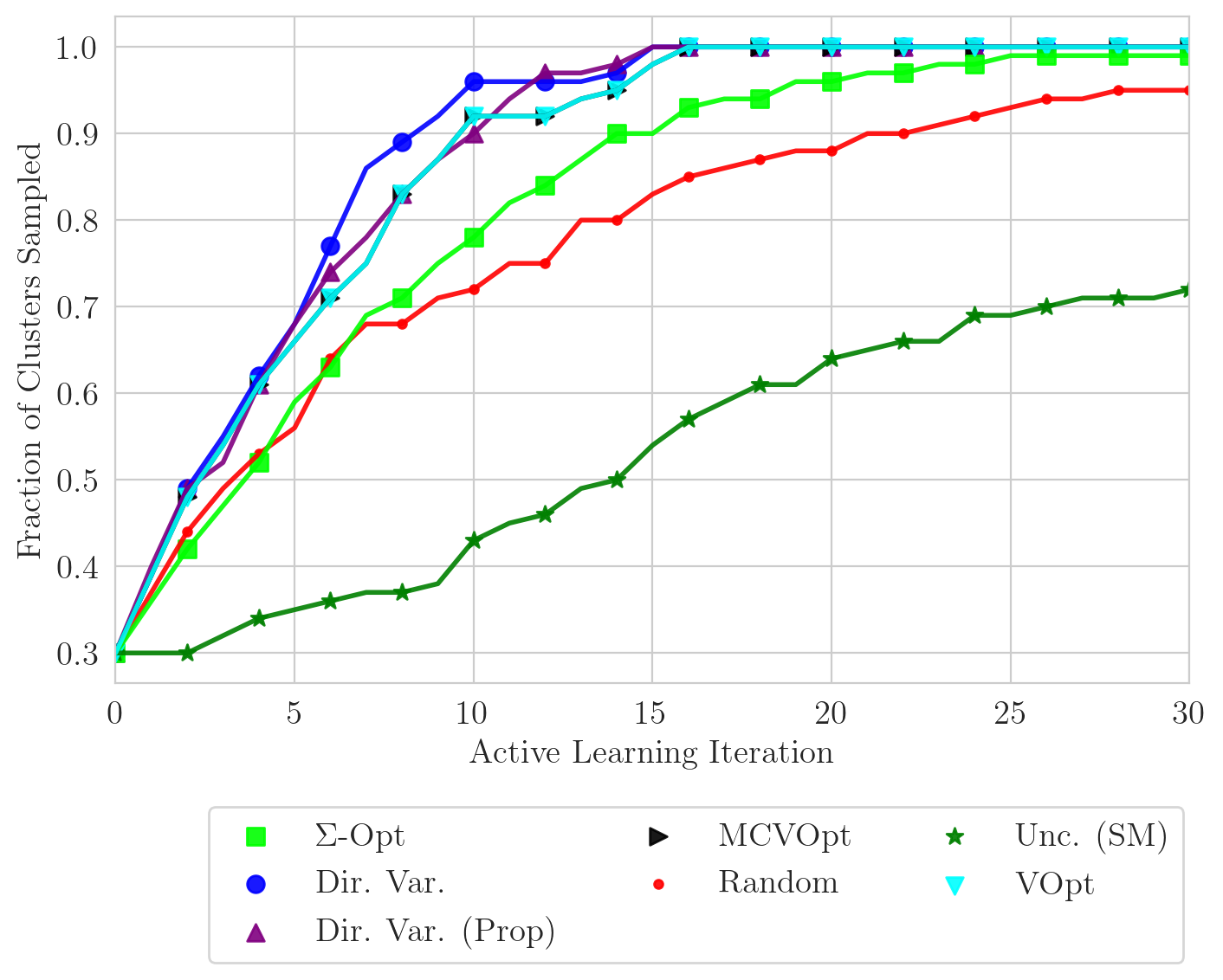}
        \caption{Cluster Exploration}
        \label{fig:mnist-clusterexp}
    \end{subfigure}
    \caption{Plots of Results on MNIST dataset, showing both accuracies (a) and cluster proportion (b) as a function of the active learning iteration (i.e., the size of the labeled data). 
    }
    \label{fig:mnist}
\end{figure}

\begin{figure}
    \begin{subfigure}{0.5\textwidth}
        \centering
        \includegraphics[width=\textwidth]{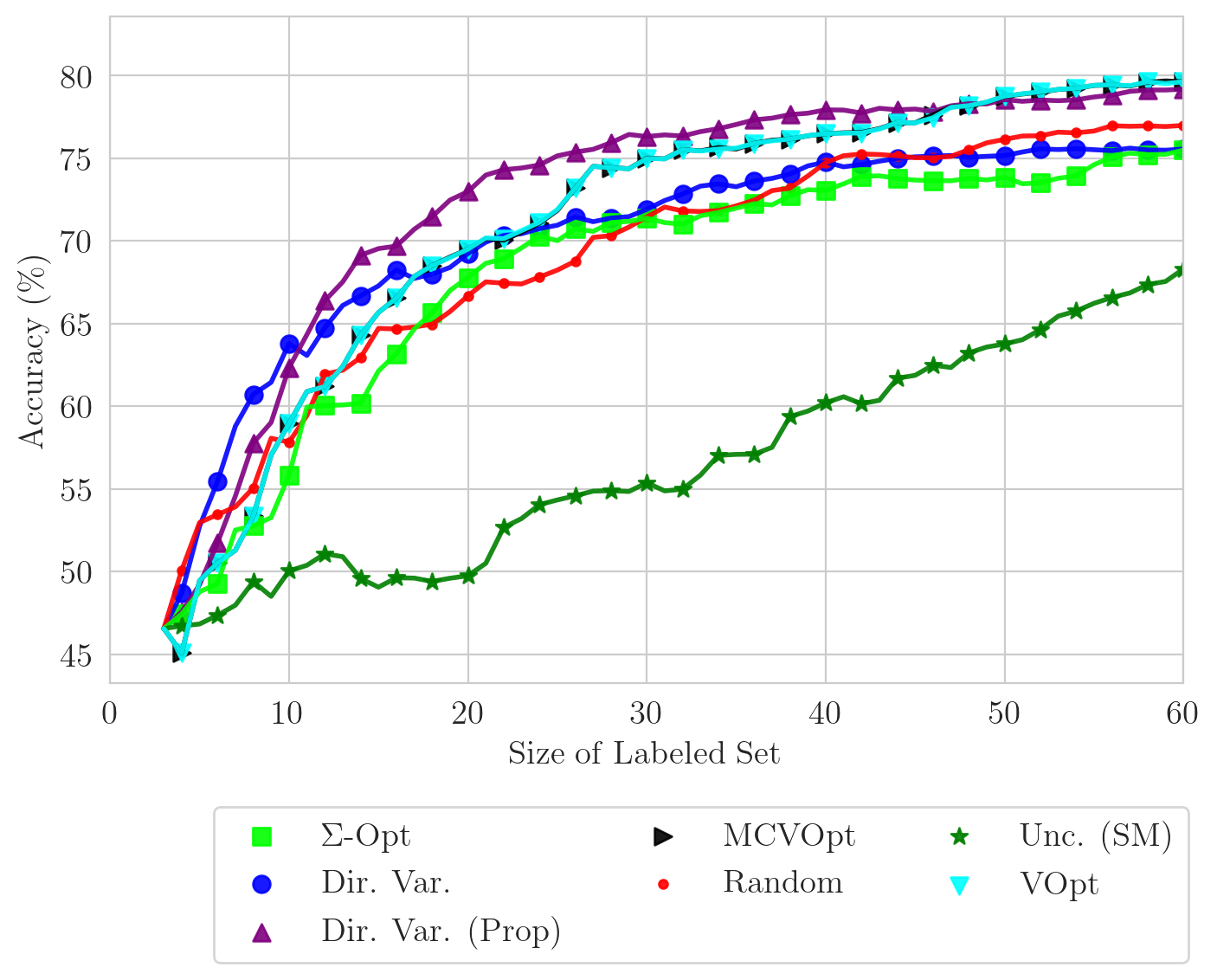}
        \caption{Accuracy}
        \label{fig:fashionmnist-acc}
    \end{subfigure}
    \begin{subfigure}{0.5\textwidth}
        \centering
        \includegraphics[width=\textwidth]{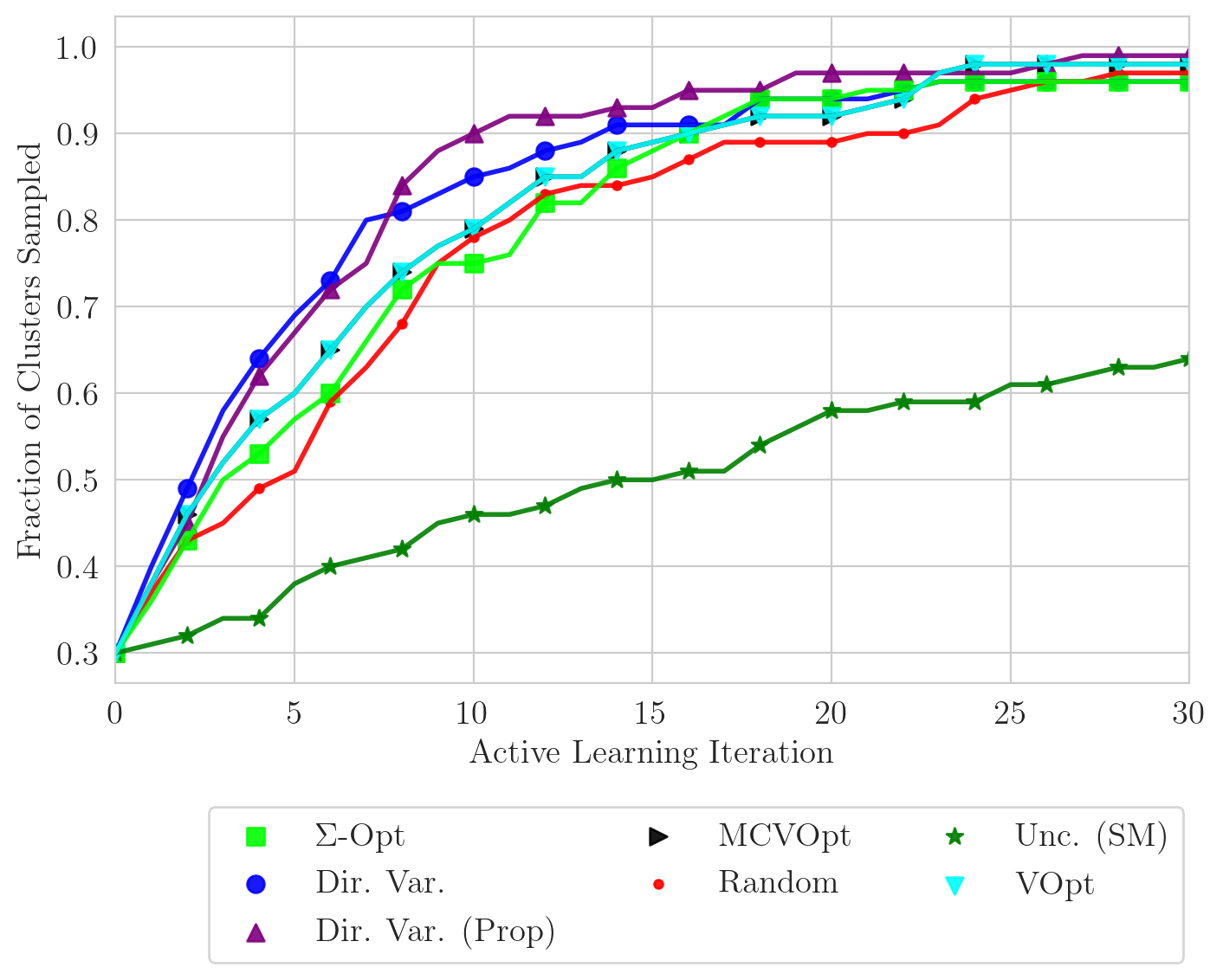}
        \caption{Cluster Exploration}
        \label{fig:fashionmnist-clusterexp}
    \end{subfigure}
    \caption{Plots of results on FASHIONMNIST dataset, showing both accuracies (a) and cluster proportion (b) as a function of the active learning iteration (i.e., the size of the labeled data).
    }
    \label{fig:fashionmnist}
\end{figure}

\subsection{Discussion} \label{sec:results-discussion}

We now discuss the results and subsequent implications of the various experiments of Sections \ref{subsec:hsi-results} and \ref{subsec:exploration-results}, as stated briefly in the summary box at the start of Section \ref{sec:results}.

\subsubsection{Assessing exploration capabilities}

Recall that plotting cluster proportion as a function of active learning iteration (labeled set size) allows us to monitor the \textit{explorative} nature of the selected query points by each acquisition function; namely, we can characterize a ``sufficiently'' explorative method as one that samples from \textit{every} cluster in fewer iterations than other methods. Then, by computing the accuracy at each active learning iteration, we can measure how useful the corresponding query points were for the classification task. Combining both measurements yields an arguably more complete picture of the utility of a proposed acquisition function.  

In the MNIST-SMALL (Figure \ref{fig:mnistsmall-acc}) and FASHIONMNIST-SMALL (Figure \ref{fig:fashionmnistsmall-acc}) experiments, we see that the MCVOpt and $\Sigma$Opt (Full) acquisition functions often led to the greatest marginal gains in accuracy compared to all the other methods shown. Our Dir.\,Var.\,(Prop) acquisition function performed comparably in terms of accuracy in both experiments, but with a more consistent exploration of clusters across the two experiments (see Figures \ref{fig:mnistsmall-clusterexp} and \ref{fig:fashionmnistsmall-clusterexp}). By comparison, it seems that the query points selected by LAND led to poor performance in the MNIST-SMALL task while merely sub-optimal performance in the FASHIONMNIST-SMALL task. An investigation of the cluster exploration plots of both experiments suggests that the LAND acquisition function struggled to identify query points from each of the 10 clusters in as few queries as the other presented methods. We suggest that this points to the importance of cluster exploration for the subsequent performance of the classifier in the active learning process.

In the larger experiments, MNIST and FASHIONMNIST, we observe similar results with the exception of the $\Sigma$Opt acquisition function (possibly due to the spectral approximation performed to reduce the computational costs for these larger experiments). We highlight that our proposed Dir.\,Var.\,(Prop) method still performs comparably to the best-performing acquisition functions on the MNIST task, and achieves the best performance of all methods on the FASHIONMNIST task. These results are encouraging, as across both the large and small tasks our proposed method has consistently performed well in terms of both accuracy and cluster exploration.

As a final observation from these experiments, we highlight the consistently poor performance of smallest margin uncertainty sampling (Unc.\,(SM)) computed in the Laplace Learning \citep{zhu_semi-supervised_2003} model. Such catastrophic behavior of this type of uncertainty sampling has been observed previously \citep{ji_variance_2012, miller2023unc}, and further supports our suggestion that exploration of cluster behavior is crucial for success in the active learning task. We posit that early on, when only a few of the clusters contain labeled samples, the resulting Laplace Learning classifier's decision boundaries are most likely too poor to utilize as the sole mechanism for selecting query points.

\subsubsection{Comparing computational expense} \label{subsubsec:compare-comp}


In addition to the favorable explorative behavior of our proposed DiAL method, we emphasize the scalability of both the underlying semi-supervised learning model (Dirichlet Learning) and the Dirichlet Variance acquisition function. That is, updating the Dirichlet Learning classifier at each active learning iteration and the subsequent calculation of Dirichlet Variance are relatively cheap to compute.
Expanding on this second point, the computation of Dirichlet Variance scales similarly to the ``optimal'' scaling of Uncertainty Sampling (Table~\ref{table:comp_comparison} and Figure~\ref{fig:timing-comparison}) for this pool-based active learning setting. This scaling is very favorable compared to other acquisition functions like VOpt and $\Sigma$Opt that have been proposed to encourage exploration of clusters \citep{ma_sigma_2013, ji_variance_2012, miller_model-change_2021, miller2023unc}.
In their originally proposed form, both VOpt and $\Sigma$Opt require the computation and storage of the inverse of a perturbed graph Laplacian matrix of size $n^2$, where $n$ is the size of the dataset $\mcl X$. We have referred to this as VOpt/$\Sigma$Opt (Full) in the plots of results. MCVOpt \citep{miller_spie_2022} was proposed as a more computationally-efficient heuristic for combining VOpt and a type of uncertainty sampling, without requiring the inversion of said graph Laplacian. However, MCVOpt  still requires the computation of a subset of $r \ll n$ eigenvalues and eigenvectors of said graph Laplacian matrix to provide a low-rank approximation to its inverse. We refer to these additional variables (i.e., the inverse of the graph Laplacian matrix or its low-rank approximation) as \textit{auxiliary matrices} since they are not directly used by the underlying semi-supervised classifiers. 

In contrast, the Dirichlet Variance acquisition function requires no such auxiliary matrix prior to the start of the active learning process and therefore has no additional computational overhead. Furthermore, VOpt, $\Sigma$Opt, and MCVOpt all require updating these auxiliary matrices in addition to updating the classifier outputs at each iteration. By requiring no such auxiliary matrix, Dirichlet Variance requires fewer operations per unlabeled point than these other acquisition functions.

In Table \ref{table:comp_comparison}, we display a comparison of the computational requirements of the Dir.\,Var.\,, VOpt, $\Sigma$Opt, Unc.\,Sampling, and MCVOpt acquisition functions. \textit{Initial Compute} refers to the computational cost to initially calculate the associated auxiliary matrix if used by the acquisition function. \textit{Cost Per Unl.\,} refers to the cost of the acquisition function applied to a single unlabeled point and \textit{Aux.\,Update Cost} refers to the cost to update said auxiliary matrix. Finally, \textit{Class.\,Update Cost} refers to the cost of updating the corresponding classifier at each iteration (if the acquisition function uses the classifier outputs). Costs with an asterisk ($\ast$) are shown in their originally proposed form--such as full matrix inversion of the graph Laplacian matrix and storage of manipulations of this $n \times n$ dense matrix throughout the active learning process for VOpt and $\Sigma$Opt. 

Figure \ref{fig:timing-comparison} shows a timing comparison between these acquisition functions. For each acquisition function, we computed the average time over 10 trials to perform one iteration of the active learning process (i.e., the time required to compute the acquisition function over the unlabeled data and subsequent selection of query point). Due to the near-identical form of $\Sigma$Opt and VOpt, we only include VOpt in the plot. Figure \ref{fig:timing-comparison} empirically verifies the \textit{Cost Per Unl.\,}column of Table \ref{table:comp_comparison}; namely, one iteration of the active learning process incurs computational cost roughly like
\[
    (\text{cost per iteration}) = \underbrace{(\text{size of unlabeled data})}_{\approx\mcl O(n)} \times (\text{cost per unlabeled point}).
\] 
Thus, VOpt (Full) scales quadratically with dataset size, $n$, while the other acquisition functions scale roughly linearly with $n$, with additional overhead for MCVOpt due to the extra computations associated with the $r$ eigenvalues and eigenvectors of the graph Laplacian matrix.

\begin{table}[h] 
\centering
\begin{tabular}{@{}c|c|cc|c@{}}\hline
& Aux.\,Overhead  & \multicolumn{2}{c|}{Query Point Selection} & Semi-Sup.\,Inference\\
\textit{Abbr.\,Name} & \textit{Initial Compute} & \textit{Cost Per Unl.}  & \textit{Aux.\,Update Cost}  &  \textit{Class. Update Cost}\\
\hline
Unc. Sampl. & - & $\mcl O(K)$ & -  & $K \cdot SpSolve(L,n) $\\
Dir.\,Var.\, & - & $\mcl O(K)$ & -  & $SpSolve(L,n)$ \\
VOpt (Full) & $\mcl O(n^3)^\ast$ & $\mcl O(n)$ & $\mcl O(n^2)^\ast$  & - \\
$\Sigma$Opt (Full) & $\mcl O(n^3)^\ast$ & $\mcl O(n)$ & $\mcl O(n^2)^\ast$  & - \\
MCVOpt & $\mcl O(n^2r)$ & $\mcl O(r + K)$ & $\mcl O(nr)$  & $K \cdot SpSolve(L,n) $\\
\hline
\end{tabular}
\caption{Computational comparison between acquisition functions, where $n = |\mcl X|$, $K$ is the number of classes, and $r \ll n$ is the number of eigenvalues computed in the auxiliary matrix used in MCVOpt. $SpSolve(L,n)$ represents the cost of a sparse linear solve with the graph Laplacian matrix $L \in \mbb R^{n \times n}$, which we assume to be sparse pursuant our use of $k$-nearest neighbor graphs.
}
\label{table:comp_comparison}
\end{table}

\paragraph{Low-rank approximation of VOpt and $\Sigma$Opt in larger experiments.} For the larger experiments, the matrix inversion involved in the ``full'' VOpt and $\Sigma$Opt is impractical, and so we use a computational workaround similar to the MCVOpt acquisition function. As was done in \citep{miller_model-change_2021, miller_spie_2022}, we approximate the VOpt and $\Sigma$Opt acquisition functions by projecting onto the first $r=50$ eigenvectors of the corresponding graph Laplacian in order to reduce the computational burden of these methods. This constitutes a low-rank approximation of the inverse of the graph Laplacian matrix for the corresponding calculations of these acquisition functions. The error in this low-rank approximation may explain the comparative degradation in performance of the $\Sigma$Opt acquisition from the smaller to the larger datasets, though we remark that this approximation did not seem to significantly affect the results of VOpt and MCVOpt. An interesting direction for empirical and theoretical analysis would be to quantify how such low-rank approximations affect these acquisition functions that are intimately tied to covariance operators associated with the underlying graph-based Gaussian random field. 

\begin{figure}[h]
    \hspace{10.5em}
    \includegraphics[width=0.7\textwidth]{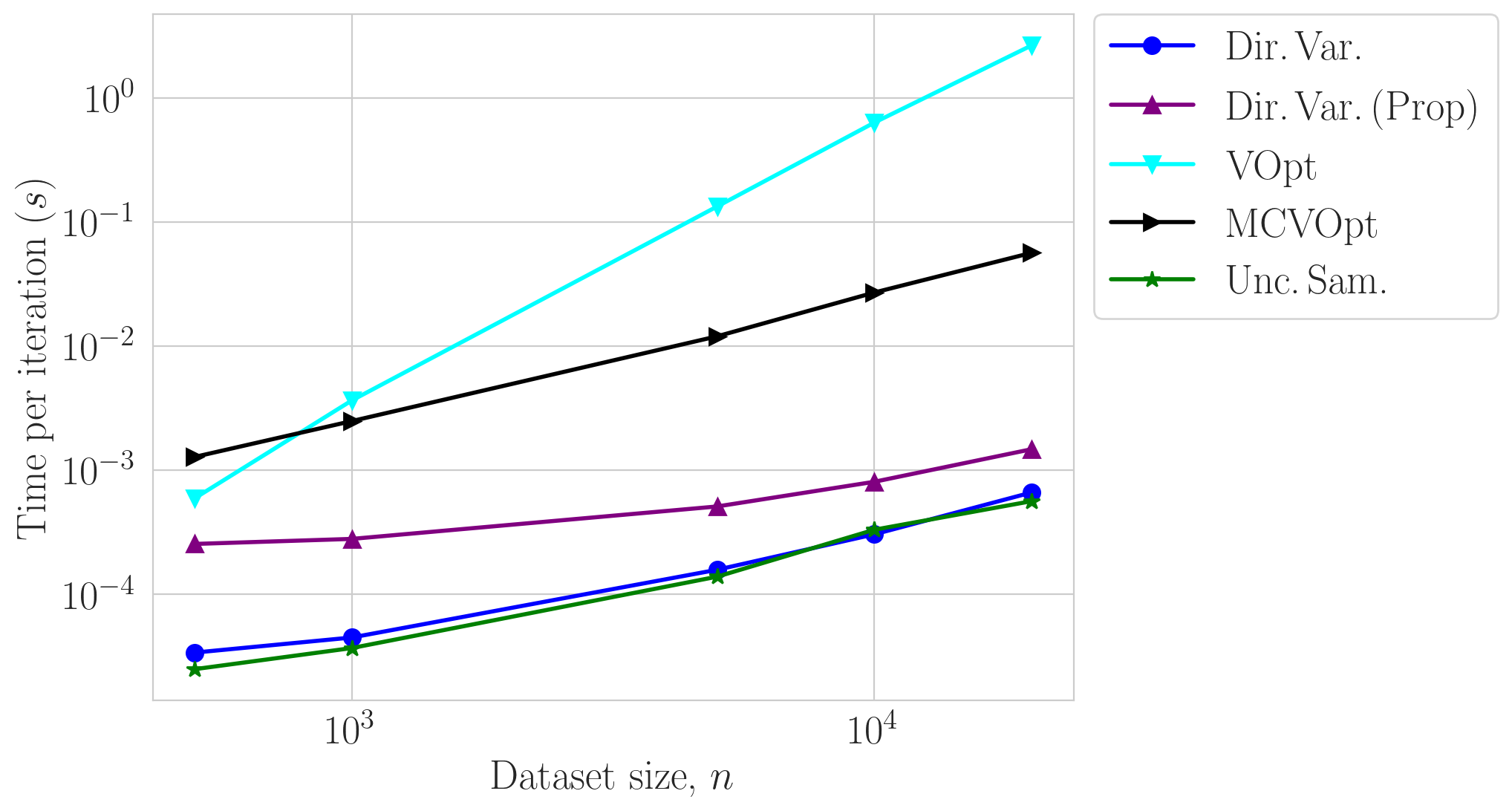}
    \caption{Timing comparison of acquisition function computation per iteration. For each acquisition function and dataset size ($n$), we plot the average time it took to compute the corresponding acquisition function over the unlabeled data. Note that both Dir.\,Var.\, acquisition functions scale like $\mcl O(n)$ similar to Unc.\,Sampling, while VOpt scales quadratically as $\mcl O(n^2)$. MCVOpt scales roughly like $\mcl O(n)$, with the additional overhead due to computations of order $\mcl O(r)$, where $r$ is the number of eigenvalues of the graph Laplacian used. Note that the additional cost incurred by sampling proportional to acquisition function values accounts for the timing difference between Dir.\,Var.\,and Dir.\,Var.\,(Prop).}
    \label{fig:timing-comparison}
\end{figure}



\section{Theory} \label{sec:theory}

In this section, we present a theoretical analysis regarding the use of Dirichlet Active Learning (DiAL) to both explore in low-data regimes and to asymptotically exploit in high-data regimes. For analytical convenience, we will work in the continuum regime 
and make various assumptions upon the propagation operator and acquisition functions. However, these theoretical results help to match and explain the type of performance we observed in our numerical results (Section \ref{sec:results}), and we will point out where some of the analysis could be extended to the discrete data setting or to other kernels and acquisition functions.

For the sake of concreteness, we will consider a domain $\mcl X \subset \mbb R^d$ that is open and bounded throughout this section, though the analysis could be extended without significant change to a compact manifold with proper boundary conditions. Furthermore, we consider a joint distribution over inputs $(X, Y) \in \mcl X \times [K]$ that is identified by a density $\nu(x,y)$, as well as the marginal distribution over the inputs identified by a density $\rho : \mcl X \rightarrow [0, \infty)$.  Lastly, we will assume that the class-conditional distributions have densities $\{\rho_k(x)\}_{k=1}^K$ and that the marginal density over inputs can be written as the mixture model $\rho(x) = \sum_{k=1}^K w_k \rho_k(x)$ where $\sum_{k=1}^K w_k = 1, w_k \geq 0$. It is worth noting that the properties of $\rho(x)$ capture the continuum analog of the clustering structure of a finite dataset of points $\{x^{i}\}_{i=1}^n \subset \mcl X$ when they are sampled $x \sim^{iid} \rho$; that is, regions of $\mcl X$ wherein a class-conditional $w_k \rho_k(x)$ is significantly larger than the other components can be used to model clustering structure in data. 

Finally, in contrast to how we have previously defined the domain of the acquisition function to be the unlabeled data, $\mcl U = \mcl X \setminus \mcl L$, we will consider the Dirichlet Variance acquisition function evaluated on the entire domain $\mcl X$ at each iteration in this continuum setting. This can be interpreted as allowing for ``repeated trials'' or observations at points in the domain $\mcl X$. Furthermore, in the continuum setting, the labeled data $\mcl L$ constitutes a discrete set of measure zero with respect to the underlying marginal density $\rho$, and so considering the acquisition function on the entired domain $\mcl X$ seems natural.  

\subsection{Summary of results and discussion}

The numerical experiments in Section \ref{sec:results} indicate that DiAL flexibly transitions from low-label to high-label regimes, or in other words from an exploration phase to an exploitation phase. As such, our theory seeks to address both regimes. We begin in Section \ref{subsec:exploration-guarantees} by defining a class of \emph{discriminating kernels} which are, with high probability, able to differentiate between different components of the mixture model (see Definition \ref{def:class-separator}). This allows us to describe (Proposition \ref{prop:cluster-exploration}) the explorative tendencies of DiAL to effectively cover the underlying distribution by providing reasonable guarantees that the algorithm will eventually capture all of the distributional structure and not miss components or clusters.

This flexible framework of discriminating kernels generalizes several different notions of clustering previously used in the literature.  In particular, we show in Section \ref{subsec:laplacian-based-separation} that these assumptions are satisfied under a flexible definition of clustering described in \citep{trillos2021geometric}, and we build upon their analysis in the context of spectral clustering to show that Laplacian-based kernels will be discriminating for such clustering structures. 

Of course, other notions of clustering have previously been utilized to provide exploration guarantees for active learning algorithms. These include, for example, explicit conditions on the inter- and intra-cluster distances \citep{murphy_unsupervised_2019}, $\ell^p$ balls that are well-separated \citep{karzand_maximin_2020}, and high-density regions of the data-generating distribution that are well-separated \citep{miller2023unc}. As in all of those works, at this stage we focus on exploration guarantees for our active learning algorithm. We follow the definitions in \citep{trillos2021geometric} due to (i) the direct connection to Laplacian-based methods, upon which the proposed methods in this work focus, and (ii) the fact that the assumptions given below are much less restrictive than many utilized in other similar works. 

In Section \ref{subsec:exploit-analysis} we then analytically study the manner in which DiAL will asymptotically seek more information near classification boundaries. In this case, our analysis is more formal particularly because one needs to be careful in designing a proper mathematical model that can capture what one means by asymptotic exploitation. To this end, we derive a large-sampling limit for DiAL which takes the form of an integro-differential equation \eqref{eq:Int-Diff-alpha}. In a small kernel bandwidth limit, this equation demonstrates a clear sampling bias of DiAL towards regions with greater population-level classification uncertainty: a concrete statement of this phenomenon can be found in Equation \eqref{eqn:barq-exp}. We consider this behavior to be asymptotically exploitative, in the sense that DiAL with Prop.\,Sampling (\ref{table:dirvar-defs}) will, in high data regimes, spend most of its time sampling near decision boundaries if the scaling of $\lambda$ is chosen properly. This is comparable to the behavior observed for uncertainty sampling, which focuses all of its attention on such regions, and contrasts strongly with the behavior of VOpt, which never transitions its attention towards decision boundaries. Our formal analysis also suggests that the scaling of the inverse-temperature parameter $\lambda$ plays an important role in ensuring convergence to this stationary distribution that focuses along decision boundaries; we provide a numerical demonstration of this behavior in 1D in Section \ref{subsubsec:num-demo-uncertainty}. 

Other works have previously considered this broad question of exploitative behavior  (i.e., the focusing of query point selection along ground truth decision boundaries between classes) \citep{dasgupta_two_2011, dasarathy_s2_2015, balcan_margin_2007, rittler2023twostage}. To our knowledge, however, there are relatively few algorithms with rigorous guarantees that can successfully \textit{transition from exploring to exploiting}, and the analysis in this section demonstrates that this is the case with DiAL. 

Finally, in Section \ref{sec:dir-learning-consistency} we present a proof of consistency of the underlying Dirichlet Learning classifier~\eqref{eq:inference} associated with DiAL. Although our proposed work is motivated by representing uncertainty of classifications at each point $x \in \mcl X$ for use in active learning, the classifier
\[
    \hat{y}(x) = \argmax_{k = 1, 2, \ldots, K} \alpha_k(x) = \argmax_{k = 1, 2, \ldots, K} \sum_{x^\ell \in \mcl L_k} \Ker(x^\ell, x)  
\]
reduces to a kernel-based decision rule. This is reminiscent of methods such as Nadarya-Watson kernel regression estimators \citep{nadarya1964onestimating, watson1964smooth}, $k$-nearest neighbor classifiers \citep{coverhart1967}, and moving window decision rules \citep{rosenblatt1956remarks, parzen1962estimation}. An important question for such decision rules concerns the statistical consistency of these methods as more labeled data is observed; as such, we establish the asymptotic consistency of the Dirichlet Learning classifier \eqref{eq:inference} in the setting of labeled pairs observed passively from the data-generating distribution, $(X_i,Y_i) \sim^{iid} \nu$.

\subsubsection{Heat kernel propagation for theoretical analysis}

While we have focused on the graph-based Poisson propagation operator introduced in \eqref{eq:poisson-prop} and \eqref{eq:poisson-prop-shift} for our numerical results, we turn our attention in the continuum regime to a heat-kernel propagation operator as opposed to the continuum-limit analog of the Poisson propagation. For a domain $\mcl X \subset \mbb R^d$, the \textit{density-dependent} heat kernel $\Ker_t(z, x)$ with source $z \in \mcl X$ that solves 
\begin{equation} \label{eq:heat-eqn-start}
\begin{dcases}
\begin{aligned}
    \partial_t\Ker_t(z, x) - \Delta_\rho\Ker_t(z, x)&= 0  &x \in \mcl X, t > 0 \\
   \Ker_0(z, x) &= \delta_z(x) & \\
\end{aligned}
\end{dcases}
\end{equation}
where $\Delta_\rho = \frac{1}{\rho} \operatorname{div} \lp \rho^2 \nabla \cdot \rp$ is a \textit{self-adjoint} diffusion operator with respect to the $\rho$-weighted inner product. We note that the differential operators in \eqref{eq:heat-eqn-start} are with respect to the variable $x$, while the variable $z$ is the source. Furthermore, depending on the domain $\mcl X$, one must assume boundary conditions to make \eqref{eq:heat-eqn} well-defined. In general, we will state the necessary assumptions on the heat kernel, $\Ker_t(z,x)$, and provide reasonable example situations in which these assumptions hold. 

We choose to use the heat kernel for a number of reasons. One main reason that we choose not to analytically study Poisson propagation is that it induces analytical complications in continuum settings. For example, it is  still an open problem to establish a formal continuum limit for the graph-based Poisson propagation operator \citep{calder_poisson_2020} due to the highly singular nature of the right-hand side of the corresponding governing equation
\[
    -\Delta_\rho u  + \tau u = (\tau I - \Delta_\rho) u = \delta_{z}(x).
\]
Furthermore, by considering the fundamental solution of the Laplacian on $\R^d$, we expect solutions of this equation to go to infinity at $z$, a point which is handled in a delicate way by solution rescaling in \citep{calder_poisson_2020}. The heat equation, in contrast, is much more amenable to analysis in the continuum setting.

We note, however, that the Poisson propagation can be viewed as an approximation to a corresponding heat-kernel propagation. Informally, one could consider using a single backwards Euler step of the heat equation to write
\begin{equation*}
   \Ker_t(z,x) = e^{t\Delta_\rho}\delta_z(x) \approx (I - t \Delta_\rho)^{-1} \delta_z(x) = t u_{1/t}(x),
\end{equation*}
where $(\tau I - \Delta_\rho) u_{\tau}(x) = \delta_z(x)$ is the continuum-limit analog of the Poisson propagation \eqref{eq:poisson-prop}. This shows the relationship between using the heat kernel propagation $\Ker_t(z,x)$ and the Poisson propagation, and so we continue with our theory in the heat kernel setting in the continuum.

\subsection{Cluster discovery guarantees} \label{subsec:exploration-guarantees}

Our first goal will be to establish high-probability cluster discovery guarantees. At a high level, we will show that given $K$ classes which are derived from $K$ distinct clusters, then with high probability  Dirichlet Learning will sample from each of the classes in $K$ steps. The subsequent generalization to $C > K$ clusters follows in similar fashion and is addressed in Remark \ref{rem:C-clusters}. Of course, such behavior is intimately connected with the particular choice of kernel, propagation function, and underlying probability densities. In this section, we will always assume that $\mcl X$ is a subset of $\mbb R^d$. 

Our first aim will be to provide an abstract notion describing the ability of a particular kernel to separate classes. To this end, we give the following definition: 



\begin{definition}\label{def:class-separator}
    We call a particular propagation operator $\Ker$ a $(\delta,\zeta, \varepsilon)$ class separator of a mixture model $\{ \rho_i\}$ if there exist disjoint sets $\mcl X_i$ such that $\rho_i(\mcl X_i) \geq 1-\delta$ and so that for any $x \in \mcl X_i$ we have that  $\Ker(x,x') > \zeta$ if $x' \in \mcl X_i$ and $\Ker(x,x') < \varepsilon$ if $x' \in \mcl X_j$ with $j \neq i$.
\end{definition}

This notion, of course, will be highly dependent upon the kernel and underlying densities, and we will prove that such a property holds for specific situations in Section \ref{subsec:laplacian-based-separation}. Under the assumption that our operator is separating, we then provide the following concrete result about cluster exploration, which follows from a direct, probabilistic argument.

\begin{proposition} \label{prop:cluster-exploration} Suppose that $\Ker$ is a $(\delta,\zeta, \varepsilon)$ class separator and that we select query points for our active learning algorithm using the kernel $\Ker$ as a propagation function and using Dirichlet Variance with Prop.\,Sampling (i.e., select $x \in \mcl X$ to label with probability $q(x) \propto \exp ( \lambda \mcl A_{var}(\alpha))$). 
Then, with probability at least 
{\small 
\begin{multline*}
    \left\{ 1 - \frac{\delta}{(1 - \delta) w_{min}} \exp\left( \frac{\lambda(K+1)(2\alpha_0 + 1)}{K\alpha_0^2 (K\alpha_0 + 1)} \right) \right. \\
    \left. - \frac{\lp  1 - w_{min}(1 - \delta)\rp}{(1 - \delta)w_{min}} \exp\left( \lambda \lp C(\alpha_0, \varepsilon, \zeta, K)  - 1\rp \frac{ K(K+1)\alpha_0^2}{K^2(\alpha_0 + \varepsilon)^2(K(\alpha_0 + \varepsilon) + 1)} \right) \right\}^K
\end{multline*}
}
Dirichlet Learning will sample from each of the $K$ classes in $K$ steps, where 
\begin{equation} \label{eq:const-exploration-bound}
    C(\alpha_0, \varepsilon, \zeta, K) := \lp 1 + \frac{4(\alpha_0 + 1)}{K(\alpha_0 + \varepsilon)} \rp \lp \frac{(K-1)(\alpha_0 + \varepsilon)^4}{(K+1)\alpha_0^2 \lp \alpha_0 + \frac{\zeta}{K}\rp^2} \rp
\end{equation}
and $w_{min} = \min_{k=1, 2, \ldots, K} w_k$ is the smallest class weighting. 
\end{proposition}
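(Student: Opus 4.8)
The plan is to prove a single‐step guarantee and then chain it. Number the active‐learning steps $1,\ldots,K$, and for $t=0,1,\ldots,K-1$ let $E_t$ be the event that, after $t$ queries, the labeled set has (at most) one point in each of $t$ distinct core sets $\mcl X_{i_1},\ldots,\mcl X_{i_t}$ from Definition~\ref{def:class-separator} and no point in the remaining cores or in the ``bad'' region $\mcl X_{\mathrm{bad}}:=\mcl X\setminus\bigcup_j\mcl X_j$. I would show that, conditioned on $E_t$, the $(t{+}1)$-st Prop.\,Sampling query lands in an as-yet-unvisited core with probability at least $1-A-B$, where $A,B$ are the two displayed error terms; since that is exactly the event $E_{t+1}$, telescoping the conditional probabilities yields $\mbb P(E_K)\ge(1-A-B)^K$, and on $E_K$ every core---hence, since the cores are essentially class-pure, every class---has a labeled point. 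A key simplification is that for this bound one never needs the observed \emph{labels} of the queried points to be ``correct'', only their location: if $x$ lies in an unvisited core $\mcl X_j$, then every existing labeled point lies in a \emph{different} core, so the separator property forces $\Ker(x^\ell,x)<\varepsilon$ regardless of that point's label. This is why the statement involves $\rho$-masses of the $\mcl X_j$ rather than posterior label probabilities, and it is also why the extension to $C>K$ clusters in Remark~\ref{rem:C-clusters} is routine: one simply tracks $C$ visits instead of $K$.

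For the single-step estimate, write the continuum Prop.\,Sampling law as a density on $\mcl X$ proportional to $\exp(\lambda\mcl A_{var}(\alpha(x)))\,\rho(x)$ and split $\mcl X$ into the visited cores $O$, the unvisited cores $N$, and $\mcl X_{\mathrm{bad}}$. Then
\[
 1-\mbb P(\text{hit }N)=\frac{\int_{O}e^{\lambda\mcl A_{var}}\rho+\int_{\mcl X_{\mathrm{bad}}}e^{\lambda\mcl A_{var}}\rho}{\int_{N}e^{\lambda\mcl A_{var}}\rho+\int_{O}e^{\lambda\mcl A_{var}}\rho+\int_{\mcl X_{\mathrm{bad}}}e^{\lambda\mcl A_{var}}\rho}\le\frac{\int_{\mcl X_{\mathrm{bad}}}e^{\lambda\mcl A_{var}}\rho}{\int_{N}e^{\lambda\mcl A_{var}}\rho}+\frac{\int_{O}e^{\lambda\mcl A_{var}}\rho}{\int_{N}e^{\lambda\mcl A_{var}}\rho}.
\]
The geometric inputs are: since $\rho_k(\mcl X_k)\ge1-\delta$ and the cores are disjoint, $\rho(\mcl X_{\mathrm{bad}})\le\delta$, $\rho(\mcl X_j)\ge w_j(1-\delta)$, and (because on $E_t$ at least one core is unvisited) $\rho(O)\le1-w_{\min}(1-\delta)$. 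Moreover on $E_t$ the $(\delta,\zeta,\varepsilon)$-separator property pins down the Dirichlet parameters: for $x$ in an unvisited core all $\tilde\alpha_k(x)\in[\alpha_0,\alpha_0+\varepsilon]$, while for $x$ in a visited core $\mcl X_{i_s}$ some $\tilde\alpha_k(x)>\alpha_0+\zeta$ (a labeled point in $\mcl X_{i_s}$ contributes more than $\zeta$).

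It remains to turn these parameter bounds into bounds on $\mcl A_{var}$ via the explicit formula \eqref{eq:var}, using Property~\ref{property:dec-var-more-data} (variance decreases in total mass) and Property~\ref{property:reduce-to-unc-sampling} (variance is maximal at equal parameters). For the bad-set term I would use $\rho(\mcl X_{\mathrm{bad}})\le\delta$, bound $e^{\lambda\mcl A_{var}}$ on $\mcl X_{\mathrm{bad}}$ by $\exp\!\big(\lambda\cdot\tfrac{(K+1)(2\alpha_0+1)}{K\alpha_0^2(K\alpha_0+1)}\big)$ (a uniform upper bound on the Dirichlet variance in the relevant parameter range), and crudely lower-bound $\int_N e^{\lambda\mcl A_{var}}\rho\ge w_{\min}(1-\delta)$ via $\mcl A_{var}\ge0$, which produces exactly $A$. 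For $B$, lower-bound $\mcl A_{var}$ over an unvisited core by plugging $\tilde\alpha_k\in[\alpha_0,\alpha_0+\varepsilon]$ into \eqref{eq:var} to get a value $V_{\mathrm{new}}$, and upper-bound $\mcl A_{var}$ over a visited core by $V_{\mathrm{old}}$ using that one parameter exceeds $\alpha_0+\zeta$ (with a worst-case redistribution of the remaining mass); writing the resulting exponent $\lambda(V_{\mathrm{old}}-V_{\mathrm{new}})=\lambda(C-1)V_{\mathrm{new}}$ identifies $C=V_{\mathrm{old}}/V_{\mathrm{new}}$ with the constant $C(\alpha_0,\varepsilon,\zeta,K)$ of \eqref{eq:const-exploration-bound}, and combining with the mass bound $\rho(O)\le1-w_{\min}(1-\delta)$ gives $B$. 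Adding the two contributions and raising to the $K$-th power finishes the proof.

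The main obstacle is the variance bookkeeping in the last step. Because the Dirichlet variance is \emph{not} monotone in its individual parameters (Property~\ref{property:non-monotone}), one cannot simply assert that raising one $\tilde\alpha_k$ past $\alpha_0+\zeta$ lowers the variance; one must combine the (favorable) effect of the larger total $\beta$ with the (possibly unfavorable) effect of the induced imbalance, carefully optimize over the remaining parameters in $[\alpha_0,\alpha_0+\varepsilon]$ to get an honest $V_{\mathrm{old}}$, and check that $V_{\mathrm{old}}<V_{\mathrm{new}}$ (equivalently $C<1$, so that $B\to0$ as $\lambda\to\infty$) in the regime $\varepsilon$ small, $\zeta$ not too small, and $\alpha_0$ moderate. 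The secondary nuisance is the $\delta$-bookkeeping: verifying that $\mcl X_{\mathrm{bad}}$ carries $\rho$-mass at most $\delta$, that at least one unvisited core always carries mass at least $w_{\min}(1-\delta)$ irrespective of the (possibly adversarial) order in which large clusters are visited, and that reaching $E_K$ genuinely certifies that a point of every class has been labeled.
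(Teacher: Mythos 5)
Your proposal follows essentially the same route as the paper's proof: an induction on the number of queries, a decomposition of $\mcl X$ into visited cores, unvisited cores, and the residual set, mass bounds $\rho(\hat{\mcl X})\le\delta$, $\rho(O)\le 1-w_{min}(1-\delta)$, and per-region variance bounds derived from the $(\delta,\zeta,\varepsilon)$ separator property, with the ratio $C=\mcl A_{var}^{lab}/\mcl A_{var}^{unl}$ identified exactly as you describe. The one place you diverge is the bad-set term: you lower-bound $\int_N e^{\lambda\mcl A_{var}}\rho$ by $w_{min}(1-\delta)$ using $\mcl A_{var}\ge 0$ and then assert that $\tfrac{(K+1)(2\alpha_0+1)}{K\alpha_0^2(K\alpha_0+1)}$ is a uniform upper bound on the variance over $\hat{\mcl X}$; that quantity is \emph{not} such a bound (the honest uniform bound, obtained from $\tilde\alpha_k\le\alpha_0+1$, is $\mcl A_{var}^{back}=\tfrac{(K+1)(\alpha_0+1)^2}{K\alpha_0^2(K\alpha_0+1)}$, which is strictly larger, and for large $\alpha_0$ your claimed bound even falls below the variance of the all-$\alpha_0$ configuration $\tfrac{K-1}{K(K\alpha_0+1)}$). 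The paper instead keeps the factor $e^{\lambda\mcl A_{var}^{unl}}$ in the denominator, so the exponent appearing in the stated probability is a bound on the \emph{difference} $\mcl A_{var}^{back}-\mcl A_{var}^{unl}$, not on $\mcl A_{var}^{back}$ itself; adopting that bookkeeping recovers the displayed constant. Finally, note that the paper's induction hypothesis does carry the label condition $y_j=i_j$ (so that each class contributes at most one source and unvisited cores see all $\alpha_k<\varepsilon+\alpha_0$), so your remark that labels are irrelevant should be tempered: without it, repeated labels across distinct cores would degrade the bound $\alpha_k(x)<\varepsilon+\alpha_0$ to $\alpha_k(x)<J\varepsilon+\alpha_0$ on unvisited cores.
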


\begin{remark} \label{rem:C-clusters}
    We briefly note that while Definition \ref{def:class-separator} and Proposition \ref{prop:cluster-exploration} are stated in terms of $K$ clusters (one for each of the $K$ different classes), this setup straightforwardly generalizes to the situation of multiple, disjoint clusters in each class. If there are a total of $M$ different clusters--each one belonging to precisely one of the $K$ distinct classes--then the same reasoning used to lower bound the Dirichlet Variance values on the unexplored classes extends to the unexplored clusters belonging to classes that have labeled points in other clusters.
\end{remark}

\begin{remark} \label{rem:explain-bound}
    We also remark that the bound in Proposition \ref{prop:cluster-exploration} is meaningful when the quantity $C(\alpha_0, \varepsilon, \zeta, K) < 1$ so that the rightmost term can be reasonably bounded as  $\lambda > 0$ is increased. The inner term will always worsen as $\lambda$ increases, but this term is kept small by the constant $\delta \ll 1$. Consider the elucidating case when $\varepsilon = 0, \zeta = 1,$ and $\delta = 0$ (i.e., $\Ker$ ``perfectly'' separates the classes in the mixture $\rho$) and $\alpha_0 = \frac{1}{K^2}$, then straightforwardly we have that
    \[
        C\left(\frac{1}{K^2}, 0, 1, K\right) = \lp 1 +\frac{4(K^2 + 1)}{K}\rp \frac{(K-1)}{(K+1)^3} \leq  \frac{1}{2} <  1,
    \]
    and so our probability estimate becomes 
    \[
        \lp 1 - w_{min}^{-1} \exp \lp  -  \frac{\lambda(K+1)}{2K(K\alpha_0 + 1)}  \rp \rp^K,
    \] 
    which goes to $1$ as $\lambda \to \infty$.
\end{remark}

\begin{proof}[Proof of Proposition \ref{prop:cluster-exploration}]
We will provide a proof by induction. Suppose that the first $J < K$ samples belong to the sets $\mcl X_{i_j}$ with $i_j \neq i_k$ when $j \neq k$, and that $y_j = i_j$, where the $\mcl X_i$ are as in Definition \ref{def:class-separator}. With $C(\alpha_0, \varepsilon, \zeta, K)$ as defined in \eqref{eq:const-exploration-bound}, we claim that with probability at least
\begin{align*}
    1 &- \frac{\delta}{(1 - \delta) w_{min}} \exp\left( \frac{\lambda(K+1)(2\alpha_0 + 1)}{K\alpha_0^2 (K\alpha_0 + 1)} \right)   \\ 
    & \qquad - \frac{\lp  1 - w_{min}(1 - \delta)\rp}{(1 - \delta)w_{min}} \exp\left( \lambda \lp C(\alpha_0, \varepsilon, \zeta, K)  - 1\rp \frac{ K(K+1)\alpha_0^2}{K^2(\alpha_0 + \varepsilon)^2(K(\alpha_0 + \varepsilon) + 1)} \right)
\end{align*}
that $x_{J+1}$ will belong to $\mcl X_{i_{J+1}}$, with $i_{J+1} \neq i_j$, for all $j \leq J$, and that $y_{J+1} = i_{J+1}$. In words: with the given probability the subsequent sample will be from a $\mcl X_i$ for which we have not yet observed a label.

We notice that for any $x \in \mcl X_{i_j}$, for some $j = 1 \dots J$, we have that $\alpha_{i_j}(x) > \zeta + \alpha_0$, whereas all other $\alpha_i(x)$ are less than $\varepsilon + \alpha_0$. 
This then implies for $x \in \mcl X_{i_j}$ that 
\begin{align*}
    \mcl A_{var}(x) &= \frac{2 \sum_{i \neq k} \alpha_i \alpha_k}{\beta^2(\beta + 1)} \leq \frac{4(K-1)(\alpha_0 + \varepsilon) (\alpha_0 + 1) + K(K-1)( \alpha_0 + \varepsilon)^2}{(K\alpha_0 + \zeta)^2  (K \alpha_0 + \zeta + 1)} =: \mcl A_{var}^{lab}.
\end{align*}

On the other hand, for any $x \in \mcl X_i$ such that $i_j \neq i$ for all $j = 1 \dots J$ we have that $\alpha_k(x) < \varepsilon$ for all $k$. We then obtain that 
\begin{align*}
    \mcl{A}_{var}(x) &= \frac{2 \sum_{i \neq k} \alpha_i \alpha_k}{\beta^2(\beta + 1)} \geq \frac{K(K+1)\alpha_0^2}{K^2(\alpha_0 + \varepsilon)^2(K(\alpha_0 + \varepsilon) + 1)} =: \mcl A_{var}^{unl}.
\end{align*}
Let $\hat{\mcl X}$ denote the complement of all of the  $\mcl X_i$. We notice that at all points we have that the variance is smaller than 
\begin{align*}
    \mcl A_{var}(x) &\leq \frac{K(K+1)(\alpha_0 + 1)^2}{(K\alpha_0)^2 (K\alpha_0 + 1)} = \frac{(K+1)(\alpha_0 + 1)^2}{K\alpha_0^2 (K\alpha_0 + 1)} =: \mcl A_{var}^{back}.
\end{align*}

Now, we notice that the normalizing constant for our sampling distribution will be at least 
\[
    \int_\mcl X e^{\lambda V(\alpha(x))} \rho(x)\, dx \ge \sum_{i_k : k > J} w_{i_k} \int_{\mcl X_{i_k}} e^{\lambda V(\alpha(x))} \rho_{i_k}(x)\, dx \ge (1 - \delta) w_{min} \exp\lp \lambda \mcl A_{var}^{unl}\rp. 
\]
Now, we can bound
\begin{align*}
    \mcl A_{var}^{back} - \mcl A_{var}^{unl} &\leq \frac{(K+1)(2\alpha_0 + 1)}{K\alpha_0^2(K\alpha_0 + 1)}  \\ 
    \frac{\mcl A_{var}^{lab}}{\mcl A_{var}^{unl}}= &  \lp \frac{4(K-1)(\alpha_0 + \varepsilon) (\alpha_0 + 1) + K(K-1)( \alpha_0 + \varepsilon)^2}{K(K+1)\alpha_0^2} \rp \lp  \frac{K^2(\alpha_0 + \varepsilon)^2(K(\alpha_0 + \varepsilon) + 1)}{(K\alpha_0 + \zeta)^2  (K \alpha_0 + \zeta + 1)} \rp \\ 
    &\leq \frac{(K-1)(\alpha_0 + \varepsilon)^4}{(K+1)\alpha_0^2\lp \alpha_0 + \frac{\zeta}{K}\rp^2 } \lp 1 + \frac{4 (\alpha_0 + 1)}{K(\alpha_0 + \varepsilon)} \rp \\
    &= C(\alpha_0, \varepsilon, \zeta, K),
\end{align*}
so that 
\[
    \mcl A_{var}^{lab} - \mcl A_{var}^{unl} \leq \lp C(\alpha_0, \varepsilon, \zeta, K)  - 1 \rp  \mcl A_{var}^{unl}.
\]

Hence we have that the probability of sampling in either $\hat{\mcl X}$ or from one of the already labeled clusters $\mcl X_{i_j}$ will be at most 
\begin{align*}
    \mbb P\lp X_{J+1} \in \cup_{i_k: k \leq  J} \mcl X_{i_k} \cup \hat{\mcl X} \rp &=  \mbb P\lp X_{J+1} \in  \hat{\mcl X} \rp +  \mbb P\lp X_{J+1} \in \cup_{i_k: k \leq J} \mcl X_{i_k}  \rp \\ 
    &\leq  \frac{\int_{\hat{\mcl X}} e^{\lambda V(\alpha(x))} \rho(x)\, dx +  \int_{\cup_{i_k: k \leq J}\mcl X_{i_k}} e^{\lambda V(\alpha(x))} \rho(x)\, dx}{\int_{\mcl X} e^{\lambda V(\alpha(z))} \rho(z)\, dz} \\
    &\leq \frac{\delta}{(1 - \delta) w_{min}} \exp\left(\lambda \lp \mcl A_{var}^{back} - \mcl A_{var}^{unl}\rp\right) \\
    &\qquad + \frac{\lp  1 - w_{min}(1 - \delta)\rp}{(1 - \delta)w_{min}} \exp\left(\lambda \lp \mcl A_{var}^{lab} - \mcl A_{var}^{unl}\rp  \right) \\
    &< \frac{\delta}{(1 - \delta) w_{min}} \exp\left( \frac{\lambda(K+1)(2\alpha_0 + 1)}{K\alpha_0^2 (K\alpha_0 + 1)} \right)   \\
    & \qquad + \frac{ \lp  1 - w_{min}(1 - \delta)\rp}{(1 - \delta)w_{min}} \exp\left(\lambda \lp C(\alpha_0, \varepsilon, \zeta, K) - 1\rp \mcl A_{var}^{unl} \right),
\end{align*}
where the constant $C(\alpha_0, \varepsilon,\zeta, K)$ satisfies
\[
    \frac{\mcl A_{var}^{lab}}{\mcl A_{var}^{unl}} \leq C(\alpha_0, \varepsilon, \zeta, K) := \lp 1 + \frac{4(\alpha_0 + 1)}{K(\alpha_0 + \varepsilon)} \rp \lp \frac{(K-1)(\alpha_0 + \varepsilon)^4}{(K+1)\alpha_0^2 \lp \alpha_0 + \frac{\zeta}{K}\rp^2} \rp  < 1.
\] 
This proves our induction step, and then by iterating this bound (which was constructed to be independent of the particular group of indices $i_j$) we obtain the result.

\end{proof}

\begin{example}
The most straightforward application of the previous results would be in the setting when we use radial basis functions with support on $B(0,R)$ and the supports of the different mixture components are are separated by a distance greater than $R$ and each cluster has diamater less than $R$. In that context it is immediate to check that the kernel would be a class separator with $\delta , \varepsilon = 0$ and with $\zeta$ determined by the kernel and the maximum diameter of the clusters. By making appropriate choices of $\alpha_0,\lambda$ we can then make the probability of sampling from the unexplored clusters arbitrarily close to one. This type of result is analogous to the cluster exploration results given in \citep{karzand_maximin_2020}.
\end{example}

Of course, the previous example requires rather restrictive assumptions regarding the data components. In the next subsection, we provide details for a more flexible, data-adaptive framework in which we can demonstrate the separation property.

\subsubsection{Laplacian-based class separation} \label{subsec:laplacian-based-separation}

Many of the examples throughout the paper focus on Laplacian-based propagation operators. These kernels are data-adapted, and it is natural to guess that they will therefore be well-adapted to cluster separation. An analog of the type of separation that we consider here has recently been studied in the context of spectral clustering in \citep{trillos2021geometric}. In the interest of clarity, we only consider population-level distributions and mixture models, but their work also considers guarantees for finite samples of such a mixture, and the results we give in this section should extend to that setting as well.

One starting point from \citep{trillos2021geometric} is the concrete description that they give for a mixture model to be ``well-separated'' in an appropriate sense.

\begin{definition}[\citet{trillos2021geometric}]
    Consider a probability density $$\rho(x) = \sum_{k=1}^K w_k \rho_k(x)$$ on $\R^d$ with $w_k \geq 0$ and $\rho_k$ being probability densities. We call such a density a \emph{mixture model}, and each of the $\rho_k$ the mixture components. We then define the following parameters:
    \begin{enumerate}
        \item \textbf{Overlapping.} The overlapping, $\mathcal{S}$, of a mixture model is defined to be
        \[
        \mathcal{S} := \max_{i \neq j} \int \frac{\rho_i \rho_j}{\rho}\,dx,
        \]
        \item \textbf{Coupling.} The coupling, $\mathcal{C}$, of a mixture model is defined to be
        \[
        \mathcal{C} := \max_k \int \left|\frac{\nabla \rho_k}{\rho_k} - \frac{\nabla \rho}{\rho}\right| \rho_k \,dx,
        \]
        \item \textbf{Indivisibility.} The indivisibility, $\Theta$, of the mixture model is defined to be
        \[
        \Theta := \min_k \min_{u \perp 1, \int u^2 \rho_k = 1} \int |\nabla u|^2 \rho_k.
        \]
\end{enumerate}
\end{definition}

In \citet{trillos2021geometric} the authors' definition also applies to probability distributions defined on manifolds. We choose not to state our results in terms of manifolds as it doesn't align with other portions of this work, but all of the results in this subsection would apply in the manifold setting as well. In that work one of the main assumptions (Assumption 7 in \citet{trillos2021geometric}) is that
\begin{equation}\label{eqn:spectral-assumption}
    \begin{cases} &\rho, \rho_k \in C^1 \\
    &\Delta_\rho,\Delta_{\rho_k} \text{  have discrete spectrum } \\
    &L^2(\rho),L^2(\rho_k) \text{  each have an orthogonal eigenbasis.} \end{cases}\tag{S}
\end{equation}
We will take this as a standing assumption throughout this section. 

In the previous definition, the overlapping $\mcl S$ describes the amount to which pairs of the mixtures have coincident densities, relative to the underlying density. In a case where the mixtures have disjoint supports, this parameter will be zero. The indivisibility $\Theta$ is a measure of how difficult it is, in terms of Dirichlet energy, to divide any one of the clusters: this parameter would be large in the case of strongly log-concave mixture components. The coupling parameter $\mcl C$ is somewhat more subtle, and measures a type of relative entropy between $\rho$ and $\rho_k$: again this parameter would be zero for disjoint mixture components.

\citet{trillos2021geometric} then define a \textit{well-clustered mixture model} via the relationships 
\begin{equation} \label{eq:wellcluster-cond} 
    \mcl S \ll 1 \quad \text{and} \quad \frac{\mcl C}{\Theta} \ll 1.
\end{equation}
These conditions, of course, require some knowledge of the underlying distributions, and may be difficult to verify for a particular data set. However, most natural models of well-clustered data would satisfy these assumptions. For example, in the case where we have $K$ clusters with disjoint support we will have $\mcl S$ and $\mcl C$ both equal to zero. They would also hold for Gaussian mixtures which are sufficiently separated (and $\mcl S$ and $\mcl C$ could be quantified in terms of the means and variances). In this sense, we view these conditions as applicable to many different models of separated data components. 

With these definitions in hand, we now state our main exploration result, which guarantees that clusters are explored efficiently: this proposition largely turns out to be a direct consequence of the estimates given by \citet{trillos2021geometric}.

\begin{proposition}\label{prop:heat-class-sep}
    Consider a mixture model with parameters $(\mcl{S},\mcl{C},\Theta)$ which satisfies Assumption \eqref{eqn:spectral-assumption}. Let $\Ker_t(x^*, x)$ be the heat kernel with parameter $t$, namely the solution to the heat equation satisfying $\Ker_t(x^*,x) = e^{t \Delta_\rho} \delta_{x^*}$ with Neumann boundary conditions. Consider parameters $a,b,\sigma$ as in Proposition \ref{prop:low-mode-estimate}. Then, under the assumptions on parameters stated in Proposition \ref{prop:low-mode-estimate}, the kernel $\Ker_t$ will be a $(\delta,\zeta,\varepsilon)$ class separator with
    \[
    \delta = \tilde \delta + a^2, \quad \zeta= \frac{(1-\sin(b))^2}{w_{max}} \cos(2(\sigma + b)) - \xi, \quad \varepsilon = \frac{(1+\sin(b))^2}{w_{min}} \cos(\pi/2 - 2(\sigma + b)) + \xi,
    \]
    where
    \[
    \xi :=    |e^{\lambda_K t}-1| \frac{(1+\sin(b))^2}{w_{min}}  \|\rho\|_\infty + \min_{t_1,s > 0, t_1 + s < t}  e^{\lambda_{K+1} (t-t_1-s)}\|e^{t_1 \Delta}\|_{L^1(\rho) \to L^2(\rho)}\| e^{s \Delta}\|_{L^2(\rho) \to L^\infty(\rho)}
    \]
\end{proposition}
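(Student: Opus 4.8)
The plan is to diagonalize the heat semigroup and separate the first $K$ modes, which carry the cluster structure, from the tail, which is negligible for a suitable $t$. Let $\{\phi_j\}_{j\ge1}$ be an $L^2(\rho)$-orthonormal eigenbasis of $\Delta_\rho$ with eigenvalues $0=\lambda_1\ge\lambda_2\ge\cdots$, which exists by Assumption~\eqref{eqn:spectral-assumption}. Then
\[
  \Ker_t(x^*,x)=\sum_{j=1}^K e^{\lambda_j t}\phi_j(x^*)\phi_j(x)+\sum_{j>K}e^{\lambda_j t}\phi_j(x^*)\phi_j(x)=:\Ker_t^{\le K}(x^*,x)+\Ker_t^{>K}(x^*,x),
\]
and I would work with $t$ (and the auxiliary times $t_1,s$ below) in the regime required by Proposition~\ref{prop:low-mode-estimate}, whose hypotheses force a spectral gap at index $K$ out of the well-clustering conditions~\eqref{eq:wellcluster-cond}.

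First I would control $\Ker_t^{\le K}$. Since $\lambda_K\le\lambda_j\le0$ for $1\le j\le K$, we have $|e^{\lambda_j t}-1|\le|e^{\lambda_K t}-1|$, so $\Ker_t^{\le K}(x^*,x)$ differs from the reproducing kernel $R(x^*,x):=\sum_{j\le K}\phi_j(x^*)\phi_j(x)$ of the first $K$-dimensional eigenspace by at most $|e^{\lambda_K t}-1|\sum_{j\le K}|\phi_j(x^*)\phi_j(x)|$, which I would bound crudely by $|e^{\lambda_K t}-1|\tfrac{(1+\sin b)^2}{w_{min}}\|\rho\|_\infty$ — the first summand of $\xi$. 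This is where Proposition~\ref{prop:low-mode-estimate} enters: it produces the disjoint sets $\mcl X_i$ with $\rho_i(\mcl X_i)\ge1-\tilde\delta$, the parameters $a,b,\sigma$, and the statement that this eigenspace lies within principal angle $\sigma+b$ of $\operatorname{span}\{\mathbbm{1}_{\mcl X_k}/\sqrt{\rho(\mcl X_k)}\}_{k=1}^K$. Reading off the resulting pointwise estimates on $R$ — uniformly over $x^*\in\mcl X_i$, $x\in\mcl X_j$ one gets $R(x^*,x)\ge\tfrac{(1-\sin b)^2}{w_{max}}\cos(2(\sigma+b))$ on the diagonal blocks $i=j$ (using $\rho(\mcl X_i)\le w_{max}+\mathcal O(\mcl S)$ and the two principal-angle corrections) and $R(x^*,x)\le\tfrac{(1+\sin b)^2}{w_{min}}\cos(\pi/2-2(\sigma+b))$ on the off-diagonal blocks — gives the needed one-sided control of $\Ker_t^{\le K}$.

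Next I would bound the tail $\Ker_t^{>K}$. Writing $\Pi_{>K}=I-\sum_{j\le K}\phi_j\otimes\phi_j$ and $t=t_1+r+s$ with $t_1,s,r>0$, commutativity of the heat semigroup with $\Pi_{>K}$ gives $\Ker_t^{>K}(x^*,\cdot)=e^{s\Delta_\rho}\,e^{r\Delta_\rho}\,\Pi_{>K}\,e^{t_1\Delta_\rho}\delta_{x^*}$. Estimating the pieces in turn — smoothing $\delta_{x^*}$ into $L^2(\rho)$ at cost $\|e^{t_1\Delta}\|_{L^1(\rho)\to L^2(\rho)}$ (a probability measure as input), then $\Pi_{>K}$ followed by diffusion by $r$ keeps the function in the high-mode subspace and contracts it by $e^{\lambda_{K+1}r}$, and finally $e^{s\Delta_\rho}$ followed by pointwise evaluation at $x$ costs $\|e^{s\Delta}\|_{L^2(\rho)\to L^\infty(\rho)}$ — one obtains
\[
  |\Ker_t^{>K}(x^*,x)|\le e^{\lambda_{K+1}(t-t_1-s)}\,\|e^{t_1\Delta}\|_{L^1(\rho)\to L^2(\rho)}\,\|e^{s\Delta}\|_{L^2(\rho)\to L^\infty(\rho)}
\]
uniformly in $x^*,x$; minimizing over admissible $t_1,s$ yields the second summand of $\xi$. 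Combining via $|\Ker_t-\Ker_t^{\le K}|=|\Ker_t^{>K}|$ with the previous step then produces $\Ker_t(x^*,x)\ge\tfrac{(1-\sin b)^2}{w_{max}}\cos(2(\sigma+b))-\xi=:\zeta$ when $x^*,x$ lie in the same $\mcl X_i$, and $\Ker_t(x^*,x)\le\tfrac{(1+\sin b)^2}{w_{min}}\cos(\pi/2-2(\sigma+b))+\xi=:\varepsilon$ when they lie in different ones; the sets $\mcl X_i$ are those of Proposition~\ref{prop:low-mode-estimate}, and $\delta=\tilde\delta+a^2$, the extra $a^2$ being the measure one must discard where the low eigenfunctions fail to sit cleanly near a single cluster value. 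This verifies Definition~\ref{def:class-separator} with the stated parameters.

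The main obstacle is the low-mode estimate itself — passing from the $L^2$-level, subspace-angle control of \citet{trillos2021geometric} to pointwise, uniform-in-$(x^*,x)$ bounds on the reproducing kernel $R$ within and across clusters, and converting the $w_k$-weighted indicator normalizations into the stated $1/w_{max}$, $1/w_{min}$ factors and $\cos$-corrections. In this write-up that analysis is deferred to Proposition~\ref{prop:low-mode-estimate}; what remains is the spectral bookkeeping — checking that the $|e^{\lambda_K t}-1|$ truncation factor and the three-time heat-tail split assemble exactly into $\xi$ — together with verifying the measure bound $\rho_i(\mcl X_i)\ge1-\delta$. One must also choose $t$ large enough that $e^{\lambda_{K+1}(t-t_1-s)}$ is negligible while $|e^{\lambda_K t}-1|$ stays small, and that both can be arranged simultaneously is precisely the force of the assumptions $\mcl C/\Theta\ll1$ and $\mcl S\ll1$.
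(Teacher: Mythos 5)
Your proposal is correct and follows essentially the same route as the paper: a spectral decomposition of $\Ker_t$ into the reproducing kernel of the first $K$ eigenmodes (handled by Proposition \ref{prop:low-mode-estimate}), a correction term bounded by $|e^{\lambda_K t}-1|$ times the sup of the low-mode kernel, and a high-mode tail controlled via the three-time splitting with the $L^1(\rho)\to L^2(\rho)$ and $L^2(\rho)\to L^\infty(\rho)$ operator norms and the $e^{\lambda_{K+1}(t-t_1-s)}$ decay, exactly assembling into $\xi$. The only cosmetic difference is that the paper expands $\delta_{x^*}$ in the $\rho$-weighted inner product (so its low-mode terms carry an explicit $\rho(x^*)$ factor, which is the source of the $\|\rho\|_\infty$ in $\xi$), a factor you account for implicitly in your bound on the correction term.
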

\begin{proof}
We consider a sequence of eigenvalue, eigenfunction pairs $(\lambda_k, e_k)$ of the operator $\Delta_\rho$, normalized in the $\rho$-weighted $L^2$ norm. 
We then define $g(y) = e^{t_1\Delta}\delta_{x^*} - \sum_{k=1}^K e^{\lambda_k t_1} e_k(x^*)\rho(x^*)e_k(y)$, with $t_1$ being a parameter that we choose later. We can interpret $g$ as the projection of the solution to the heat equation onto the ``high'' Fourier modes. We find that
    \begin{equation}\label{eqn:spectral-decomp-heat}
        \Ker_t(x^*, x) = \sum_{k=1}^K e_k(x^*)\rho(x^*)e_k(x) + \sum_{k=1}^K (e^{\lambda_k t}-1) e_k(x^*)\rho(x^*)e_k(x) + e^{s\Delta}\sum_{k=K+1}^\infty e^{\lambda_k (t-t_1-s)} e_k(x) \langle e_k, g \rangle_{\rho}.
    \end{equation}
    We can then bound 
    \begin{equation*}
\left| \sum_{k=1}^K (e^{\lambda_k t}-1) e_k(x)e_k(x^*)\rho(x^*) \right| \leq |e^{\lambda_K t}-1| \max_{k=1 \dots K} \|e_k\|_{L^\infty(E_k)}^2\|\rho\|_\infty,
    \end{equation*}
    where we will only need to estimate this term in regions (i.e., a family of disjoint sets $E_k$ that are defined in Proposition \ref{prop:low-mode-estimate}, for the purposes of proving $(\delta,\zeta,\e)$ class separation) where the eigenfunctions $e_k$ are not too large. We can also bound
    \begin{align*}
    \left| e^{s \Delta} \sum_{k=K+1}^\infty e^{\lambda_k (t-t_1)} e_k(x) \langle e_k, g \rangle_\rho \right| &\leq \| e^{s \Delta}\|_{L^2(\rho) \to L^\infty(\rho)} \left\| \sum_{k=K+1}^\infty e^{\lambda_k (t-t_1)} e_k(x) \langle e_k, g \rangle \right\|_{L^2(\rho)} \\
    &\leq e^{\lambda_{K+1} (t-t_1)} \|g\|_{L^2(\rho)}\| e^{s \Delta}\|_{L^2(\rho) \to L^\infty(\rho)} \\
    &\leq e^{\lambda_{K+1} (t-t_1-s)}\|e^{t_1 \Delta}\|_{L^1(\rho) \to L^2(\rho)}\| e^{s \Delta}\|_{L^2(\rho) \to L^\infty(\rho)}.
    \end{align*}
   Hence we define
   \[
\xi := \min_{t_1,s > 0, t_1 + s < t}    |e^{\lambda_K t}-1| \max_{k=1 \dots K} \|e_k\|_{L^\infty(E_k)}^2\|\rho\|_\infty + e^{\lambda_{K+1} (t-t_1-s)}\|e^{t_1 \Delta}\|_{L^1(\rho) \to L^2(\rho)}\| e^{s \Delta}\|_{L^2(\rho) \to L^\infty(\rho)}.
   \]
   In turn, we only need to estimate the first term in Equation \eqref{eqn:spectral-decomp-heat}. This is considered in Proposition \ref{prop:low-mode-estimate}. That Proposition also provides an upper bound upon $\|e_k\|_{L^\infty(E_k)}^2$, which completes the proof.
\end{proof}

The quantity $\xi$ represents all of the ``high Fourier mode'' effects encoded by the heat kernel. In the case of well-separated mixtures, one would expect that $\lambda_K \sim 0$ and $\lambda_{K+1}$ would be much larger, and hence one would anticipate that it whould be possible to choose time parameters so that $\xi$ is small. This ``spectral gap'' is quantified in the following proposition from \citep{trillos2021geometric}.
\begin{proposition}[Propositions 27 and 38, \citet{trillos2021geometric}]
    Consider a mixture model with parameters $(\mcl{S},\mcl{C},\Theta)$ which satisfies Assumption \eqref{eqn:spectral-assumption}, and suppose that 
    $K \mcl S < 1$. Then the following bounds hold:
    \begin{align*}
        &\lp \sqrt{\Theta(1-K\mcl S)} - \frac{\sqrt{\mcl C K \mcl S}}{1 - \mcl S}\rp^2 \le \lambda_{K+1} \\
        &\lambda_K \leq \frac{K \mcl C}{1-K \mcl S^{1/2}}.
    \end{align*}
\end{proposition}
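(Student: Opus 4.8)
The plan is to obtain both inequalities by invoking the corresponding estimates of \citet{trillos2021geometric}, after checking that our setup is a special case of theirs. The first step is bookkeeping: identify the diffusion operator $\Delta_\rho$ (self-adjoint on $L^2(\rho)$, with Dirichlet form $\int |\nabla u|^2 \rho\, dx$ matching the functional defining the indivisibility $\Theta$), fix the indexing convention $\lambda_1 \le \lambda_2 \le \cdots$ for the eigenvalues of $-\Delta_\rho \ge 0$ (reconciling this increasing ordering with the sign convention used in the heat-kernel expansion \eqref{eqn:spectral-decomp-heat} is part of this step), and match our parameters $(\mcl S,\mcl C,\Theta)$ with the overlapping/coupling/indivisibility of \citet{trillos2021geometric}. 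One then verifies that our standing Assumption \eqref{eqn:spectral-assumption} together with $K\mcl S < 1$ are exactly the hypotheses of their Propositions 27 and 38. (One should also note that the $\lambda_K$ bound is only informative once $K\mcl S^{1/2} < 1$; since $\mcl S \le \mcl S^{1/2}$ when $\mcl S \le 1$, this is only a mild strengthening of $K\mcl S < 1$.) Granting the translation, the two displayed inequalities are then literally their conclusions.

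For completeness I would sketch why each bound holds. For the upper bound on $\lambda_K$, use the Courant--Fischer min--max characterization: $\lambda_K = \min_{\dim V = K}\ \max_{u \in V \setminus \{0\}} \frac{\int |\nabla u|^2 \rho}{\int u^2 \rho}$. I would take $V = \operatorname{span}\{\phi_1,\ldots,\phi_K\}$, where $\phi_k$ is (essentially) the function which is constant on the $k$-th mixture component; its Dirichlet energy is small because $\nabla\phi_k$ only picks up the discrepancy between $\nabla\rho_k/\rho_k$ and $\nabla\rho/\rho$, which is precisely what the coupling $\mcl C$ controls, while the Gram matrix of $\{\phi_k\}$ in $L^2(\rho)$ is within $O(K\mcl S^{1/2})$ of the identity because pairwise overlaps are bounded by $\mcl S$. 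Inverting this Gram matrix (which needs $K\mcl S^{1/2} < 1$) and combining the two estimates produces $\lambda_K \le \frac{K\mcl C}{1 - K\mcl S^{1/2}}$.

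The lower bound on $\lambda_{K+1}$ is the substantive direction, and here I would use the dual min--max: $\lambda_{K+1} = \max_{\dim V = K}\ \min_{u \perp V} \frac{\int |\nabla u|^2 \rho}{\int u^2 \rho}$. Taking $V$ to be the span of the low-frequency trial functions from the previous paragraph, one must show that any $u \perp V$ has Dirichlet energy at least $\bigl(\sqrt{\Theta(1-K\mcl S)} - \tfrac{\sqrt{\mcl C K\mcl S}}{1-\mcl S}\bigr)^2 \int u^2\rho$. The mechanism: localize $u$ to the clusters via a partition of unity adapted to the components; on each component, $u$ minus its $\rho_k$-mean is orthogonal to constants, so the indivisibility gives $\int |\nabla u|^2\rho_k \ge \Theta \int (u - \bar u_k)^2 \rho_k$; orthogonality of $u$ to $V$ forces the means $\bar u_k$ to be $O(\mcl S)$-small; and the mismatch between $\sum_k \int |\nabla u|^2\rho_k$ and $\int|\nabla u|^2\rho$, together with the cross terms created in the overlap regions, is exactly what the coupling $\mcl C$ and the factor $(1-\mcl S)^{-1}$ absorb. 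Summing over the $K$ components and using $K\mcl S < 1$ to close the estimate yields the claimed lower bound.

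The main obstacle — and the reason I would ultimately cite \citet{trillos2021geometric} rather than reproduce the argument in full — is precisely this $\lambda_{K+1}$ lower bound: the partition-of-unity localization, the control of the cross terms in the overlap regions, and the propagation of mean-smallness from orthogonality against $V$ all require delicate bookkeeping to land on the exact constant $\frac{\sqrt{\mcl C K\mcl S}}{1-\mcl S}$. Since the proposition is used only as an input to bounding the quantity $\xi$ in Proposition \ref{prop:heat-class-sep}, the cleanest route is to state it as a direct consequence of their Propositions 27 and 38 after the notational reconciliation described above.
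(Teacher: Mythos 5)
Your proposal matches the paper exactly: the paper states this result as a direct import of Propositions 27 and 38 of \citet{trillos2021geometric} and provides no proof of its own, which is precisely the route you ultimately take after the notational reconciliation. Your supplementary sketch of the min--max mechanisms (trial functions $\sqrt{\rho_k/\rho}$ for the $\lambda_K$ upper bound, the dual characterization plus indivisibility for the $\lambda_{K+1}$ lower bound) is consistent with how the cited work argues and is a reasonable addition, but it goes beyond what the paper itself supplies.
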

In turn, the main consideration in bounding $\xi$ relates to the choice of $t$ and the operator norm bounds on the heat kernel. The types of bounds that are implicitly assumed in the definition of $\xi$ are called \emph{ultracontractivity} estimates for the heat kernel, and are known to be finite under various classes of assumptions: a standard reference on the subject is \citep{daviesbook}. In particular, these bounds will hold for densities which are compactly supported and do not degenerate, or for mixtures whose densities are smooth and (asymptotically) log-concave. While for empirical data it will be challenging to verify bounds on $\xi$, there are many important models, including those used previously in the active learning literature \citep{karzand_maximin_2020, cloninger_cautious_2021}, where it is possible to bound $\xi$ (e.g., Gaussian mixtures, uniform distributions on sets with very small overlap). Furthermore, if we restrict our ``hypothesis class'' to be densities for which the heat kernel obeys desired bounds (for example by assuming uniform log-concavity on the mixture components), then we can completely control all of the remaining terms in Proposition \ref{prop:heat-class-sep} using the parameters $\mcl S, \mcl C, \Theta$. 

Now, we turn to a key estimate for the heat kernel, regarding the effect of the low Fourier modes after neglecting decay. For convenience, and following the notation in \citep{trillos2021geometric}, we define the spectral embedding $F$ to be the mapping from $\R^d$ to $\R^K$ given by
\[
F(x) := (e_i(x))_{i=1}^K.
\]

\begin{proposition}\label{prop:low-mode-estimate}
    Consider a mixture model with parameters $(\mcl{S},\mcl{C},\Theta)$. We define the parameters 
    \[
 \Xi := K\frac{(\Lambda -\sqrt{\mcl S})^2 }{4} + 4K^{3/2} \left( \frac{1}{\sqrt{1-K\Lambda}}-1\right),
    \]
    with 
    \[
    \Lambda := 4\left(\sqrt{\frac{\Theta(1-K\mcl S)}{\mcl C} - \frac{\sqrt{K\mcl S}}{(1-\mcl S)}} \right)^{-1} + \sqrt{\mcl S}.
    \]
    We assume that

    \begin{align}
        a,b > 0 \\ \label{eqn:param-assum-first}
        \frac{a \sin(b) }{\sqrt{w_{max}}} \geq \sqrt{\Xi} \\
        \mcl S  < \frac{w_{min}(1-\cos^2(\sigma))}{w_{max}\cos^2(\sigma)K^2}\\
        \Lambda - \sqrt{\mcl S} > 0 \\
        \Lambda K < 1 \\
        b + \sigma < \frac{\pi}{4} \label{eqn:param-assum-last}
    \end{align}
    and we let 
    \[
    \tilde \delta := \frac{w_{max} K^2 \mcl S}{w_{min}^2 (1-\cos^2(\sigma))} + \frac{1-\cos^2(\sigma)}{w_{min}} 
    \]

    Then there exists disjoint sets $E_k$ so that $\rho_k(E_k) > 1-(\tilde \delta + a^2)$, which satisfy the inequalities, for $x \in E_j, x^* \in E_k$
    \begin{align}
        \left|\sum_{k=1}^K e_k(x) e_k(x^*)\right| &= |\langle F(x),F(x^*) \rangle| \leq \frac{(1+\sin(b))^2}{w_{min}} \cos\lp \frac{\pi}{2} - 2(\sigma + b)\rp  \qquad \text{ if } j \neq k \\
        \left|\sum_{k=1}^K e_k(x) e_k(x^*)\right| &= |\langle F(x),F(x^*) \rangle| \geq \frac{(1-\sin(b))^2}{w_{max}} \cos(2(\sigma + b)) \qquad \text{ if } j = k \\
        \left|\sum_{k=1}^K e_k(x) e_k(x^*)\right| &= |\langle F(x),F(x^*) \rangle| \leq \frac{(1+\sin(b))^2}{w_{min}}  \qquad \text{ if } j = k
    \end{align}

\end{proposition}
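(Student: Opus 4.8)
The plan is to derive this proposition from the quantitative spectral-clustering estimates of \citet{trillos2021geometric}, converting their subspace-approximation bounds into the stated pointwise bounds on $\langle F(x), F(x^*)\rangle$. First I would set up the idealized comparison: the span of the low-frequency eigenfunctions $\{e_k\}_{k=1}^K$ of $\Delta_\rho$ is $L^2(\rho)$-close to the span of the ``cluster templates,'' i.e.\ the $\rho$-normalized indicators of the individual mixture components. The Davis--Kahan-type estimate underlying this closeness is precisely where $\Lambda$ — and hence $\Xi$ — comes from, using the spectral-gap bounds on $\lambda_K, \lambda_{K+1}$ quoted just above (Propositions 27 and 38 of \citet{trillos2021geometric}) together with the overlapping $\mcl S$ and coupling $\mcl C$. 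Concretely this produces an idealized embedding $\bar F : \mcl X \to \R^K$ that takes a single value $\bar v_k$ on (most of) component $k$, with $\|\bar v_k\|$ of order $w_k^{-1/2}$ — so lying between $(1-\text{slack})\,w_{max}^{-1/2}$ and $w_{min}^{-1/2}$ after accounting for the $\sigma$-tilt — with the $\bar v_j, \bar v_k$ for $j \neq k$ nearly orthogonal, the deviation from orthogonality controlled by $\mcl S$ (this is what the angle $\sigma$ and the hypothesis $\mcl S < w_{min}(1-\cos^2\sigma)/(w_{max}\cos^2\sigma K^2)$ encode), and with the total $L^2(\rho)$ error between $F$ and $\bar F$, summed over components, bounded by $\Xi$.

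Next I would define the good sets $E_k$ by removing from the bulk of component $k$ the points where $F$ strays too far from the template value $\bar v_k$. Choosing the magnitude threshold to be of order $\sin(b)\,w_{max}^{-1/2}$ (the largest deviation still guaranteeing an angular error at most $b$, since $\|\bar v_k\| \ge w_{max}^{-1/2}$), Markov's inequality turns the hypothesis \eqref{eqn:param-assum-first}, $\tfrac{a\sin b}{\sqrt{w_{max}}} \ge \sqrt{\Xi}$, into the measure bound that the bad part of component $k$ has $\rho_k$-measure at most $a^2$. Separately one discards the portion of $\mathrm{supp}(\rho_k)$ that overlaps significantly with other components; bounding this contribution via $\mcl S$ together with the $\sigma$-slack gives exactly the term $\tilde\delta = w_{max}K^2\mcl S/(w_{min}^2(1-\cos^2\sigma)) + (1-\cos^2\sigma)/w_{min}$. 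A union bound yields $\rho_k(E_k) > 1-(\tilde\delta + a^2)$, and the $E_k$ are disjoint because the templates $\bar v_k$ are well-separated.

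Finally I would carry out the elementary cone geometry. For $x \in E_k$, $F(x)$ lies within angular distance $b$ of the ray through $\bar v_k$ and has norm between $(1-\sin b)w_{max}^{-1/2}$ and $(1+\sin b)w_{min}^{-1/2}$; in particular $\|e_k\|_{L^\infty(E_k)}^2 \le (1+\sin b)^2/w_{min}$, which is one of the three claimed inequalities and the bound needed by Proposition \ref{prop:heat-class-sep}. When $x, x^* \in E_k$ the angle between $F(x)$ and $F(x^*)$ is at most $2b$, degraded by at most $2\sigma$ from the template tilt, so $\langle F(x), F(x^*)\rangle \ge (1-\sin b)^2 w_{max}^{-1}\cos(2(\sigma+b))$, while Cauchy--Schwarz gives the matching upper bound $(1+\sin b)^2 w_{min}^{-1}$. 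When $x \in E_j$, $x^* \in E_k$ with $j \neq k$, the angle between $F(x)$ and $F(x^*)$ is at least $(\text{angle between }\bar v_j, \bar v_k) - 2b \ge \tfrac{\pi}{2} - 2(\sigma+b)$, which is genuinely positive by hypothesis \eqref{eqn:param-assum-last}; hence $|\langle F(x), F(x^*)\rangle| \le (1+\sin b)^2 w_{min}^{-1}\cos(\tfrac{\pi}{2} - 2(\sigma+b))$. This gives the remaining two inequalities.

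The main obstacle will be the first step: making the template vectors $\bar v_k$ and their near-orthogonality fully rigorous, because the eigenfunctions $e_k$ are $\rho$-orthonormal whereas the conclusions concern the $\rho_k$-geometry of individual components. Threading the overlapping $\mcl S$ through this change of measure — so that it contributes correctly both to $\tilde\delta$ and to the $\sigma$ budget — and invoking the estimates of \citet{trillos2021geometric} with exactly the normalization that keeps the accumulated error at $\Xi$ rather than something larger, is the delicate part. Once that is in place, the Markov step and the trigonometric cone argument are routine, though bookkeeping the constants so that they land on the precise $(1\pm\sin b)^2$, $\cos(2(\sigma+b))$, and $\cos(\tfrac{\pi}{2}-2(\sigma+b))$ forms still requires care.
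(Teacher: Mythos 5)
Your proposal follows essentially the same route as the paper: both rest on the \citet{trillos2021geometric} machinery — the approximate eigenfunctions $\tilde q_k = \sqrt{\rho_k/\rho}$ as templates whose image on the high-purity sets $A_k$ lies in near-orthogonal cones of angle $\sigma$, the change of measure from $\rho$ to $\rho_k$ producing exactly the two terms of $\tilde\delta$, a Chebyshev-type step converting $\tfrac{a\sin b}{\sqrt{w_{max}}}\ge\sqrt{\Xi}$ into the $a^2$ mass loss, and the concluding cone trigonometry with radii $(1\pm\sin b)w_{\cdot}^{-1/2}$ and angle $\sigma+b$. The one place your described mechanism differs from what is actually available is the closeness estimate: \citet{trillos2021geometric} provide a Wasserstein bound $W_2^2(OF_\sharp\rho, F^Q_\sharp\rho)\le\Xi$ (up to an orthonormal rotation $O$, harmless for the inner products), not a pointwise $L^2(\rho)$ bound on $F-\bar F$, so your Markov step must be run through the optimal coupling via their cone-stability lemma (their Proposition 22) rather than applied pointwise; this yields the same conclusion but is the device the paper actually uses.
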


\begin{proof}
    The ingredients for this proof are all present in the proof of Theorem 10 in \citet{trillos2021geometric}. However, their statements are all made in terms of the overall probability of points which are in the support of a set of approximately orthogonal cones: in other words, they estimate probabilities in terms of $\rho$. This was natural in their unsupervised setting, whereas in ours we are closer to the supervised setting and care about the labels associated with each mixture component; that is, we want to estimate the size of sets in terms of the $\rho_k$. We simply sketch how the estimates we use can be obtained by small modifications of their proofs.
    
    In one of the first main steps in their proof, they identify the functions $\tilde q_k := \sqrt{\frac{\rho_k}{\rho}}$ as a system of approximate eigenfunctions, which are almost orthogonal in a way that can be quantified by $\mcl S$. Specifically, in the proof of Proposition 29, given an angle $0 < \sigma< \frac{\pi}{4}$ they define the disjoint sets
    \[
A_k := \left\{ x : \rho_k(x) > \cos^2(\sigma) \sum_{j=1}^K \rho_j(x) \right\}
    \]
    They then prove that $$\rho\left(\bigcup_{k=1}^K A_k\right) \geq 1 - \frac{w_{max} K^2 \mcl S}{w_{min} (1-\cos^2(\sigma))}.$$ We now extend their estimate to apply to the conditional probabilities $\rho_k$, as needed in estimating the class separator parameters in Definition \ref{def:class-separator}. We first notice that the previous inequality immediately implies that $$\rho_k\left( \left(\bigcup_{j=1}^K A_j\right)^c\right) \leq \frac{w_{max} K^2 \mcl S}{w_{min}^2 (1-\cos^2(\sigma))}.$$ On the other hand, by the definition of the $A_k$ we have that for $j \neq k$
    \[
    w_j \rho_k(A_j) \leq (1-\cos^2(\sigma)) w_j \rho_j(A_j) \leq (1-\cos^2(\sigma)) \rho(A_j),
    \]
    which after summing over $j \neq k$ gives
    \[
    w_{min} \rho_k\left(\bigcup_{j \neq k} A_j\right) \leq (1-\cos^2(\sigma)).
    \]
    This then implies that
    \[
    \rho_k(A_k^c) \leq \frac{w_{max} K^2 \mcl S}{w_{min}^2 (1-\cos^2(\sigma))} + \frac{1-\cos^2(\sigma)}{w_{min}} =: \tilde \delta.
    \]

     In terms of embeddings, they define the mapping $F^Q := (\tilde q_k)_{k=1}^K$. The proof of Theorem 10 in their paper shows that there exists an orthonormal matrix $O$ so that
    \[
   W_2^2(O F_\sharp \rho, F_\sharp^Q \rho) \leq K\frac{(\Lambda -\sqrt{\mcl S})^2 }{4} + 4K^{3/2} \left( \frac{1}{\sqrt{1-K\Lambda}}-1\right) =: \Xi,
    \]
    with 
    \[
    \Lambda := 4\left(\sqrt{\frac{\Theta(1-K\mcl S)}{\mcl C} - \frac{\sqrt{K\mcl S}}{(1-\mcl S)}} \right)^{-1} + \sqrt{\mcl S}
    \]
    Here we have used the notation $F_\sharp \rho$ to denote the standard push-forward measure, which is defined to be the measure so that $F_\sharp \rho( B) := \rho(F^{-1}(B))$ for Borel sets $B$.
    
    Using the definition of the Wasserstein distance, we can also infer that $$W_2^2(O F_\sharp \rho_k, F_\sharp^Q \rho_k) \leq  \frac{\Xi}{w_{min}}.$$

    We also notice that
    \[
    F^Q(A_k) \subset \left\{ z : \frac{z_k}{|z|} > \cos(\sigma),  \frac{1}{\sqrt{w_{max}}} \leq |z| \leq \frac{1}{\sqrt{w_{min}}}  \right\}.
    \]
    In words, the embedding $F^Q$ separates the $A_k$ into orthogonal cones with angle $\sigma$ and places the mass in $A_k$ at least distance $\frac{1}{\sqrt{w_{max}}}$ from the origin.

    Finally, in Proposition 22 of their paper they show stability bounds of cones under the Wasserstein distance. In particular, given a vector $v \neq 0$ and an angle parameter $0 < \sigma < \frac{\pi}{4}$, if we let the set $C$ be a cone defined by $$C := \left\{ x : \frac{\langle v,x \rangle}{|v||x|} \geq \cos(\sigma) \right\} $$ then we can define the set $C_r := C \cap (B(0,r))^c$.
    By following the proof of Proposition 22 by \citet{trillos2021geometric}, as applied to one cone as opposed to an orthogonal system of cones, one has that, for a probability measure $\mu_1$ with $\mu_1(C_r)>1-\delta$, then for any $a,b>0$ satisfying
    \[
    ra \sin(b) \geq W_2(\mu_1,\mu_2)
    \]
    and assuming that $0 < \sigma + b < \pi/4$, 
    we will also have that $\mu_2(\tilde C \cap (B(0,\tilde r))^c) > 1-\hat \delta$, with $\tilde C$ being the cone centered on $v$ with angle smaller than $\sigma + b$, $\tilde r = r(1-\sin(b))$ and $\hat \delta = \delta + a^2$. We furthermore note that their proof could be extended so that if we have $\mu_1(C \cap (B(0,R) \setminus B(0,r)))$ with $R>r$ then we will have $\mu_2(\tilde C \cap (B(0,\tilde R) \setminus B(0,\tilde r)))$ with $\tilde R = R(1+\sin(b))$. 

    Putting these facts together, we then have that if our parameters satisfy equations \eqref{eqn:param-assum-first}-\eqref{eqn:param-assum-last} and
  if we define a cone centered on the $k$-th coordinate vector with angle $\sigma +b$ to be $C_k$, then for $\tilde r = \frac{1-\sin(b)}{\sqrt{w_{max}}}$ and $\tilde R = \frac{1+\sin(b)}{\sqrt{w_{min}}}$ and letting $\tilde C_k = C_k \cap (B(0,\tilde R) \setminus B(0,\tilde r))$ we have
    \[
    OF_\sharp \rho_k(\tilde C_k) \geq 1-(\tilde \delta + a^2).
    \]

    We notice that if $z \in \tilde C_k$ and $z' \in \tilde C_j$ with $j \neq k$ then we have that $$\langle z, z' \rangle \leq \tilde R^2 \cos(\pi/2 - 2(\sigma + b)).$$ On the other hand, if $j = k$ we have that $$\tilde r^2 \cos(2(\sigma + b)) \leq \langle z,z' \rangle \leq \tilde R^2.$$ This then concludes the proof.

\end{proof}

The parameters in the previous propositions are somewhat involved, and for convenience we provide the following simplified result.
\begin{corollary}
    Suppose that $\mcl C \leq \kappa,  \mcl S \leq \kappa$ and that $\Theta \geq \sqrt{\kappa}$. Let $\|e^{ \Delta}\|_{L^1(\rho) \to L^\infty(\rho)} \leq \sqrt{M}$. Then for $\kappa$ sufficiently small and for $t = -\log(\kappa) \kappa^{1/2}$ we have that the heat kernel $\Ker_t$ is a $(\delta,\zeta,\e)$ class separator with $\delta \leq C \kappa^{1/4}, \zeta \geq w_{max}^{-1}-C(|\log(\kappa)|\kappa^{1/2} + M\kappa)$ and $\e \leq C(\kappa^{1/4}+M\kappa)$, where $C$ is a constant that depends only upon $K, w$ and $\|\rho\|_\infty$.
\end{corollary}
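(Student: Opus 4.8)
The plan is to treat this as a constant-tracking corollary: all of the analytic content is already contained in Propositions~\ref{prop:heat-class-sep} and \ref{prop:low-mode-estimate} together with the spectral-gap bounds of Propositions~27 and 38 in \citet{trillos2021geometric}, so what remains is (i) to substitute the hypotheses $\mcl S,\mcl C\le\kappa$, $\Theta\ge\sqrt\kappa$, $\|e^{\Delta}\|_{L^1(\rho)\to L^\infty(\rho)}\le\sqrt M$ and the stated time $t$; (ii) to choose the free parameters $a,b,\sigma$ (and the intermediate heat times $t_1,s$ hidden inside $\xi$) as explicit small powers of $\kappa$; (iii) to verify the standing constraints \eqref{eqn:param-assum-first}--\eqref{eqn:param-assum-last}; and (iv) to Taylor-expand $\delta=\tilde\delta+a^2$, $\zeta$ and $\e$ to leading order, absorbing every factor depending only on $K$ and $w$ into $C$.

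First I would dispose of the spectral data. Feeding $\mcl S,\mcl C\le\kappa$ and $\Theta\ge\sqrt\kappa$ into Propositions~27 and 38 of \citet{trillos2021geometric} gives $|\lambda_K|\le K\mcl C/(1-K\mcl S^{1/2})=O(K\kappa)$ and, for $\kappa$ small, $|\lambda_{K+1}|\gtrsim\bigl(\sqrt{\Theta(1-K\mcl S)}-\sqrt{\mcl C K\mcl S}/(1-\mcl S)\bigr)^2=\kappa^{1/2}(1+o(1))$; i.e. the first $K$ modes are essentially undamped while the rest decay at rate $\gtrsim\kappa^{1/2}$. Substituting into the definition of $\xi$ in Proposition~\ref{prop:heat-class-sep}: the first summand is at most $|\lambda_K|\,t\cdot\frac{(1+\sin b)^2}{w_{min}}\|\rho\|_\infty\lesssim|\log\kappa|\,\kappa^{1/2}$ by the choice of $t$, and for the second summand one selects $t_1,s$ with $t_1+s<t$, uses the ultracontractivity hypothesis together with the semigroup property and self-adjointness (so $\|e^{t_1\Delta}\|_{L^1\to L^2}^2\le\|e^{2t_1\Delta}\|_{L^1\to L^\infty}$, and likewise for the $L^2\to L^\infty$ factor) to bound $\|e^{t_1\Delta}\|_{L^1(\rho)\to L^2(\rho)}\|e^{s\Delta}\|_{L^2(\rho)\to L^\infty(\rho)}\lesssim M$, and bounds $e^{\lambda_{K+1}(t-t_1-s)}\lesssim\kappa$; hence $\xi\lesssim|\log\kappa|\,\kappa^{1/2}+M\kappa$, which is precisely the error term appearing in the stated bounds for $\zeta$ and $\e$. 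For the low-mode part, under $\Theta\ge\sqrt\kappa$ and $\mcl C\le\kappa$ one has $\Theta(1-K\mcl S)/\mcl C-\sqrt{K\mcl S}/(1-\mcl S)\gtrsim\kappa^{-1/2}$, so $\Lambda\lesssim\kappa^{1/4}$ and in turn $\Xi=\frac{K(\Lambda-\sqrt{\mcl S})^2}{4}+4K^{3/2}\bigl((1-K\Lambda)^{-1/2}-1\bigr)\lesssim C\,\kappa^{1/4}$ (using $(1-K\Lambda)^{-1/2}-1\le C K\Lambda$ once $K\Lambda$ is small). One then takes $\sigma$ with $\sin^2\sigma$ of order $\sqrt{\mcl S}$ so that the overlap constraint $\mcl S<w_{min}(1-\cos^2\sigma)/(w_{max}\cos^2\sigma K^2)$ holds and $\tilde\delta\lesssim C\sqrt\kappa$, takes $b$ as small as the constraint $\frac{a\sin b}{\sqrt{w_{max}}}\ge\sqrt{\Xi}$ permits, and takes $a$ so that $a^2\lesssim C\kappa^{1/4}$; then $\delta=\tilde\delta+a^2\le C\kappa^{1/4}$, and Taylor-expanding $\zeta=\frac{(1-\sin b)^2}{w_{max}}\cos(2(\sigma+b))-\xi$ and $\e=\frac{(1+\sin b)^2}{w_{min}}\cos(\pi/2-2(\sigma+b))+\xi$ in the small angles $\sigma,b$ gives $\zeta\ge w_{max}^{-1}-C(|\log\kappa|\kappa^{1/2}+M\kappa)$ and $\e\le C(\kappa^{1/4}+M\kappa)$, after which one checks that ``$\kappa$ small'' validates \eqref{eqn:param-assum-first}--\eqref{eqn:param-assum-last}.

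The hard part will be the interlocking choice of $a,b,\sigma,t_1,s$: they must satisfy all six inequalities \eqref{eqn:param-assum-first}--\eqref{eqn:param-assum-last} simultaneously, and since the ultracontractivity hypothesis is only posited at time one, one must balance the decay $e^{\lambda_{K+1}(t-t_1-s)}$ against the blow-up of the heat-semigroup norms at the small times $t_1,s$ while respecting $t_1+s<t$. The most delicate tension is that forcing $\zeta$ as close to $w_{max}^{-1}$ as advertised demands $\sin b$ and $\sigma$ to be (small powers of) $\kappa$, which via $a\sin b\gtrsim\sqrt{w_{max}\Xi}$ pushes $a$ upward, so the $\delta\le C\kappa^{1/4}$ conclusion hinges on $\Xi$ (equivalently $\Lambda$) being a small enough power of $\kappa$ --- this is exactly where the separation hypothesis $\Theta\ge\sqrt\kappa$ is used in an essential way, and getting the precise exponents claimed will require care not to over-estimate $\Xi$ or to be wasteful in the choice of $t$. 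Once the parameters are pinned down, substituting the resulting $(\delta,\zeta,\e)$ into Proposition~\ref{prop:cluster-exploration} shows that its probability estimate tends to $1$ as $\kappa\to0$ and $\lambda\to\infty$, which is the point of stating the corollary.
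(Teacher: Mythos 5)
Your proposal follows essentially the same route as the paper's proof: feed $\mcl S,\mcl C\le\kappa$, $\Theta\ge\sqrt{\kappa}$ into the spectral-gap bounds to get $|\lambda_K|\lesssim\kappa$ and $|\lambda_{K+1}|\gtrsim\kappa^{1/2}$, bound $\Lambda\lesssim\kappa^{1/4}$ and hence $\Xi$, pick $a,b,\sigma$ as small powers of $\kappa$ (the paper simply sets $a=b=\sigma=\kappa^{1/4}$, yielding $\tilde\delta\lesssim\kappa^{1/2}$), and use the choice of $t$ to bound $\xi\lesssim|\log\kappa|\kappa^{1/2}+M\kappa$ before reading off $(\delta,\zeta,\e)$. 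The only substantive difference is that you are more explicit about the ultracontractivity interpolation inside $\xi$ and you flag the tension in verifying the constraint $a\sin(b)/\sqrt{w_{max}}\ge\sqrt{\Xi}$, which the paper's proof passes over without comment.
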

\begin{proof}
    This simply amounts to choosing the parameters $a,b,\sigma$ appropriately, and then bounding all of the terms. Throughout this proof, we let $C$ be a constant that varies line by line, and may depend upon $K,w_{min}, w_{max}, \|\rho\|_\infty$, but does not depend upon the other parameters. First, we notice that
    \[
\left(\sqrt{\frac{\Theta(1-K\mcl S)}{\mcl C} - \frac{\sqrt{K\mcl S}}{(1-\mcl S)}} \right)^{-1} \leq C \kappa^{1/4}
    \]
    In turn
    \[
    \Xi \leq C\kappa^{1/2}
    \]
    Now let $a = \kappa^{1/4}$ and $b = \kappa^{1/4}$, and $\sigma = \kappa^{1/4}$. Then $\tilde \delta \leq C \kappa^{1/2}$.
    
    We also note that using our assumptions we have that $|\lambda_{K}| \leq \kappa$ and $|\lambda_{K+1}| \geq C\kappa^{1/2}$. Therefore by setting $t = -\log(\kappa)\kappa^{-1/2}$ we have that $|e^{\lambda_K t }- 1| \leq C|\log(\kappa)|\kappa^{1/2}$ and $|e^{\lambda_{K+1}(t-2)}| \leq C\kappa$. In turn this implies that $\xi \leq C (|\log(\kappa)|\kappa^{1/2} + M\kappa)$
    Putting this all together, then gives that our kernel is a $(\delta,\zeta,\e)$ class separator with $\delta \leq C\kappa^{1/2}$, $\zeta \geq w_{max}^{-1}-C(|\log(\kappa)|\kappa^{1/2} + M\kappa)$ and $\e \leq C(\kappa^{1/4}+M\kappa)$.
\end{proof}

\subsection{Derivation of asymptotic exploitation } \label{subsec:exploit-analysis}

In this section, we identify a formal continuum limit associated with Dirichlet Learning, and identify its steady states. For simplicity, we focus our attention on the case where $\mcl X$ is a bounded open set in $\mbb R^d$, and that the underlying features are associated with a smooth and bounded density $\rho(x) = \sum_{k=1}^K w_k \rho_k(x)$ supported on $\mcl X$ with smooth and bounded class-conditional densities $\rho_k(x)$ for $k \in [K]$. Recalling \eqref{eq:dirichlet-variance-af}, we furthermore define $V(\alpha(x)) :=  \mcl A_{var}(x)$ and assume that our acquisition function samples proportional to $e^{\lambda V(\alpha(x))}$ (i.e., the \textit{Proportional Sampling} policy introduced in \ref{sec:query-point-selection}). Furthermore, we will assume that the value of $\lambda$ is allowed to vary during the sampling process.

In this context, we can view the expected change in the pseudo-labels from the $\ell$-th to $(\ell+1)$-th observation by the relation
\[
    \mbb E_{X_{\ell + 1}, Y_{\ell + 1}}[\alpha^{\ell+1}_{k}(x) - \alpha^{\ell}_k(x) | (X_1, Y_1), \ldots, (X_\ell, Y_\ell)] =  \int_{\mcl X}  \frac{w_k\rho_k(z)}{\rho(z)}\Ker(z,x)\frac{e^{\lambda(\ell) V(\alpha^\ell(z))}}{\int_{\mcl X} e^{ \lambda(\ell) V(\alpha^\ell(\tilde{z}))}\,d\tilde{z}} \,dz,
\]
where $\alpha_k^\ell(x) \geq 0$ is the $k$-th entry of the concentration parameter vector $\alpha^\ell(x) \in \mbb R_+^K$ at the $\ell$-th observation. 

By assuming that we take a large number of discrete steps in one unit of ``continuous time'', the law of large numbers then formally leads to the evolution equation
\begin{align}\label{eq:Int-Diff-alpha}
\partial_t \alpha(x, t) &= \int_{\mcl X}  \frac{w_k\rho_k(z)}{\rho(z)}\Ker(z,x) \frac{e^{\lambda(t) V(\alpha(z,t))}}{\int_{\mcl X} e^{\lambda(t) V(\alpha(\tilde{z}, t))}\,d\tilde{z}} \,dz \nonumber \\
    &=:  \int_{\mcl X}  \eta^K(z) \Ker(z,x) q(z,t) \,dz,
\end{align}
where we have introduced the notation $\eta^K(x) \in \mbb R_+^K$ as the vector of weighted class-conditional densities concatenated together and $q(x,t)$ as the sampling distribution according to Dirichlet variance.
This evolution equation provides a convenient means of understanding the effect of newly observed labels on our future acquisitions in the limit of a large number of observations. Furthermore, it is natural to consider kernels that are increasingly localized when analyzing the large-sample limits of kernel methods \citep{devroye1996probtheory}.
With that in mind, it is instructive to consider a limiting case where $\Ker(z,x)$ is given by the Dirac mass $\delta_{z}(x)$. In that case, we can write \eqref{eq:Int-Diff-alpha} as simply
\begin{equation} \label{eq:ode-alpha}
    \partial_t \alpha(x, t) = q(x,t) \eta^K(x),
\end{equation}
from which we see that the trajectories of the $\alpha(x,t) \in \mbb R_+^K$ lie along the corresponding rays $R_x = \{a \eta^K(x) : a \geq 0\}$, whose speed is determined by the sampling $q(x,t)$. 

We can then write
\begin{equation} \label{eq:alpha-upto-t}
    \alpha(x,t) = \lp \int_0^t q(x,s)\,ds\rp \eta^K(x), 
\end{equation}
from which we can define the amount of sampling that has occurred at $x$ up to time $t \geq 0$ as
\begin{align*}
    \beta_x(t) := \sum_{k=1}^K \alpha_k(x,t) = \int_0^t q(x,s)\,ds,
\end{align*}
since the entries of $\eta^K(x)$ sum to $1$. By appealing to \eqref{eq:alpha-upto-t}, we can now write the variance at $x$ as
\begin{align*}
    V(\alpha(x,t)) &=  \frac{\sum_{i \neq j} \alpha_i(x,t) \alpha_j(x,t)}{\beta_x^2(t)(\beta_x(t) + 1)} = \frac{\sum_{i \neq j} \eta_i(x) \eta_j(x)}{\beta_x(t) + 1} = \frac{\sum_{k=1}^K \eta_k(x)(1 - \eta_k(x))}{ \beta_x(t) + 1}\\
    &=: \frac{G(x; \eta)}{\beta_x(t) + 1},
\end{align*}
where this function 
\begin{equation} \label{eqn:G-x-eta}
    G(x;\eta) = \sum_{k=1}^K \eta_k(x)(1 - \eta_k(x)) \in \left[0, 1 -\frac{1}{K}\right]
\end{equation}
is a measure of the population level uncertainty as in in Example \ref{example:2types-unc}. 

We now assume that the distribution of points sampled by the acquisition function, namely $q(x,t) = \frac{e^{\lambda(t)  V(\alpha(x,t))}}{\int_{\mcl X} e^{\lambda(t)  V(\alpha(z,t))} \,dz}$ is asymptotically convergent as $t \to \infty$. The question of whether this actually always occurs for solutions of the evolution equation \eqref{eq:ode-alpha} is not simple, but as $q$ will be weakly compact in $t$ it is natural to guess that it will approach some steady state $\bar q(x)$. If one were to use this time-independent distribution throughout the entire stochastic process, we would obtain the simple sampling relationship
\[
    \bar{\beta}_x(t) = \int_0^t \bar{q}(x) = t \bar{q}(x),
\]
with corresponding pseudo-label densities $\bar{\alpha}(x,t) = t \bar{q}(x) \eta^K(x)$. 

The assumption that $q(x,t) \to \bar{q}(x)$ as $t \to \infty$ suggests then that $\partial_t \alpha(x,t) \approx \partial_t \bar{\alpha}(x,t)$ for large enough $t$. From this, we can approximate $\beta_x(t) \approx \bar \beta_x(t) = t \bar{q}(x)$ for $t$ large. Thus, we expect that
\begin{equation} \label{eqn:barq-exp}
    q(x,t) \propto \exp\lp \frac{\lambda(t) G(x;\eta)}{t \bar{q}(x) + 1} \rp \approx \exp\lp \frac{\lambda(t) G(x;\eta)}{t \bar{q}(x)} \rp 
\end{equation}
as $t \to \infty$. This gives a non-linear algebraic relation that allows us to narrow our search for possible limiting sampling distributions. At this point, we consider two cases for the scaling of $\lambda(t)$: when (i) $\lambda(t) = \lambda_0 t^p = o(t)$ for $p < 1$  and (ii) $\lambda(t) = \lambda_0 t$ for $\lambda_0 > 0$. 

\textbf{Case 1:} When $\lambda(t) = \lambda_0 t^p$ for $p < 1$, then the the exponent in the last expression of \eqref{eqn:barq-exp} decreases like $t^{-(1-p)}$ for every $x \in \mcl X$ and the result is that the limiting distribution is uniform over the domain, $q(x,t) \to \bar q(x) = \lp\int_{\mcl X}\,dz\rp^{-1}$. Roughly, we have that the equivalent expression yields
\[
    q(x,t) = \lp \int_{\mcl X} \exp\left\{\frac{\lambda_0}{t^{1-p}} \lp \frac{G(z;\eta)}{\bar q(z)} - \frac{G(x;\eta)}{\bar q(x)} \rp  \right\} \,dz \rp^{-1}  \to \lp \int_{\mcl X}\,dz\rp^{-1}
\]
as $t \to \infty$ implying 
that the limiting distribution $\bar{q}(x)$ is uniform over $\mcl X$ when $\lambda(t) = o(t)$. 
We can interpret this as asymptotic \textit{exploration} that samples throughout the domain irrespective of the corresponding marginal density $\rho(x)$ in contrast to passive sampling via said marginal distribution. 

\textbf{Case 2:} Now consider the case when $\lambda(t) = \lambda_0 t$. The expression \eqref{eqn:barq-exp} the becomes autonomous in the large $t$ limit, and suggests that the limiting distribution in this case satisfies
\begin{equation} \label{eqn:qbar-lambda0}
    \bar q(x) \propto \exp\lp \frac{\lambda_0 G(x;\eta)}{\bar q(x)} \rp. 
\end{equation}
In the next section, we provide numerical simulations that suggest that for moderate values of $\lambda_0$ this $\bar q(x)$ emphasizes regions of greater population-level uncertainty; namely, where multiple conditional probabilities ($\eta_i(x) = \mbb P(Y = i| X = x)$) are simultaneously large. In other words, this limiting distribution asymptotically \textit{exploits} in regions around the true decision boundaries. For $\lambda_0 \ll 1$, however, we expect \eqref{eqn:qbar-lambda0} to imply that $\bar q(x)$ behaves like the uniform distribution on $\mcl X$.  

We notice that the two cases here both give asymptotic sampling densities which are independent of $\rho$. In large sample regimes, we posit that independence on $\rho$ ought to be viewed positively, meaning that we will explore (Case 1), or exploit (Case 2), in a fashion that is unbiased by the density $\rho$.


Of course, several steps in this derivation are formal. The passage to ``continuous time'' is likely justifiable using stochastic approximation techniques under appropriate scalings, but the details would be significant. The use of non-Dirac kernels certainly will increase the number of possible steady-state sampling densities, but given a density with compact support it also seems plausible that one could show that sampling density steady states need to be ``close'' to the one given by the Dirac mass kernel. Finally, the existence of steady-state densities is not completely obvious, especially in light of the $\alpha$ large approximations that we made. On the other hand, we do anticipate that it should be possible to demonstrate that sampling densities associated with the evolution equation \eqref{eq:ode-alpha} do converge to the type of steady states we have formally described here.

Although the work here is informal, we think that it gives valuable insight into the behavior observed in Figure \ref{fig:unc-demo}, where DiAL effectively transitions from exploratory to exploitative behavior. We leave more rigorous proof of the types of formulas given in this section for future work. We now present some numerics to illustrate the behavior of \eqref{eqn:barq-exp} in the two cases of $\lambda(t)$ that we've considered here.

\subsubsection{Numerical Demonstration} \label{subsubsec:num-demo-uncertainty}

We return to the mixture of Gaussians setup of Example \ref{example:2types-unc} and numerically solve the evolution equation \eqref{eq:ode-alpha} for $\alpha(x,t)$ up to $t = 10^{10}$ in order to simulate convergence of $q(x,t) \to \bar q(x)$ for various scalings of $\lambda(t)$. In panel (b) of Figure \ref{fig:unc-demo-lambda}, we plot the corresponding distributions $\bar q(x)$ for algebraic scalings of $\lambda(t) = t^p$ for some values of $p \in [0,1]$; note that the case of $p = 0$ corresponds to $\lambda(t) = \lambda_0 = 1$. In panel (c), we plot $\bar q(x)$ for  $\lambda(t) = \lambda_0 t$ for a range of values of $\lambda_0$.

As suggested by our analysis above, we see in panel (b) that for $\lambda(t)$ constant (and for sufficiently small powers $p < 1$), the limiting distribution $\bar q(x)$ corresponds to uniform sampling over the whole domain. We can interpret this as asymptotic \textit{exploration} that samples throughout the domain irrespective of the corresponding marginal density $\rho(x)$ in contrast to passive sampling via said marginal distribution. As $p \to 1$, however, we observe that $\bar q(x)$ focuses on the regions of the domain that lie near the true decision boundary between the classes; we can interpret this as asymptotic \textit{exploitation}. Similarly, in panel (c) of Figure \ref{fig:unc-demo-lambda}, we observe the influence of $\lambda_0$ on $\bar q(x)$, including the expected behavior as $\lambda_0 \ll 1$ corresponding to the uniform distribution.  

These numerical experiments simply demonstrate the intimately coupled nature of the scaling of $\lambda(t)$ and the limiting distribution $q(x,t) \to \bar q(x)$ of this stochastic process that represents Dirichlet Active Learning (DiAL). In addition to further work on analyzing the steady states of the stochastic process associated with DiAL, we suggest that an in-depth numerical exploration into this process is warranted pursuant our findings here.

\begin{figure} 
    \centering
    \begin{subfigure}{0.4\textwidth}
        \centering
        \includegraphics[width=\textwidth]{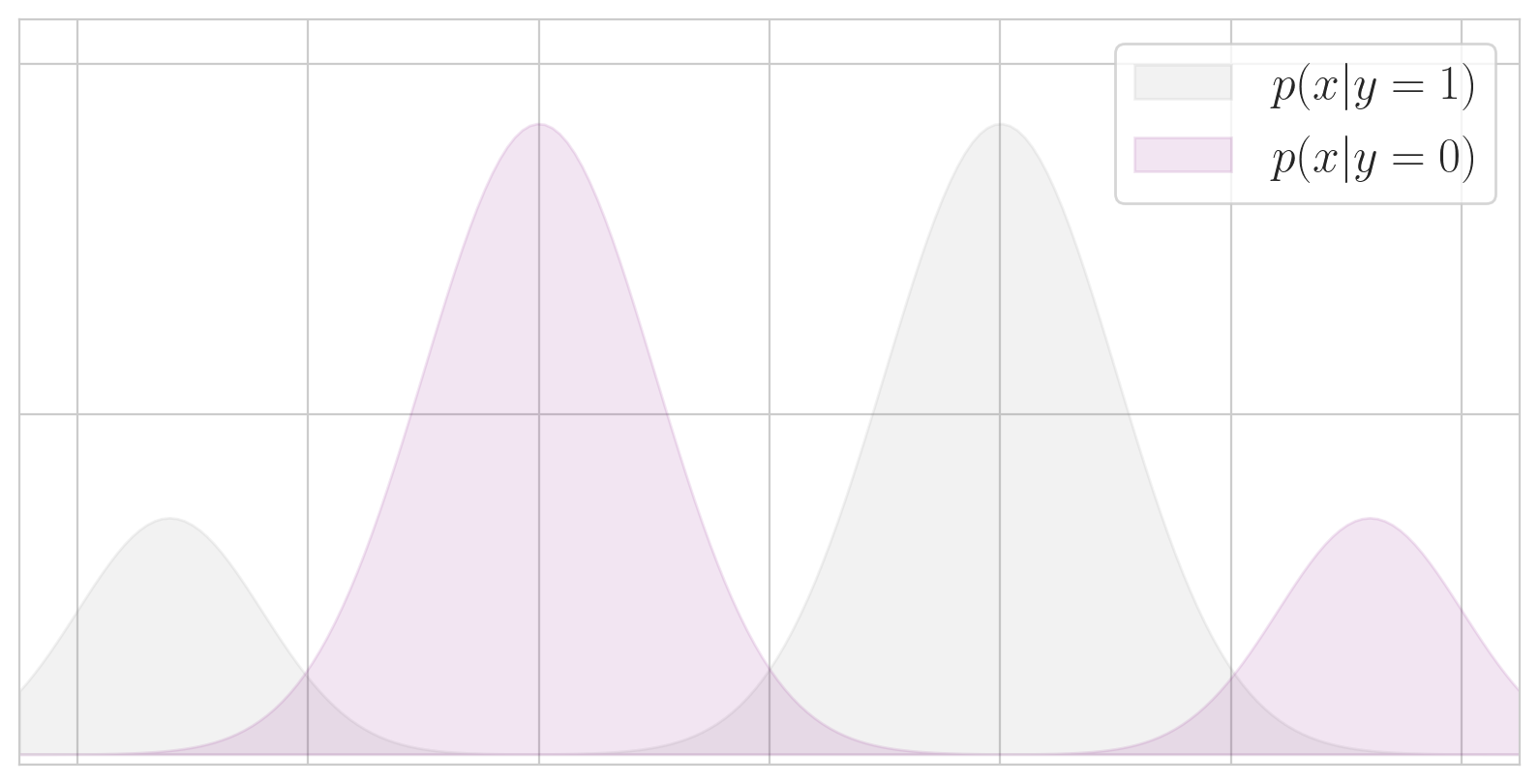}
            \caption{Setup of 2 classes}
    \end{subfigure}
    \\
    \begin{subfigure}{0.49\textwidth}
        \centering
        \includegraphics[width=\textwidth]{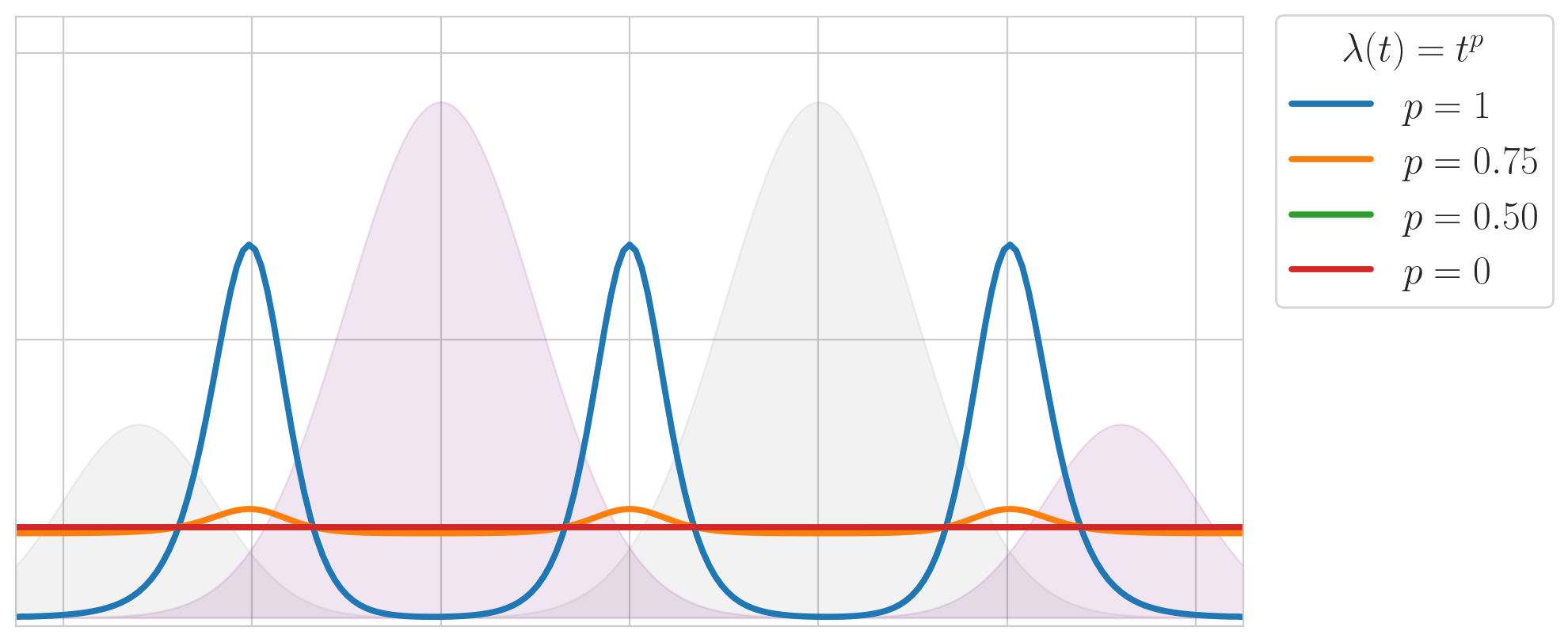}
        \caption{$\bar q(x)$ when $\lambda(t) = t^p$}
    \end{subfigure}
    \begin{subfigure}{0.49\textwidth}
        \centering
        \includegraphics[width=\textwidth]{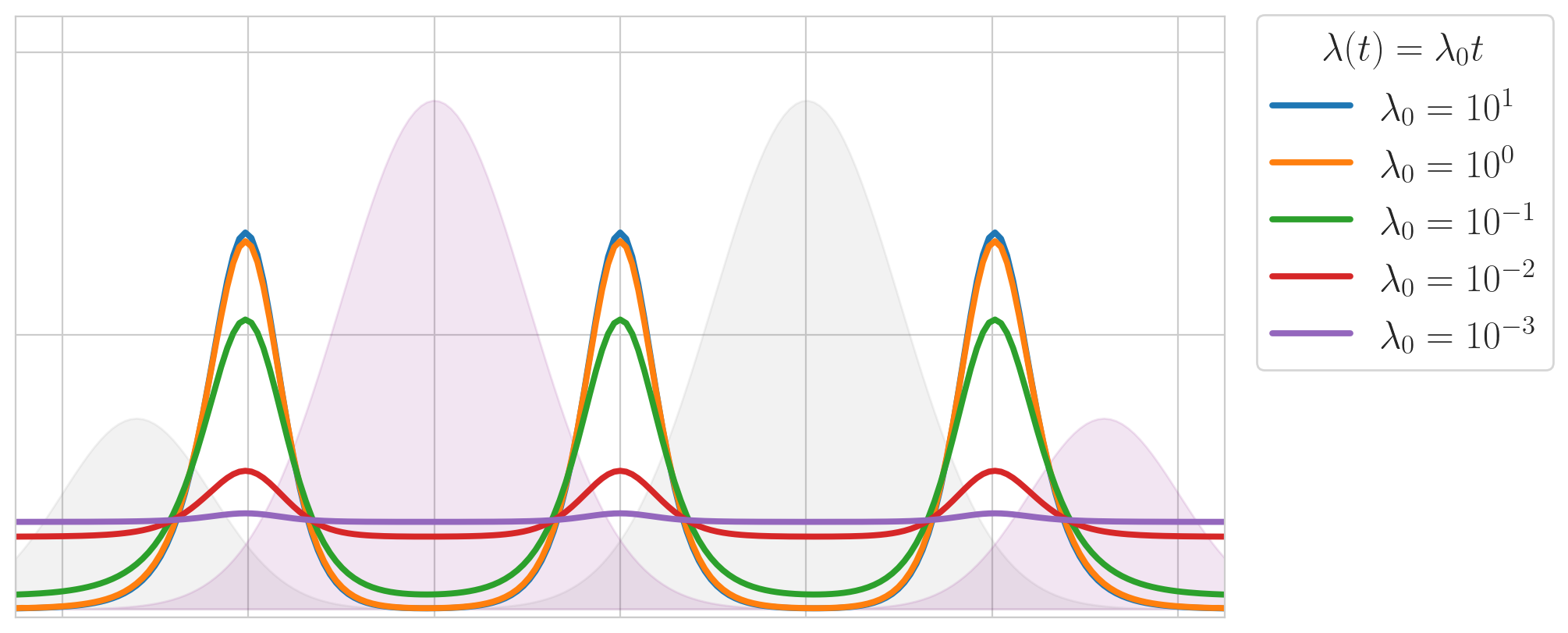}
        \caption{$\bar q(x)$ when $\lambda(t) = \lambda_0 t$}
    \end{subfigure}
    \caption{Numerical demonstration of the effect of $\lambda(t)$ on the steady state distribution $\bar q(x)$ in a 1D setting. With class-conditional distributions $p(x | y= 0), p(x | y=1)$ show respectively in gray and purple panel (a), we numerically solve \eqref{eq:ode-alpha} up to $t = 10^{10}$ for different scalings of $\lambda(t)$ to simulate convergence of $q(x,t) \to \bar q (x)$. Panels (b) and (c) suggest that the limiting behavior when $\lambda(t) = \lambda_0 t$ for $\lambda_0 \gg 0$ corresponds to \textit{asymptotic exploitation}, whereas when $\lambda(t) = \lambda_0 t^p$ for $p < 1$ or $\lambda_0 \ll 1$ corresponds to \textit{asymptotic exploration}.
    }
    \label{fig:unc-demo-lambda}
\end{figure}

\subsection{Consistency of Dirichlet Learning classifier}\label{sec:dir-learning-consistency}

We now turn to establishing the asymptotic consistency of the underlying classifier~\eqref{eq:inference} associated with Dirichlet Learning in the binary case ($K=2$)
\[
    \hat{y}(x) = \argmax_{k = 1, 2} \alpha_k(x) = \argmax_{k = 1, 2} \sum_{x^\ell \in \mcl L_k} \Ker(x^\ell, x).  
\]
We consider an open, connected, smooth, and bounded domain $\mcl X \subset \mbb R^d$ with $n$ observed labeled data pairs $(X_i, Y_i) \in \mbb R^d \times \{0, 1\}, i = 1, 2, \ldots, n$ drawn \textit{independently and identically} according to a joint distribution with density $\nu$ with marginal over the inputs given by the density $\rho(x)$. Notice that we have shifted the labels to be $Y_i \in \{0,1\}$ as is common practice in analyzing methods in the binary classification case. Consider the \textit{density-dependent} heat kernel propagation $\Ker(z, x) \equiv\Ker_t(z, x)$ with source $z \in \mcl X$ that solves 
\begin{equation} \label{eq:heat-eqn}
\begin{dcases}
\begin{aligned}
    \partial_t\Ker_t(z, x) - \Delta_\rho\Ker_t(z, x)&= 0  &x \in \mcl X, t > 0 \\
   \Ker_0(z, x) &= \delta_z(x) & \\
\end{aligned}
\end{dcases}
\end{equation}
where $\Delta_\rho = \frac{1}{\rho} \operatorname{div} \lp \rho^2 \nabla \cdot \rp$ is a self-adjoint diffusion operator with respect to the $\rho$-weighted inner product. As mentioned previously, please note that the differential operators in \eqref{eq:heat-eqn} are with respect to the variable $x$, while the variable $z$ is the source. Furthermore, recall that depending on the domain $\mcl X$, one must assume boundary conditions to make \eqref{eq:heat-eqn} well-defined, but we will instead state assumptions on the heat kernel, $\Ker_t(z,x)$, and provide reasonable example situations in which these assumptions hold. 


\begin{remark}\label{remark:why-iid}
    Asymptotic consistency of classifiers is nearly always studied under the setting of labeled data drawn i.i.d. from the underlying data-generating distribution. However, in the case of active learning it is important to note that the labeled data is \textit{not} drawn i.i.d. from this distribution; rather, the sequence of observed data points is highly dependent on the previously labeled points. Furthermore, when the active learning policy selects query points randomly according to the distribution of acquisition function values (e.g., Prop.\,Sampling from Table~\ref{table:dirvar-defs}), this evolving distribution does not align with the underlying marginal density $\rho$. As such, we emphasize that the result of this section is to establish a classical type of consistency property of the Dirichlet Learning classification rule in this continuum limit setting (i.e., infinite unlabeled data) when the labeled data is drawn i.i.d. from the joint distribution.

    The establishment of consistency of the Dirichlet Learning classifier with the biased sampling of query points according to Dirichlet Variance is left as a future direction of work.  
    We posit, however, that a reasonable simplification can come from assuming that the distribution of proportional sampling has converged to the steady-state ``asymptotic exploitation'' distribution that we derived in the previous section. 
\end{remark}

For a data density $\rho(x)$ that corresponds to the marginal distribution of inputs for the joint distribution of $(X, Y) \in \mcl X \times \{0,1\}$, define the \textit{true} trend function $\eta(x) := \mbb E[Y | X=x]$. Showing consistency of the Dirichlet Learning classifier can be related to the convergence of an estimated trend function related to \eqref{eq:dir-learning-classifier}. We define the Dirichlet Learning estimated trend function to be
\begin{equation} \label{eq:eta-hat}
    \hat{\eta}(x) := \frac{1}{n\mbb E_\rho[\Ker_t(Z, x)]} \sum_{i=1}^n Y_i\Ker_t(X_i, x), 
\end{equation}
which gives that $\hat{y}(x) = \mathbbm{1}\{ \hat{\eta}(x) \geq \frac{1}{2} \}$.

We make the following assumptions that will simplify our consistency proof:
\begin{assumption} \label{assumption:unif-cts}
    The true trend function, $\eta(x)$, is uniformly continuous on $\mcl X$. 
\end{assumption}

\begin{assumption} \label{assumption:bounded-p-moment}
    Given open and connected domain $\mcl X \subset \mbb R^d$ and density $\rho : \mcl X \rightarrow \mbb R_+$, then there exists $p \geq 2$ such that 
    \[
        \|\Ker_t(\cdot, x)\|_{L^p(\rho)} \leq t^{-\frac{d}{2}}
    \]
    for each $x \in \mcl X$ and $t \leq 1$. 
\end{assumption}

\begin{assumption} \label{assumption:bound-unweighted-int}
    Given open and connected domain $\mcl X \subset \mbb R^d$ and density $\rho : \mcl X \rightarrow \mbb R_+$, then there exists a universal constant $\theta > 0$ such that 
    \[
        \sup_{z \in \mcl X} \lp \int_\mcl X\Ker_t(z,x) dx \rp  \leq \theta < \infty
    \]
    for all $t \leq 1$.
\end{assumption}



\begin{remark}
The question of uniform heat kernel estimates has received significant attention in the mathematical community, and classical references on the topic include \citep{daviesbook,grigoryan2009heat,ouhabaz2009analysis}. These types of estimates are intimately linked with log-Sobolev and isoperimetric inequalities, which are also often linked with quantifiable sampling estimates. One notable class where these types of bounds hold are for log-concave distributions. Another is for densities supported on a bounded, smooth domain, which are bounded uniformly away from zero on the entire domain. We have assumed heat kernel estimates of this form for convenience, and certainly this assumption could be relaxed somewhat at the cost of added complexity in the proofs.
\end{remark}


We now state the following theorem that implies consistency of the Dirichlet Learning classifier decision rule \eqref{eq:dir-learning-classifier}.
\begin{thm} \label{thm:consistency-dir-learning}
    Consider the binary case of the Dirichlet Learning classifier
    \begin{equation} \label{eq:dir-learning-classifier}
        \hat{y}(x) := \begin{cases} 
                            1 & \text{if } \sum_{i=1}^n Y_i\Ker_t(X_i, x) \geq \sum_{i=1}^n (1-Y_i)\Ker_t(X_i,x) \\
                            0 & \text{otherwise}\\
                    \end{cases},
    \end{equation}
    with $\Ker_t(z,x)$ the density-dependent heat kernel defined in \eqref{eq:heat-eqn} with open, bounded, and connected domain $\mcl X \subset \mbb R^d$ and for times $t \in [0,1]$. Let $\{(X_i, Y_i)\}_{i=1}^n$ be samples from the underlying joint distribution $P(X, Y)$. Assume that the true trend function $\eta(x) = \mbb E [Y | X = x]$ is uniformly continuous on $\mcl X$ (Assumption~\ref{assumption:unif-cts}) and that Assumptions~\ref{assumption:bounded-p-moment} and \ref{assumption:bound-unweighted-int} are satisfied. 
    If $t \rightarrow 0$ and $n t^{\frac{d}{2}} \rightarrow \infty$ as $n \rightarrow \infty$, then for $\delta \in (0,1)$, the estimated trend function~\eqref{eq:eta-hat} 
    for the Dirichlet Learning classifier \eqref{eq:dir-learning-classifier} satisfies
    \[
        \int_\mcl X \left| \hat{\eta}(x) - \eta(x) \right| \rho(x) dx \leq \Xi(\theta) \sqrt{\frac{\ln\lp \frac{1}{\delta}\rp}{n}} 
    \]
    with probability greater than $1- \delta$, where the constant $\Xi(\theta)$ only depends on the universal constant $\theta < \infty$ as given in Assumption~\ref{assumption:bound-unweighted-int}. 
\end{thm}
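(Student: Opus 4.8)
The plan is to control $\int_{\mcl X}|\hat\eta-\eta|\,\rho\,dx$ by a bias--variance split combined with a bounded-differences inequality applied to the \emph{$x$-integrated} error (working pointwise and then taking a supremum would be too lossy, since the bandwidth $t$ shrinks with $n$). Write $g(x):=\mbb E_\rho[\Ker_t(Z,x)]=\int_{\mcl X}\Ker_t(z,x)\rho(z)\,dz$ for the deterministic normalizer in \eqref{eq:eta-hat}. Conditioning on $X$ and using $\mbb E[Y\mid X]=\eta(X)$ yields $\mbb E[\hat\eta(x)]=g(x)^{-1}\int_{\mcl X}\eta(z)\Ker_t(z,x)\rho(z)\,dz$, a genuine $\Ker_t$-weighted average of $\eta$ (the weights $\Ker_t(z,x)\rho(z)/g(x)$ integrate in $z$ to $1$ by the definition of $g$). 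I would split $\hat\eta(x)-\eta(x)=b(x)+f(x)$ with $b(x)=g(x)^{-1}\int_{\mcl X}(\eta(z)-\eta(x))\Ker_t(z,x)\rho(z)\,dz$ and $f(x)=\frac1{n g(x)}\sum_{i=1}^n\big(Y_i\Ker_t(X_i,x)-\mbb E[Y\Ker_t(X,x)]\big)$. For the bias, I would estimate $|b(x)|$ by splitting the $z$-integral at a radius $r$: on $\{|z-x|<r\}$ the contribution is at most $\omega_\eta(r)$ (the modulus of continuity, Assumption~\ref{assumption:unif-cts}) times the total weight $1$, and on $\{|z-x|\ge r\}$ it is controlled by Gaussian off-diagonal decay of $\Ker_t$ and the comparability of $g(x)$ to $\rho(x)$ (a consequence of self-adjointness of $e^{t\Delta_\rho}$ with respect to $\rho\,dx$ together with the heat-kernel estimates behind Assumptions~\ref{assumption:bounded-p-moment}--\ref{assumption:bound-unweighted-int}); optimizing $r=r(t)$ gives $\sup_x|b(x)|\to0$ as $t\to0$, so $\int|b|\rho\,dx$ is lower order in the stated regime.

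For the fluctuation, put $\psi_i(x):=\frac{\rho(x)}{g(x)}\big(Y_i\Ker_t(X_i,x)-\mbb E[Y\Ker_t(X,x)]\big)$, so the $\psi_i$ are i.i.d.\ mean-zero elements of $L^1(\mcl X,dx)$ and $\int|f|\rho\,dx=\big\|\tfrac1n\sum_i\psi_i\big\|_{L^1(dx)}=:F(\mathbf{Z}_1,\dots,\mathbf{Z}_n)$ with $\mathbf{Z}_i=(X_i,Y_i)$. Replacing a single $\mathbf{Z}_j$ changes $F$ by at most $\tfrac1n(\|\psi_j\|_{L^1}+\|\psi_j'\|_{L^1})$, and $\|\psi_i\|_{L^1}\le\int\frac{\rho(x)}{g(x)}\Ker_t(X_i,x)\,dx+\int_{\mcl X}\rho\,dx$; the second term equals $1$, and the first is bounded by a constant $C(\theta)$ uniformly in $X_i$ via the detailed-balance relation $\rho(x)\Ker_t(z,x)=\rho(z)\Ker_t(x,z)$, the boundedness of $\rho/g$, and Assumption~\ref{assumption:bound-unweighted-int}. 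Hence the bounded-difference constants satisfy $c_j\le 2C(\theta)/n$, so $\sum_j c_j^2\le 4C(\theta)^2/n$, and McDiarmid's inequality gives $F\le\mbb E[F]+C(\theta)\sqrt{\ln(1/\delta)/(2n)}$ with probability $\ge1-\delta$.

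It remains to bound $\mbb E[F]$. By Jensen and Fubini, $\mbb E[F]=\int_{\mcl X}\mbb E\big|\tfrac1n\sum_i\psi_i(x)\big|\,dx\le n^{-1/2}\int_{\mcl X}\sqrt{\mathrm{Var}(\psi_1(x))}\,dx$, and $\mathrm{Var}(\psi_1(x))\le\frac{\rho(x)^2}{g(x)^2}\mbb E[\Ker_t(X,x)^2]$ (using $Y\in\{0,1\}$); bounding $\mbb E[\Ker_t(X,x)^2]=\|\Ker_t(\cdot,x)\|_{L^2(\rho)}^2$ by Assumption~\ref{assumption:bounded-p-moment}, then applying Cauchy--Schwarz in $x$ and Assumption~\ref{assumption:bound-unweighted-int}, gives $\int_{\mcl X}\sqrt{\mathrm{Var}(\psi_1(x))}\,dx\le C(\theta)\,t^{-d/4}$, whence $\mbb E[F]\le C(\theta)(nt^{d/2})^{-1/2}$, which vanishes because $nt^{d/2}\to\infty$. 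Collecting the three contributions, the hypotheses $t\to0$ and $nt^{d/2}\to\infty$ force the bias and $\mbb E[F]$ terms to vanish, leaving $\int_{\mcl X}|\hat\eta-\eta|\rho\,dx\le\Xi(\theta)\sqrt{\ln(1/\delta)/n}$ with probability $\ge1-\delta$; consistency of the plug-in rule $\hat y=\mathbbm{1}\{\hat\eta\ge\tfrac12\}$ then follows from the classical bound on excess classification risk by $\|\hat\eta-\eta\|_{L^1(\rho)}$.

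The main obstacle I anticipate is the uniform control of the normalizer $g(x)=\mbb E_\rho[\Ker_t(Z,x)]$: it must be comparable to $\rho(x)$ \emph{uniformly in $x$ and in $t\le1$} for the bias split, the bounded-difference bound, and the expectation bound to all go through, and this rests on on-diagonal, Gaussian off-diagonal, and mass (Assumption~\ref{assumption:bound-unweighted-int}) heat-kernel estimates that must hold uniformly up to $\partial\mcl X$. These are the ultracontractivity-type bounds alluded to after the assumptions, classical for, e.g., uniformly log-concave densities or densities bounded away from zero on a bounded smooth domain, but genuinely delicate near the boundary, which is why they are imposed as hypotheses. A secondary subtlety is the coupling of $n$ with $t\to0$, which forces the concentration to be applied to the integrated error rather than pointwise; here the use of the deterministic denominator $g(x)$ in \eqref{eq:eta-hat} (rather than the empirical sum $\sum_i\Ker_t(X_i,x)$) is what keeps the argument clean, as no random, possibly-small denominator ever appears.
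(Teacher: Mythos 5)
Your proposal follows essentially the same route as the paper's proof: a bias--variance decomposition of $\hat\eta-\eta$, with the bias handled via uniform continuity of $\eta$ and the vanishing relative mass of $\Ker_t(\cdot,x)\rho$ outside a small ball, the expected fluctuation via a second-moment/H\"older bound using Assumption~\ref{assumption:bounded-p-moment}, and McDiarmid's inequality applied to the $x$-integrated error with bounded differences of order $\theta/n$ from Assumption~\ref{assumption:bound-unweighted-int}. The only differences are organizational (you center the martingale part before applying McDiarmid, whereas the paper applies it directly to $\int|\eta-\hat\eta|\rho\,dx$; the deterministic bias does not affect the bounded differences, so the two are equivalent), and both arguments lean on the same implicit uniform comparability of $\mbb E_\rho[\Ker_t(Z,x)]$ with $\rho(x)$ that you correctly flag as the delicate point.
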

\begin{remark}
    By Devroye et al (\citep{devroye1996probtheory}, Thm.\, 2.3), the conclusion of Theorem~\ref{thm:consistency-dir-learning} implies that the Dirichlet Learning classifier \eqref{eq:dir-learning-classifier} satisfies universal consistency, in the sense that
    \[
        \mbb P \lp \mcl E(\hat{y}) - \mcl E^\ast \rp \leq 2 e^{-n\epsilon^2/\Xi(\theta)^2}
    \]
    for $\epsilon > 0$ where $\mcl E(f) = \mbb E_{\nu | \nu^n} [f(X) \neq Y | X_1,Y_1, \ldots, X_n, Y_n ]$ is the error probability of a classification rule, $f$ and $\mcl E^\ast$ is the Bayes optimal error achieved by $y^\ast(x) = \mathbbm{1}\{ \eta(x) \geq \frac{1}{2} \}$.
\end{remark}

\subsubsection{Proof of Theorem~\ref{thm:consistency-dir-learning}}
We now present the details of the proof of Theorem~\ref{thm:consistency-dir-learning}.
\begin{proof}
First, note that the expected value of the estimated trend function \eqref{eq:eta-hat} satisfies:
\begin{align*}
    \mbb E_{\nu^n}[\hat{\eta}(x)] &= \frac{1}{n \mbb E_\rho[\Ker_t(Z, x)]}\sum_{i=1}^n \mbb E_{\nu}[Y_i\Ker_t(X_i, x)] \\
    &= \frac{1}{n \mbb E_\rho[\Ker_t(Z, x)]} \sum_{i=1}^n\int_\mcl X \eta(x_i)\Ker_t(x_i, x)\rho(x_i) dx_i \\
    &= \frac{\mbb E_\rho[\eta(Z)\Ker_t(Z, x)]}{\mbb E_\rho[\Ker_t(Z, x)]}.
\end{align*} 
This explicitly highlights the ``averaging'' nature of this heat kernel classifier. Further, when $t \rightarrow 0$, we recover $\mbb E_{\nu^n} [ \hat{\eta}(x) ] \rightarrow \eta(x)$ due to the initial condition that $\Ker_0(z,x) = \delta_z(x)$. This also suggests then that as we observe more data ($n \rightarrow \infty$), it is natural to consider $t \rightarrow 0$. 

Consider the decomposition 
\[
    |\eta(x) - \hat{\eta}(x)| = \mbb E_{\nu^n} [ |\eta(x) - \hat{\eta}(x)|] + \lp |\eta(x) - \hat{\eta}(x)| - \mbb E_{\nu^n} [ |\eta(x) - \hat{\eta}(x)|] \rp,
\]
where the expectation is taken over the randomness in the labeled data $(X_i, Y_i) \sim^{iid} \nu$,
and we aim to bound the first term by bounding the terms
\begin{equation}  \label{eq:eta-decomp}
    \mbb E_{\nu^n}[ |\eta(x) - \hat{\eta}(x)|] \leq \underbrace{|\eta(x) - \mbb E_{\nu^n}[\hat{\eta}(x)]|}_{A} + \underbrace{\mbb E_{\nu^n}|\mbb E_{\nu^n}[\hat{\eta}(x)] - \hat{\eta}(x)|}_{B}.
\end{equation}

\paragraph{Part 1:}
Given $\tilde{\epsilon} > 0$, then by the uniform continuity of $\eta$ there exists $\xi > 0$ such that $|\eta(x) - \eta(z) | \leq \tilde{\epsilon}$ for all $z \in B(x,\xi)$. Defining $I(t; x, \xi) = \int_{\mcl B(x,\xi)^c}\Ker_t(x,z) \rho(z)dz < \mbb E_\rho[\Ker_t(Z,x)]$, then note that $I(t; x, \xi)$ is continuous with respect to $t \geq 0$ and 
\[
    \frac{I(t; x, \xi)}{\mbb E_\rho[\Ker_t(Z,x)]} \rightarrow 0, \text{ as } t \rightarrow 0^+.
\]
Thus, with $\xi > 0$, we can choose $t^\ast(\xi) > 0$ such that 
\begin{equation} \label{eq:Itdelta-small}
    \frac{I(t; x, \xi)}{\mbb E_\rho[\Ker_t(Z,x)]} \leq \tilde{\epsilon}.
\end{equation}
Using the fact that $\eta(x) \in [0, 1]$ for all $x$, then for $t \leq t^\ast(\xi)$, we can rewrite the integral of $\mbb E_\rho[ A]$ as
\begin{align*}
    \int_\mcl X |\eta(x) - \mbb E_{\nu^n}[\hat{\eta}(x)]| \rho(x) dx &\leq \int_\mcl X \int_\mcl X |\eta(x) - \eta(z)| \frac{k_t(z,x)}{\mbb E_\rho[\Ker_t(Z,x)]} \rho(z) dz \rho(x) dx \\
    &\leq  \tilde{\epsilon} \int_\mcl X \int_{ B(x, \xi)} \Ker_t(z,x) \rho(z) dz dx \\
    & \qquad + \int_\mcl X \overbrace{\int_{ B(x, \xi)^c}\Ker_t(z,x)  \rho(z) dz}^{I(t; x, \xi)} \frac{\rho(x)}{\mbb E_\rho[\Ker_t(Z,x)]} dx \\
    &\leq \tilde{\epsilon}\int_\mcl X \rho(x) dx + \int_\mcl X \frac{ I(t; x, \xi)}{\mbb E_\rho[\Ker_t(Z,x)]}\rho(x) dx \\
    &\leq 2 \tilde{\epsilon},
\end{align*}
where we have used \eqref{eq:Itdelta-small}.

\paragraph{Part 2:} For the integral of the term $B$ from \eqref{eq:eta-decomp}, we can bound 

\begin{align} \label{eq:last-line-B}
    \mbb E_{\nu^n}[ |\mbb E_{\nu^n}[\hat{\eta}(x)] - \hat{\eta}(x)|]&\leq \sqrt{\mbb E_{\nu^n}[|\mbb E_{\nu^n}[\hat{\eta}(x)] - \hat{\eta}(x)|^2]} \nonumber \\ 
    &= \sqrt{\frac{\mbb E_{\nu^n}\left\{ \lp \sum_{i=1}^n \mbb E_{\nu}[Y_i\Ker_t(X_i, x)] - \sum_{j=1}^n Y_j\Ker_t(X_j, x) \rp^2  \right\}}{n^2(\mbb E_\rho[\Ker_t(Z,x)])^2}} \nonumber \\ 
    &= \sqrt{\frac{Var_\nu\lp Y\Ker_t(X,x)\rp}{n(\mbb E_\rho[\Ker_t(Z,x)])^2}} \nonumber \\
    &= \frac{1}{\sqrt{n}} \sqrt{\frac{\mbb E_\nu \left[ Y^2 k^2_t(X,x) - 2Yk_t(X,x) \mbb E_\rho[\eta(Z)k_t(Z,x)] \right] + (\mbb E_\rho[\eta(Z)k_t(Z,x)])^2}{(\mbb E_\rho[\Ker_t(Z,x)])^2}} \nonumber   \\
    &= \frac{1}{\sqrt{n}} \sqrt{\frac{\mbb E_\rho[\eta(X) k^2_t(X,x)] - (\mbb E_\rho[\eta(Z)\Ker_t(Z,x)])^2}{(\mbb E_\rho[\Ker_t(Z,x)])^2}} \qquad \qquad (Y^2 = Y) \nonumber \\
    &\leq \frac{1}{\sqrt{n}} \sqrt{\frac{\mbb E_\rho[\eta(X) k^2_t(X,x)]}{(\mbb E_\rho[\Ker_t(Z,x)])^2}} \nonumber \\
    &\leq \frac{1}{\sqrt{n}} \sqrt{\frac{\mbb E_\rho[k^2_t(Z,x)]}{(\mbb E_\rho[\Ker_t(Z,x)])^2}}
\end{align}
where we have used the fact that the data was sampled i.i.d. according to $\nu$. Assumption \ref{assumption:bounded-p-moment} with $\|\Ker_t(z,\cdot)\|_{L^{p}(\rho)}$ for some $p \geq 2$, then by H\"{o}lder's inequality: 
\begin{align*}
    \int_{\mcl X}\Ker_t^2(z,x) \rho(z) dz &= \int_{\mcl X}\Ker_t^2(z,x) \rho^{\frac{2}{p}}(z) \rho^{\frac{p - 2}{p}}(z)dz \\
    &\leq \lp \int_{\mcl X}\Ker_t^p(z,x) \rho(z)dz \rp^{\frac{2}{p}} \lp \int_{\mcl X} \rho^{\frac{p - 2}{p} \frac{p}{p-2}}(z) \rp^{\frac{p-2}{p}} \\
    &= \|\Ker_t(\cdot, x)\|_{L^p(\rho)}^2 \lp \int_{\mcl X} \rho(z)dz \rp^{\frac{p-2}{p}} \\
    &= \|\Ker_t(\cdot, x)\|_{L^p(\rho)}^2.
\end{align*}
Thus, we can simplify \eqref{eq:last-line-B} to 
\[
    \mbb E_{\nu^n}[ |\mbb E_{\nu^n}[\hat{\eta}(x)] - \hat{\eta}(x)|] \leq \frac{1}{\sqrt{n}} \frac{\|\Ker_t(\cdot, x)\|_{L^p(\rho)}}{\mbb E_\rho[\Ker_t(Z,x)]}.
\]
Noting that as $t \rightarrow 0^+$ 
\[
    \mbb E_\rho[\Ker_t(Z,x)] \rightarrow \rho(x) \text{  as } t \rightarrow 0^+,
\]
then we have that there exists $\hat{t} > 0$ such that for all $t \leq \hat{t}$
\begin{equation} \label{eq:ratio-gamma}
    \frac{\mbb E_\rho[\Ker_t(Z,x)]}{\rho(x)} \geq \frac{1}{2}
\end{equation}
for a sufficiently small $\gamma > 0$.
The integral of $B$ then becomes
\begin{align} \label{eq:n-and-t-scaling}
    \int_{\mcl X} \mbb E_{\nu^n}[ |\mbb E_{\nu^n}[\hat{\eta}(x)] - \hat{\eta}(x)|] \rho(x) dx  &\leq \frac{1}{\sqrt{n}} \int_{\mcl X} \|\Ker_t(z,\cdot)\|_{L^p(\rho)} \frac{\rho(x)}{\mbb E_\rho[\Ker_t(Z,x)]} dx \nonumber\\
    &\leq \frac{2}{\sqrt{nt^d}} Vol(\mcl X) \nonumber \\
    &\rightarrow 0 \quad \text{as } n \rightarrow \infty.
\end{align}
Thus, as long as $nt^d \rightarrow \infty$ as $n \rightarrow \infty$ and $t \rightarrow 0$, we can take $n$ sufficiently large to bound \eqref{eq:eta-decomp} by combining the bounds on $\mbb E_\rho[A]$ and $\mbb E_\rho[B]$ with \eqref{eq:n-and-t-scaling} to get
\[
    \mbb E_{\nu^n}\left[ \int_\mcl X |\eta(x) - \hat{\eta}(x)| \rho(x) dx \right] \leq 3 \tilde{\epsilon} = \frac{\epsilon}{2},
\]
where we take $\tilde{\epsilon} = \frac{\epsilon}{6}.$

\paragraph{Part 3:} We can then use McDiarmid's inequality to give a high probability bound on the difference  
\[
\int_\mcl X \lp |\eta(x) - \hat{\eta}(x)| - \mbb E_{\nu^n}[ |\eta(x) - \hat{\eta}(x)|] \rp  dx.
\]
Let $\mcl D = \{ (x_i, y_i)\}_{i=1}^n$ be the fixed training (labeled) data, and let $\mcl D^{(i)}$ be a copy of $\mcl D$ with the $i^{th}$ pair replaced with $(\tilde{x}_i, \tilde{y}_i)$. Denote by $\hat{\eta}_i(x)$ the corresponding estimated trend function for $\mcl D^{(i)}$. Then, we can straightforwardly bound the following difference by using Assumption~\ref{assumption:bound-unweighted-int}
\begin{align*}
    \int_\mcl X |\eta(x) - \hat{\eta}(x)| \rho(x) dx &- \int_\mcl X |\eta(x) - \hat{\eta}_i(x)| \rho(x) dx \leq \int_\mcl X |\hat{\eta}(x) - \hat{\eta}_i(x)| \rho(x) dx\\
    &= \frac{1}{n} \int_\mcl X |Y_i\Ker_t(X_i, x) - \tilde{Y_i}\Ker_t(\tilde{X}_i,x)|\frac{\rho(x)}{\mbb E_\rho[\Ker_t(Z,x)]}dx \\
    &\leq  \frac{4}{n}  \sup_z \left\{ \int_\mcl X\Ker_t(z, x)dx\right\} \\
    &\leq \frac{4\theta}{n}
\end{align*}
where the constant $\theta$ was defined in Assumption~\ref{assumption:bound-unweighted-int} and have used \eqref{eq:ratio-gamma} with $n \geq 2$ trivially.
McDiarmid's inequality yields that 
\begin{align*}
    \mbb P &\lp \int_\mcl X |\eta(x) - \hat{\eta}(x) | \rho(x) dx > \epsilon \rp \\
    &\leq \mbb P \lp \int_\mcl X |\eta(x) - \hat{\eta}(x)| \rho(x) dx  -  \mbb E_{\nu^n}\left[ \int_\mcl X |\eta(x) - \hat{\eta}(x)| \rho(x) dx \right] > \frac{\epsilon}{2} \rp\\
    &\leq e^{-\frac{n\epsilon^2}{32\theta^2}},
\end{align*}
our desired result that, with high probability, the $\ell^1$ difference between $\eta(x)$ and $\hat{\eta}(x)$ is small as $n \rightarrow \infty$ and $t \rightarrow 0^+$ with $n t^d \rightarrow \infty$ with $t$ sufficiently small. 
\end{proof}

\acks{KM acknowledges support from the Peter J.\,O'Donnell Jr.\,Postdoctoral Fellowship and NSF
IFML grant 2019844. RM acknowledges partial support from NSF-DMS 2307971 and the Simons Foundation MP-TSM.}

\vskip 0.2in
\bibliography{references}

\end{document}